\documentclass[10pt]{article} % For LaTeX2e
\usepackage[preprint]{tmlr}

\usepackage{amsmath,amsfonts,bm}

\def\eqref#1{equation~\ref{#1}}
\def\1{\bm{1}}

\DeclareMathAlphabet{\mathsfit}{\encodingdefault}{\sfdefault}{m}{sl}
\SetMathAlphabet{\mathsfit}{bold}{\encodingdefault}{\sfdefault}{bx}{n}

\usepackage{hyperref}
\usepackage{cleveref}
\usepackage{url}
\usepackage{graphicx}
\usepackage{epstopdf}
\usepackage{float}
\usepackage{wrapfig}

\usepackage{booktabs}
\usepackage{multirow}
\usepackage{makecell}
\usepackage{comment}
\usepackage{glossaries}
\usepackage{microtype}

\usepackage{amsmath}
\usepackage{amssymb}
\usepackage{mathtools}
\usepackage{amsthm}
\usepackage{thmtools}
\usepackage{listings}

\usepackage{tikz}
\usetikzlibrary{positioning,shapes,arrows,fit,calc,shadows}

\usepackage[shell, subfolder]{gnuplottex}
\usepackage{gnuplot-lua-tikz}

\usepackage{caption}
\usepackage{subcaption}

\usepackage[mode=tex]{standalone}

\usepackage{fontawesome5}
\usepackage{custom, customboxes}

\DeclareMathOperator{\proba}{\Pi}
\DeclareMathOperator{\copula}{\mathrm{C}}
\newcommand{\jproba}[2]{\proba_{#1,#2}}
\newcommand{\mproba}[2]{\proba_{#1} \! \otimes \proba_{#2}}
\def\PDF{\pi}

\def\CDF{F}
\newcommand{\jcopula}[2]{\copula_{#1,#2}}

\DeclareMathOperator{\expect}{\mathbb{E}}     % Expectation.
\DeclareMathOperator{\variance}{\mathrm{Var}} % Variance.
\def\identity{\mathrm{I}}

\DeclareMathOperator{\MI}{\mathsf{I}}
\DeclareMathOperator{\dent}{\mathsf{h}}
\DeclareMathOperator{\sent}{\mathsf{H}}

\DeclareMathOperator{\PMI}{\mathrm{PMI}}

\DeclareMathOperator*{\DKLoperator}{\mathsf{KL}}%{D_{\mathrm{KL}}}
\newcommand{\KL}[2]{\DKLoperator \left[ #1 \, \Vert \, #2 \right]}

\def\naturals{\mathbb{N}}

\def\reals{\mathbb{R}}

\def\bernoulli{\mathrm{Be}}
\def\normal{\mathcal{N}}
\def\uniform{\mathrm{U}}
\def\loggamma{\mathrm{L}}

\def\digamma{\psi}
\def\const{\textnormal{const}}

\def\blankarg{\,\cdot\,}
\DeclareMathOperator{\dif}{\mathrm{d} \!} %% Our commands.
\newcommand{\eqdef}{\overset{\text{def}}{=}
}
\newcommand{\expected}{\mathbb{E}}

\newcommand{\flow}{\hat{f}}
\newcommand{\vfield}{v}
\newacronym{ODE}{\textnormal{\textsc{ode}}}{Ordinary Differential Equation}
\newacronym{CTMC}{\textnormal{\textsc{ctmc}}}{Continuous Time Markov Chain}
\newacronym{KL}{\textnormal{\textsc{kl}}}{Kullback-Leibler}
\newacronym{MI}{\textnormal{\textsc{mi}}}{Mutual Information}
\newacronym{PMI}{\textnormal{\textsc{pmi}}}{Pointwise Mutual Information}
\newacronym{MLP}{\textnormal{\textsc{MLP}}}{Multi-Layer-Perceptron}
\newacronym{SDE}{\textnormal{\textsc{sde}}}{Stochastic Differential Equation}
\newacronym{CDF}{\textnormal{\textsc{cdf}}}{Cumulative Distribution Function}
\newacronym{PDF}{\textnormal{\textsc{pdf}}}{Probability Density Function}
\newacronym{CNF}{\textnormal{\textsc{CNF}}}{Continuous Normalizing Flow}
\newacronym{DPI}{\textnormal{\textsc{DPI}}}{Data Processing Inequality}

\newacronym{KDE}{\textnormal{\textsc{kde}}}{Kernel Density Estimation}
\newacronym{kNN}{$k$-\textnormal{\textsc{nn}}}{$k$-Nearest Neighbours}
\newacronym{GEBM}{\textnormal{\textsc{gebm}}}{Generalized Energy-Based Models}

\newacronym{EMA}{\textnormal{\textsc{ema}}}{Exponential Moving Average}

\glsdisablehyper
\theoremstyle{plain}
\declaretheorem[name=Theorem,numberwithin=section]{theorem}
\declaretheorem[name=Lemma,numberwithin=section,sibling=theorem]{lemma}
\declaretheorem[name=Corollary,numberwithin=section,sibling=theorem]{corollary}

\declaretheorem[name=Example,numberwithin=section,sibling=theorem]{example}

\theoremstyle{definition}

\theoremstyle{remark}

\usepackage[commandRef=cref]{proof-at-the-end}
\newEndThm[end, restate, text link=]{theoremE}{theorem}
\newEndThm[end, restate, text link=]{corollaryE}{corollary}
\newEndThm[end, restate, text link=]{statementE}{statement}
\newEndThm[end, restate, text link=]{lemmaE}{lemma}
\newEndThm[end, restate, text link=]{remarkE}{remark}
\newEndProof{proofE}

\usepackage{lastpage}
\title{Towards Diverse and Comprehensive Benchmarks for \\ Mutual Information Estimation}

\author{\name Alberto Foresti\thanks{\quad Equal contribution} \email alberto.foresti@eurecom.fr \\ \name Giulio Franzese \email giulio.franzese@eurecom.fr \\ \name Pietro Michiardi \email pietro.michiardi@eurecom.fr \\
       \addr Department of Data Science\\
       EURECOM\\
       450 Route des Chappes, 06410 Biot, France
       \AND
       \name \name Ivan Butakov\footnotemark[1] \email ivan.butakov@applied-ai.ru \\
       \name Alexander Tolmachev \email a.tolmachev@applied-ai.ru \\ 
       \addr Applied AI Institute \\
       Presnenskaya naberezhnaya, 12, 121205 Moscow, Russia \\
       Moscow Independent Research Institute of Artificial Intelligence \\
       Bolshaya Cheremushkinskaya Street, 20, Building 4, Premises 3/1, 117218 Moscow, Russia
       \AND
       \name Alexey Frolov \email al.frolov@applied-ai.ru \\
       \addr Applied AI Institute \\
       Presnenskaya naberezhnaya, 12, 121205 Moscow, Russia
}

\def\month{MM}  % Insert correct month for camera-ready version

\def\year{YYYY} % Insert correct year for camera-ready version

\def\openreview{\url{https://openreview.net/forum?id=XXXX}} % Insert correct link to OpenReview for camera-ready version

\begin{document}

\maketitle

\begin{abstract}
    Mutual information (MI) estimation is a central problem in machine learning and statistics; however, existing benchmarks typically evaluate estimators on simplified, low-dimensional distributions, leaving their performance on complex, realistic data largely unexplored.
    We address this gap with a comprehensive benchmarking framework grounded in a unified copula-theoretic perspective that subsumes existing benchmarks as special cases.
    Within this framework, we propose two complementary families of tests:
    a \emph{copula-first} family that systematically varies ground-truth MI, dimensionality, and marginal complexity using synthetic and flow-based transformations;
    and a \emph{marginals-first} family that couples real-world image data with controlled dependency structures, extending the classic same-class-pairing paradigm.
    We use this suite to extensively evaluate three classes of estimators: non-parametric, discriminative, and generative.
    Contrary to prevailing assumptions, our results indicate that \emph{there is no universal winner}:
    each category can systematically outperform all other estimators under specific setups.
    By analyzing these cases, we identify fundamental estimation barriers and propose new tests that more effectively stress these specific limitations. We share the open source code at \url{https://github.com/VanessB/mutinfo}.
    %Our results show that all estimator classes degrade as MI grows,
    %that diffusion-based and flow-based generative methods are most robust at high MI,
    %and that discriminative estimators benefit greatly from feature extraction on raw image data while generative methods do not. % \textcolor{orange}{Besides, we observe that variational neural estimators struggle with high-dimensional datasets (e.g. CIFAR-10). It highlights the importance of the compression stage even for neural MI estimators despite its theoretical robustness.}
    %These findings reveal complementary strengths and failure modes that inform both practical estimator selection and future methodological research.
\end{abstract}

% \begin{keywords}
% Information theory, deep learning, statistics, diffusion models, mutual % information
% \end{keywords}

\section{Introduction}

\gls{MI} quantifies the statistical dependence between random variables and is fundamental to both theoretical frameworks and practical applications in machine learning, information theory, and statistical inference~\citep{cover2006information_theory,polyanskiy2024information_theory}.
Accurate estimation of mutual information is essential for a wide array of tasks, ranging from representation learning~\citep{hjelm2018deep_infomax,oord2019infoNCE,tschannen2020on_DIM} and generative modeling~\citep{chen2016InfoGAN,wang2025PMI_alignment,wang2025RFMI,franzese2025latent_abstractions} to feature selection~\citep{sulaiman2015MI_feature_selection,huang2024MI_feature_selection_TS}
and hypothesis testing~\citep{runge2018conditional_independence_testing_MI}.
Despite significant progress, existing \gls{MI} estimation methods exhibit substantial variance in performance, interpretability, and scalability, highlighting the need for comprehensive and systematic benchmarking frameworks~\citep{czyz2023beyond_normal}.

In this work, we review and extend previous efforts to establish such frameworks
by proposing an extensive benchmarking suite designed to rigorously evaluate classical,
neural-based, and advanced continuous-time \gls{MI} estimators across diverse scenarios.
Our study encompasses both classical statistical distributions and novel,
specifically engineered tasks intended to isolate the strengths and weaknesses of different estimator classes.
We also disentangle estimator-specific limitations from some inherent problem challenges
(e.g., numerical instability and intrinsic variance),
thereby providing new insights into the fundamental difficulties of \gls{MI} estimation.

\subsection{Background and Motivation}

Classical \gls{MI} estimators, particularly those based on $k$-nearest neighbor ($k$-NN) approaches~\citep{kozachenko1987entropy_of_random_vector,kraskov2004KSG,berrett2019efficient_knn_entropy_estimation},
have traditionally provided robust and theoretically grounded baselines.
However, these methods often struggle in high-dimensional or complex settings, necessitating the development of parametric neural approaches.
Methods such as Mutual Information Neural Estimation (MINE)~\citep{belghazi2018mine} and InfoNCE~\citep{oord2019infoNCE}
shifted the paradigm by leveraging neural networks to capture intricate, nonlinear dependencies,
significantly improving performance in higher-dimensional contexts.

While early neural \gls{MI} estimators can be viewed as energy-based models~\citep{arbel2021generalized_EBMs},
recent advancements leverage more sophisticated architectures.
These include normalizing flows~\citep{duong2023normflows_for_conditional_independence_testing,duong2023dine,butakov2024normflows,dahlke2025flowbased},
which utilize invertible transformations for density-based estimation,
and continuous-time estimators based on the Girsanov transform,
such as MINDE and its variants~\citep{franzese2024minde,kholkin2025InfoBridge}.
While more computationally expensive,
these modern approaches introduce greater flexibility and scalability.
%further expanding the landscape of \gls{MI} estimation.

Despite these methodological advances, existing evaluations are typically restricted to simplified or artificial scenarios,
leaving a comprehensive and realistic benchmarking landscape largely unexplored.
For instance, current tests often fail to assess robustness against increasingly complex distributions
or systematic variations in critical parameters such as dimensionality, sample size, and augmentation strategies.
Furthermore, the absence of a unified theoretical framework hinders the systematic organization of benchmarks
and the principled exploration of new methods.
Consequently, existing results often provide a superficial overview that fails to explain the underlying reasons for estimator failure.
This has led to the oversimplified and \emph{potentially misleading} conclusion that computationally expensive estimators are invariably superior.
We challenge this notion by identifying specific setups that are uniquely difficult for distinct classes of estimators.

%\subsection{Challenges and Limitations of Existing Benchmarks}

%Current MI benchmarks often fail to comprehensively assess estimator performance, particularly regarding robustness to increasingly complex distributions or systematic variations in critical parameters such as dimensionality, sample size, and augmentation strategies.
%Furthermore, the absence of a common theoretical framework makes these benchmarks difficult to organize and hinders the exploration of new methods.
%Consequently, evaluation results provide neither a comprehensive nor an explanatory overview of the strengths and weaknesses of different \gls{MI} estimators.

%Moreover, we observe that popular classifier-based estimators face notable challenges in complex scenarios, often rendering classification tasks comparably difficult to generative modeling \cite{}.

\subsection{Our Contributions}

\paragraph{Unified theoretical framework.}
We demonstrate that existing \gls{MI} benchmarks can be unified under a single copula-theoretic perspective centered on the data processing inequality.
This unification reveals the structural assumptions underlying various benchmark families and enables principled comparison and extension.
Moreover, it allows us to decompose the challenges of \gls{MI} estimation into two primary sources:
\emph{complex copulas} (dependency structures) and \emph{complex marginal distributions}.

\paragraph{Comprehensive benchmark suite.}
We introduce two complementary families of benchmarks covering the spectrum from purely synthetic to real-world data:
(i) a \emph{copula-first} family of synthetic and flow-based tests
that enables controlled variation of dependency structure, dimensionality, and marginal complexity; and
(ii) a \emph{marginals-first} family that extends same-class pairing to general data-coupling, enabling benchmarks on datasets such as MNIST and CIFAR-10 with known ground-truth \gls{MI}. The former facilitates the exploration of how estimators react to different dependency structures, while the latter provides setups with realistic and highly complex marginal distributions.

\paragraph{Analysis of \gls{MI} estimation barriers.}
We revisit known difficulties in \gls{MI} estimation and argue that they are insufficient to fully explain observed performance patterns.
We therefore identify and analyze an additional set of fundamental barriers,
%which we use to design stress-test benchmarks:
which can be used to further stress-test \gls{MI} estimators:
\begin{itemize}
    \item \textbf{Exponential sample complexity in terms of \gls{MI}};
    \item \textbf{Numerical instability};
    \item \textbf{High intrinsic variance of the problem};
    \item \textbf{Sensitivity to small noise injections}.
\end{itemize}

\paragraph{Systematic evaluation of four estimator classes.}
We conduct an extensive evaluation of selected non-parametric (KSG, WKL), discriminative (MINE, NWJ, InfoNCE), generative (MINDE) estimators across all benchmark families.
%Results show that all methods degrade at high MI, that diffusion-based and flow-based generative estimators are most robust in these regimes, and that discriminative estimators benefit substantially from feature extraction on raw image data while generative methods do not.
We document estimator sensitivity to dimensionality, sample size, and model-specific parameters,
providing actionable guidance for practical estimator selection under various settings.
\begin{simplebox}
\textbf{Main Takeaway.} Contrary to prevailing assumptions, our results indicate that there is \emph{no universal winner}:
for each broad class of methods, we identify specific scenarios in which they systematically outperform all other approaches.
\end{simplebox}

%\noindent \textbf{Analysis of hyperparameter sensitivity.}
%We document estimator sensitivity to dimensionality, sample size, and model-specific parameters,
%providing actionable guidance for practical estimator selection.

The remainder of the paper is structured as follows: \Cref{section:background} provides the necessary theoretical background; \Cref{section:benchmarks} details the proposed benchmarking tasks; Section~\ref{section:experiments} presents the experimental results and comparative analyses; and Section~\ref{section:discussion} summarizes our contributions and outlines future research directions.
\section{Background}
\label{section:background}

\subsection{Theory}
\label{subsection:theory}

\paragraph{Vector copulas.}
For any pair of random vectors $X$ and $Y$, the joint distribution $\jproba{X}{Y}$ fully captures their probabilistic dependencies.
However, $\jproba{X}{Y}$ also includes information about marginal distributions $\mproba{X}{Y}$,
which are generally not relevant for dependency analysis.
In many cases, this marginal information can be discarded by ``factoring'' $\jproba{X}{Y}$ w.r.t. $\mproba{X}{Y}$:
two joint distributions are considered equivalent if one can be obtained from the other
via a push-forward using a pair of bijective measurable maps
(or, more generally, injective measurable stochastic maps) applied to the marginals individually:
$\jproba{X}{Y}^2 = \jproba{X}{Y}^1 \circ (f_X^{1 \to 2} \times f_Y^{1 \to 2})$.
Fixing the marginal distributions then selects at most one representative from each equivalence class.

By pinning the marginals to be uniform on a fixed support
(typically the unit hypercube in $\reals^d$ for the continuous case),
one obtains a \emph{copula distribution} or \emph{copula}~\citep{fan2021vector_copulas}; hereinafter, we denote this by $\jcopula{X}{Y} = \copula(\jproba{X}{Y})$.
Previous works have established sufficient conditions for copulas to exist, which happen to be very mild for absolutely continuous distributions~\citep{mccann1995measure_preserving_maps,fan2021vector_copulas}.
Furthermore, in this case $\jcopula{X}{Y}$ admits a density w.r.t. the corresponding uniform distribution if and only if the Radon-Nikodym derivative $\partial \jproba{X}{Y} / \partial \mproba{X}{Y}$ exists.
Finally, by construction, $\jcopula{X}{Y}$ is invariant under bijective measurable transformations of the individual marginals:
$\jcopula{X}{Y} = \jcopula{X}{g(Y)}$.
For a \emph{discrete} generalization of this concept, we refer the reader to~\citet{geenens2020discrete_copula}.

Although copulas provide a complete description of dependencies,
their complexity often makes them impractical to use directly.
Therefore, $\jcopula{X}{Y}$ is often projected onto a single scalar that summarizes the overall strength
of the dependency between $X$ and $Y$~--- \emph{mutual information} (\gls{MI}).

\paragraph{Mutual Information.}
\gls{MI} is a well-established metric for quantifying non-linear dependencies between two random vectors $X,Y$
of any nature (continuous, discrete or mixed),
defined as KL-divergence between the joint $\jproba{X}{Y}$ and marginal $\mproba{X}{Y}$ distributions:
\begin{equation}
    \label{eq:mutual_information}
    \MI(X;Y)=\KL{\jproba{X}{Y}}{\mproba{X}{Y}} = \expect \log \frac{\partial \jproba{X}{Y}}{\partial \mproba{X}{Y}} (X,Y),
\end{equation}
where $\log (\partial \jproba{X}{Y} / \partial \mproba{X}{Y})$ is \gls{PMI}.
Note that \gls{PMI} is essentially the log-density of the $(X,Y)$-copula $\jcopula{X}{Y}$,
and \gls{MI} is $\KL{\jcopula{X}{Y}}{\uniform_{\mathcal{X} \times \mathcal{Y}}}$.
Provided that the corresponding \gls{PDF} $\PDF(x,y)$ exists,
\begin{equation}
    \label{eq:mutual_information_PDF}
    \MI(X;Y)=\expect\left[\log\frac{\PDF(X,Y)}{\PDF(X)\PDF(Y)}\right] = \dent(X) + \dent(Y) - \dent(X,Y),
\end{equation}
where $\dent(X) = -\expect_X \log \PDF(X)$ is \emph{differential entropy}. Moreover, in this case $\MI(X;Y) = -\dent(\jcopula{X}{Y})$~\citep{ma2011MI_is_copula_entropy,chen2025vector_copula}. Note that \gls{MI} equals zero if and only if the random vectors are independent. It also does not increase under data processing and is \emph{invariant} to bijective mappings:% if $g$ is one-to-one and measurable, then $\MI(X;Y) = \MI(X;g(Y))$~\citep[Theorem 3.2(d)]{polyanskiy2024information_theory}.
\begin{theorem}[Theorem 3.2(d) in~\citet{polyanskiy2024information_theory}]
    \label{theorem:MI_invariance}
    If $g$ is measurable, $\MI(X;g(Y)) \leq \MI(X;Y)$.
    If it is also one-to-one with measurable inverse, $\MI(X;g(Y)) = \MI(X;Y)$.
\end{theorem}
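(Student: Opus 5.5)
The plan is to derive both parts from the data processing inequality for KL divergence, applied to the definition $\MI(X;Y)=\KL{\jproba{X}{Y}}{\mproba{X}{Y}}$ in \eqref{eq:mutual_information}.

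\textbf{Inequality.} Consider the measurable map $T = \mathrm{id}\times g : (x,y)\mapsto (x,g(y))$. The pushforward of $\jproba{X}{Y}$ under $T$ is $\jproba{X}{g(Y)}$. The pushforward of the product $\mproba{X}{Y}$ is $\proba_X\otimes\proba_{g(Y)}$, because $T$ acts coordinatewise and so preserves product structure: on rectangles, $(\proba_X\otimes\proba_Y)(A\times g^{-1}(B))=\proba_X(A)\proba_{g(Y)}(B)$, and the $\pi$--$\lambda$ theorem extends this to all measurable sets. Hence
\[
\MI(X;g(Y))=\KL{T_\#\jproba{X}{Y}}{T_\#(\mproba{X}{Y})}\le \KL{\jproba{X}{Y}}{\mproba{X}{Y}}=\MI(X;Y).
\]
The inequality is the data processing inequality for KL divergence under a deterministic measurable channel.

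That inequality is the main ingredient and the step I expect to need the most care. I would prove it via the Donsker--Varadhan variational formula, $\KL{P}{Q}=\sup_{h}\{\expect_P h-\log\expect_Q e^{h}\}$, with the supremum over bounded measurable $h$. For any bounded measurable $h$ on the target space, $h\circ T$ is bounded measurable on the source space, and
\[
\expect_{T_\# P}h-\log\expect_{T_\# Q}e^{h}=\expect_P (h\circ T)-\log \expect_Q e^{h\circ T}.
\]
So the supremum over the target space is taken over a subfamily of the test functions on the source space, which gives the inequality. This route handles the case $\MI=\infty$ and the case of non-absolutely-continuous distributions without needing densities.

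\textbf{Equality.} If $g$ is one-to-one with measurable inverse $g^{-1}$, apply the inequality to the pair $(X,g(Y))$ with the map $g^{-1}$. Since $g^{-1}(g(Y))=Y$, this gives $\MI(X;Y)\le\MI(X;g(Y))$. Combined with the first part, the two quantities are equal. Measurability of $g^{-1}$ is exactly what is needed to make $g^{-1}$ an admissible processing map here.
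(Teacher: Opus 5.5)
Your proposal is correct and takes the same approach the paper indicates: the inequality is a special case of the data processing inequality, and equality follows by applying it in both directions with the measurable inverse $g^{-1}$. The only difference is that you prove the needed instance of the data processing inequality yourself, via the pushforward of the product measure under $\mathrm{id}\times g$ and the Donsker--Varadhan formula, whereas the paper simply cites the Markov-chain form (\Cref{theorem:MI_DPI}) from \citet{polyanskiy2024information_theory}.
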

The first property is a special case of the \gls{DPI}, %~\citep[Theorem 3.7(c)]{polyanskiy2024information_theory}, % 
while the second can be derived either from applying \gls{DPI} both ways,
or from similar invariance properties of copulas and \gls{PMI}~\citep{butakov2024normflows,czyz2025PMI}.
The \gls{DPI} itself is formulated as follows:
    \begin{theorem}[Theorem 3.7(c) in~\citet{polyanskiy2024information_theory}]
    \label{theorem:MI_DPI}
    For a Markov chain $X \to W \to V \to Y$ the following holds if corresponding \gls{MI}s exist: $\MI(X;Y) \leq \MI(V;W)$.
\end{theorem}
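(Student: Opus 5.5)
The plan is to obtain the Markov-chain version of the \gls{DPI} from two basic facts about \gls{MI}: the chain rule for conditional \gls{MI}, and the fact that conditional \gls{MI} vanishes exactly under conditional independence. The argument proceeds in two steps. First, for any Markov chain $A \to B \to C$ we show $\MI(A;C) \leq \MI(A;B)$. Second, we apply this twice to the four-term chain $X \to W \to V \to Y$.

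\textbf{Step 1 (three-variable \gls{DPI}).} Let $A \to B \to C$ be a Markov chain. Expand $\MI(A; B, C)$ by the chain rule in two ways:
\begin{equation*}
    \MI(A;B) + \MI(A;C \mid B) = \MI(A;B,C) = \MI(A;C) + \MI(A;B \mid C).
\end{equation*}
The Markov property gives $\MI(A;C\mid B)=0$. Nonnegativity gives $\MI(A;B\mid C)\geq 0$, being an averaged \gls{KL} divergence. Together these yield $\MI(A;C)\leq\MI(A;B)$.

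\textbf{Step 2 (chaining).} We need two three-term chains. Since $X \to W \to V \to Y$ is Markov, the subchain $X \to V \to Y$ is Markov, so Step 1 gives $\MI(X;Y)\leq \MI(X;V)$. Likewise $X \to W \to V$ is Markov. Reversing it, $V \to W \to X$ is also Markov, because conditional independence of the endpoints given the middle term is symmetric. Step 1 and the symmetry of \gls{MI} then give $\MI(X;V)\leq\MI(W;V)=\MI(V;W)$. Combining the two inequalities proves the claim.

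\textbf{Main obstacle.} The delicate point is rigour for general (non-density) random vectors. There the chain rule must be justified via regular conditional distributions, or via the Donsker--Varadhan/Gelfand--Yaglom--Perez characterization of \gls{MI} as a supremum over finite quantizations. The hypothesis ``if the corresponding \gls{MI}s exist'' is needed to avoid $\infty-\infty$ when cancelling terms. If the chain-rule route becomes cumbersome, I would first prove the inequality for finite partitions of the alphabets, where it is elementary. I would then pass to the supremum over partitions, since \gls{MI} equals the supremum of the quantized \gls{MI}s.
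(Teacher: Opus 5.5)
Your proof is correct. The paper gives no proof of its own; it simply cites Polyanskiy--Wu, and your argument is essentially the standard textbook one: the Kolmogorov identity with $\MI(A;C\mid B)=0$ and $\MI(A;B\mid C)\geq 0$, applied once to $X\to V\to Y$ and once to the reversed chain $V\to W\to X$. One small correction to your closing remark: no finiteness hypothesis is needed, because $\MI(A;C)\leq \MI(A;C)+\MI(A;B\mid C)=\MI(A;B)+\MI(A;C\mid B)=\MI(A;B)$ holds in $[0,\infty]$ and involves no cancellation.
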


\subsection{Mutual Information Estimators}
\label{subsection:estimators}

Understanding contemporary MI estimation techniques and their limitations is essential for establishing a comprehensive set of benchmarks. Mutual information estimators fall into two broad categories: \emph{parametric} and \emph{non-parametric}. Within parametric estimators, we further distinguish between \emph{discriminative} and \emph{generative} approaches. Among the latter, \emph{diffusion-based} methods are particularly noteworthy.

\paragraph{Non-parametric estimators.}
This class of estimators relies on non-parametric methods for density estimation, such as histogram-based~\citep{moddemeijer1989binning_MI}, \gls{KDE}~\citep{moon1995KDE_MI, %wkeglarczyk2018KDE,
goldfeld2018estimating} or \gls{kNN}~\citep{kozachenko1987entropy_of_random_vector,berrett2019efficient_knn_entropy_estimation,kraskov2004KSG}, for approximating the quantities in \eqref{eq:mutual_information_PDF} and computing \gls{MI} with Monte Carlo estimation.
While these methods typically provide a very cheap estimate, previous works~\citep{goldfeld2018estimating,czyz2023beyond_normal,butakov2024normflows} have shown that they easily break in high-dimensional settings.

\paragraph{Discriminative estimators.}
This category measures \gls{MI} by estimating $\KL{\jproba{X}{Y}}{\mproba{X}{Y}}$ directly,
e.g., by learning the density ratio $\PDF(x,y)/\PDF(x)\PDF(y)$, without learning the densities themselves.

Currently, the discriminative family is represented by classifier-based estimators, which rely on training a critic network $T$ to distinguish
whether a pair $(x, y)$ was sampled from the joint distribution $\jproba{X}{Y}$ or from the independent marginals $\mproba{X}{Y}$,
hence the name \textit{discriminative}.
As shown in \Cref{tab:mi_estimators}, typical classification objectives provide a lower bound on \gls{MI};
moreover, corresponding optimal classifiers can be expressed through \gls{PMI}~\citep{poole2019variational_bounds}.
Other losses can also be used, given they also result in estimating \gls{PMI}~\citep{letizia2024fDIME}.
%Althogh
These objectives are similar to those of \gls{GEBM}~\citep{arbel2021generalized_EBMs}:
\gls{PMI} is the log-Radon-Nikodym derivative of $\jproba{X}{Y}$ w.r.t. $\mproba{X}{Y}$ and thus acts as the negative energy function.%,
%they are generally not considered to be generative, as $\proba_X$, $\proba_Y$ and $\jproba{X}{Y}$ are not estimated.

\begin{table}[h]
\centering
\scriptsize
\begin{tabular}{ll}
\toprule
\textbf{Estimator} & \textbf{Mutual Information Estimation Formula} \\
\midrule
MINE \citep{belghazi2018mine}& 
$\displaystyle \MI(X;Y) = \sup_T \left[ \expected_{\jproba{X}{Y}}[T(X,Y)] - \log \expected_{\mproba{X}{Y}}[e^{T(X,Y)}] \right]$ \\[10pt]

NWJ \citep{nguyen2010NWJ}& 
$\displaystyle \MI(X;Y) = \sup_T \left[ \expected_{\jproba{X}{Y}}[T(X,Y)] - \expected_{\mproba{X}{Y}}[e^{T(X,Y) - 1}] \right]$ \\[10pt]

InfoNCE \citep{oord2019infoNCE}& 
$\displaystyle \MI(X;Y) \geq \sup_T \left[ \expected_{\jproba{X}{Y}^k}\left[\frac{1}{K}\sum_{i=1}^K\log \frac{e^{T(X_i, Y_i)}}{\sum_{j=1}^K e^{T(X_i, Y_j)}}\right] \right]$ 
\\
\bottomrule
\end{tabular}
\caption{Popular discriminative mutual information estimators and their lower bound formulations}
\label{tab:mi_estimators}
\end{table}

While classifier-based estimators are universal ($\jproba{X}{Y}$ can be of any kind) and well-suited for high-dimensional and complex distributions,
they exhibit significant theoretical limitations \citep{mcallester2020limitations_MI, song2020understanding_limitations},
particularly when actual \gls{MI} is large, which can hinder their practical applicability.
This mirrors the failure of \gls{GEBM}s when the modeled distribution diverges significantly from a reference.

Recent works suggest that these limitations can only be overcome using a telescoping chain of classifiers
(either discrete~\citep{rhodes2020TRE} or continuous~\citep{choi2022DRE-infinity}).
This approach, however, requires constructing a non-trivial probability path between $\jproba{X}{Y}$ and $\mproba{X}{Y}$
and is more computationally expensive.

%optimizing a lower bound on the \gls{KL} divergence by
%estimate a lower bound on \gls{MI}.
%As shown in \Cref{tab:mi_estimators}, these bounds typically require $T$ to simultaneously maximize an expectation over the joint \gls{PDF} $\pi(z_1,z_2)$ and minimize an expectation over the product of the marginals $\pi(z_1)\pi(z_2)$.
%As shown in \Cref{tab:mi_estimators}, these estimators typically employ lower bounds on \gls{KL} divergence.
%In practice, the critic $T$ is trained to distinguish whether a pair $(z_1, z_2)$ was sampled from the joint distribution $\pi(z_1, z_2)$ or from the independent marginals $\pi(z_1)\pi(z_2)$, hence the name \textit{discriminative}.

\paragraph{Non-diffusion-based generative estimators.}
%Akin to non-parametric methods, generative models can be used to learn the \gls{PDF} of the data distribution.
%Plugging $\hat{\PDF}(x)$, $\hat{\PDF}(y)$ and $\hat{\PDF}(x,y)$ into~\eqref{eq:mutual_information_PDF} yields an \gls{MI} estimate.
Neural generative models, like their non-parametric counterparts, can learn the data distribution's \gls{PDF}
and provide a plug-in \gls{MI} estimate:
$\hat{\PDF}(x)$, $\hat{\PDF}(y)$, and $\hat{\PDF}(x, y)$ are substituted into~\eqref{eq:mutual_information_PDF} to yield $\hat{\MI}(X;Y)$.

This approach was adopted in~\citep{mcallester2020limitations_MI,duong2023dine},
where variational autoencoders~\citep{kingma2013VAE} and normalizing flows~\citep{tabak2013nonparametric_PDF_estimators} were employed to estimate the densities.
However, na\"{i}ve \gls{PDF}-based \gls{MI} estimation has been shown to introduce significant bias~\citep{mcallester2020limitations_MI},
as true marginals of $\hat{\PDF}(x,y)$ typically do not align with $\hat{\PDF}(x)$ and $\hat{\PDF}(y)$
(e.g., $\hat{\PDF}(x) \neq \int \hat{\PDF}(x,y) \dif y$).
While this issue can be partially mitigated by using a single network to model all densities at once~\citep{ni2025neural_DoE},
such methods lack formal theoretical guarantees.

To provide a theoretical justification for this bias-reducing strategy,
one can consider generative models that inherently yield correct \gls{PMI}.
Gaussian mixtures are one such example~\citep{czyz2025PMI}.
However, vanilla Gaussian \gls{KDE} is not suitable for complex, high-dimensional data.
Addressing this, \citet{butakov2024normflows} employ normalizing flows $\flow_1,\flow_2$ to separate copula and marginals estimation:
$X,Y$ are transformed to a new pair of random vectors $\flow_1(X),\flow_2(Y) \eqdef \xi,\eta$
%as a way to map $X,Y$ to a new pair of random vectors $\flow_1(X),\flow_2(Y) \eqdef \xi,\eta$
whose \gls{MI} estimation should be easier due to simple marginals (note that \gls{MI} is preserved since flows satisfy~\Cref{theorem:MI_invariance}).
%Essentially, this approach separates marginals estimation from copula estimation.

Using Gaussian mixtures or even just a single correlated Gaussian mode to estimate $\PDF(\flow_1(x),\flow_2(y))$ enables tractable \gls{PMI} or even \gls{MI}.
Other choices of coupling distributions are also possible;
however, they always follow a trade-off between the expressivity and the \gls{MI} estimation complexity.
This trade-off is formalized by the following result~\cite[Corollary 3.2]{butakov2024normflows},
which holds for estimates $\hat{\PDF}$ with aligned joint and marginal densities
(i.e., $\hat{\PDF}(x) = \int \hat{\PDF}(x,y) \dif y$ and vice versa):
\begin{equation}
    \label{eq:MI_estimation_error}
    |\MI(\xi;\eta) - \MI_{\hat{\PDF}}(\xi;\eta)| \leq \KL{\PDF}{\hat{\PDF}},
    \qquad \text{where }
    \MI_{\hat{\PDF}}(\xi;\eta) \eqdef \expect_{\xi,\eta \sim \PDF} \log \frac{\hat{\PDF}(\xi,\eta)}{\hat{\PDF}(\xi)\hat{\PDF}(\eta)}
\end{equation}
Thus, simple $\hat{\PDF}$ facilitate MI estimation, but also generally increase $\KL{\PDF}{\hat{\PDF}}$.
Normalizing flows $\flow_1$,$\flow_2$ tighten this trade-off
by allowing for complex marginals while maintaining tractable density ratios.
Still, they cannot eliminate it completely, since a true copula can be much more intricate than a corresponding estimate.% than $\hat{\PDF}$.

This \gls{MI}-preserving approach is also featured in~\citep{duong2023dine} (Gaussian case only) and~\citep{dahlke2025flowbased,chen2025vector_copula}.
While this class of estimators performs well even on high-dimensional, high-\gls{MI} tests, it introduces a strong inductive bias
(as one has to choose a narrow family of prior/latent distributions with tractable \gls{PMI}),
which hinders applications to real-world data. We summarize this section in~\Cref{tab:mi_non_diffusion_estimators}.

%However, since the family of the distribution of $\flow_1(Z_1),\flow_2(Z_1)$ is unknown, it is approximated by a distribution $\hat{\Pi}$ constrained to be part of a known family of distributions $\family$ decided by the user.
%This introduces a trade-off in the choice of $\family$: if it is too simple the \gls{MI} estimate might be too biased, while if $\family$ is a universal \gls{PDF} approximator family, one acquires a consistent and expensive \gls{MI} estimate.
%Then, the problem reduces to learning appropriate normalizing flows $\flow_1,\flow_2$ and joint \gls{PDF} $\hat{\pi}$ through maximum likelihood.
%After training, if $\family$ allows it, one can analytically compute \gls{MI}, otherwise one resorts to a Monte Carlo estimate of
%\begin{equation}
%    \expected_{z_1,z_2\sim\pi}\left[\log\frac{\hat{\pi}(\flow_1(z_1),\flow_2(z_2))}{\hat{\pi}(\flow_1(z_1))\hat{\pi}(\flow_2(z_2))}\right].
%\end{equation}
%While this approach and other similar works \citep{dahlke2025flowbased,duong2023diffeomorphic} excel in synthetic benchmarks, they introduce an inductive bias with $\family$ which can lead to poor performances in real world scenarios.

\begin{table}[h]
\centering
\scriptsize
\begin{tabular}{lll}
\toprule
\textbf{Estimator} & \textbf{Training Objective} & \textbf{MI Estimation Formula} \\
\midrule
\makecell[l]{DoE, DINE \\ (\citealt{mcallester2020limitations_MI} \\ \citealt{duong2023dine} \\ \citealt{ni2025neural_DoE})} & Log-likelihood &
$\displaystyle \MI(Z_1;Z_2) = \expect \log \PDF(Z_1 \mid Z_2) - \expect \PDF(Z_1) $ \\[2.5em]

\makecell[l]{DINE-Gaussian, $\normal$-MIENF \\ (\citealt{duong2023dine} \\ \citealt{butakov2024normflows})} & \makecell[l]{Log-likelihood \\ (marginal for DINE, \\ joint for MIENF)} &
$\displaystyle \MI(Z_1;Z_2) \approx \frac{1}{2} \expect \log \frac{\det{\Sigma_{f_1(Z_1)} \det \Sigma_{f_2(Z_2)}}}{\det \Sigma_{f_1(Z_1), f_2(Z_2)}} $ \\[2em]

\makecell[l]{MIENF, VCE \\ (\citealt{butakov2024normflows} \\ \citealt{chen2025vector_copula})} & \makecell[l]{Log-likelihood (MIENF), \\ flow matching (VCE)} &
$\displaystyle \MI(Z_1;Z_2) = - \expect \log C(f_1(Z_1),f_2(Z_2)) $ \\
\bottomrule
\end{tabular}
\caption{
    Non-diffusion-based Generative Mutual Information Estimators and Their Estimation Formulas.
    Here $f_1,f_2$ are diffeomorphisms, $\Sigma$ denote covariance matrices and $C$ is a copula density.
}
\label{tab:mi_non_diffusion_estimators}
\end{table}

\paragraph{Diffusion-based generative estimators.}
A third family of estimators casts \gls{MI} estimation as a comparison between two stochastic processes:
one whose initial distribution is the joint $\pi(z_1,z_2)$ and another whose initial distribution is the product of marginals $\pi(z_1)\pi(z_2)$.
Both processes share the same diffusion coefficient and reference drift, so their \gls{KL} divergence reduces~--- via Girsanov's theorem~--- to a time-integrated squared difference between their score functions (or drift functions), which can be learned efficiently by neural networks from samples alone.
Setting the initial distributions to the joint and factored marginals then yields a tractable estimator of $\MI(Z_1;Z_2) = \KL{\pi(z_1,z_2)}{\pi(z_1)\pi(z_2)}$, without requiring explicit density evaluation.
As shown in \Cref{tab:mi_diffusion_estimators}, \citet{franzese2024minde} (MINDE) parametrize the processes using score functions
learned via score matching, yielding an approximate estimator;
\citet{kholkin2025InfoBridge} (InfoBridge) instead use bridges with drift functions
learned via conditional bridge matching, yielding an exact expression
whose error depends solely on the accuracy of the learned drifts.

\begin{table}[h]
    \centering
    {
        \scriptsize
        \begin{tabular}{lll}
        \toprule
        \textbf{Estimator} & \textbf{Training Objective} & \textbf{MI Estimation Formula} \\
        \midrule
        \makecell[l]{MINDE \\ \citep{franzese2024minde}} & Score matching &
        $\displaystyle \MI(Z_1;Z_2) \approx \int_0^T\expected_{x_t\sim q_t^A}\!\left[\frac{g(t)^2}{2}\|s^A(x_t,t)-s^B(x_t,t)\|^2\right] \dif t$ \\[10pt]
        
        \makecell[l]{InfoBridge \\ \citep{kholkin2025InfoBridge}} & Bridge matching &
        $\displaystyle \MI(Z_1;Z_2) = \frac{1}{2\epsilon}\int_0^T\expected_{q^A_{0,t}}\!\left[\|v_A(x_t,t,x_0)-v_B(x_t,t,x_0)\|^2\right] \dif t$ \\
        \bottomrule
        \end{tabular}
    }
    \caption{Diffusion-based Mutual Information Estimators and Their Estimation Formulas.
      In both cases, $Q^A$ and $Q^B$ are stochastic processes whose initial distributions are the joint $\pi(z_1,z_2)$ and the product of marginals $\pi(z_1)\pi(z_2)$, respectively.
      MINDE learns score functions $s^A, s^B$ via score matching;
      InfoBridge learns drift functions $v_A, v_B$ via conditional bridge matching.}
    \label{tab:mi_diffusion_estimators}
\end{table}

Unlike discriminative estimators, this family does not suffer from theoretical saturation barriers at high \gls{MI}~\citep{mcallester2020limitations_MI,song2020understanding_limitations}.
Unlike non-diffusion generative estimators, it requires no choice of latent family and therefore introduces no inductive bias that would hinder applications to complex, real-world distributions.
In practical terms, this unlocks applications in tasks typically harder than classification,
such as text-to-image alignment, self-supervised learning, and deep neural network analysis.

\subsection{Existing Benchmarks}

\paragraph{Low-dimensional synthetic benchmarks.}
Mutual information is notoriously hard to compute analytically, even for simple distributions. For this reason, the most basic and popular approach to benchmarking \gls{MI} estimators relies on multivariate Gaussian distributions, which admit a closed-form expression for \gls{MI}~\citep[Example 3.2]{polyanskiy2024information_theory}. However, in this particular case, \gls{MI} depends solely on the covariance matrix, making its estimation equivalent to correlation analysis, which is much easier in general.

\citet{czyz2023beyond_normal} addressed this limitation by proposing an extensive benchmark of different distributions with known \gls{MI} and varying complexity. This set can be further extended with two similar in spirit continuous examples from~\citep{kraskov2004KSG}. To cover discrete-continuous cases, \cite{lee2024benchmark} provide four additional distributions with non-continuous and non-discrete marginals. Finally, \gls{MI}'s invariance to invertible mappings (see~\Cref{theorem:MI_invariance}) can also be leveraged to further complicate marginal distributions of these synthetic tests.
% \begin{equation}
%     \label{eq:MI_invariance}
%     \underbrace{\MI(X;Y)}_{\text{tractable}} = \MI(f(X);g(Y)), \qquad \text{$f,g$ satisfy~\citep[Theorem 3.2(d)]{polyanskiy2024information_theory}}%~\citep[Statement 1]{butakov2024lossy_compression}.}
%     \vspace{-0.5em}
% \end{equation}
This approach, however, is of limited expressivity, as it also preserves copula and \gls{PMI}~\citep{czyz2025PMI}. In other words, not all joint distributions can be expressed as $\jproba{X}{Y} \circ (g_X^{-1} \times g_Y^{-1})$ for fixed $\jproba{X}{Y}$.

\paragraph{High-dimensional synthetic benchmarks.}
Any low-dimensional distribution can be extended to a high-dimensional counterpart by stacking independent random vectors. However, this approach is trivial and can not model intricate cases, such as data distributed on a manifold. Therefore, \citet{butakov2024lossy_compression} proposed mapping low-dimensional data to high-dimensional ambient space via handcrafted invertible, \gls{MI}-preserving transforms (see~\Cref{figure:synthetic_images_generation}).
To date, this approach was limited to the mappings from the original article, which transform Gaussian latents to images of (a) rectangles and (b) Gaussian blobs (see~\Cref{figure:synthetic_images_examples}).
%However, this methodology can be extended to any other data-generation function, provided it satisfies~\Cref{theorem:MI_invariance}.

\begin{figure}[ht!]
    \centering
    \begin{tikzpicture}[
        dcs/.style = {double copy shadow, shadow xshift=-2pt, shadow yshift=-2pt},
        main_node/.style = {circle, minimum size=1.5cm},
        density_plot/.style = {scale=1, domain=-0.5:0.5, smooth, variable=\x, thick}
    ]
    \tikzstyle{every node}=[font=\Large]
    
    % Ellipse.
    \node[circle, minimum size=2.0cm] (bivariate) at (0,0) {};
    \node[circle, minimum size=2.0cm] (bivariate_1) at (-0.7,0.5) {};
    \node[circle, minimum size=2.0cm] (bivariate_2) at (0.4,-0.8) {};
    
    \node[draw, ellipse, minimum width=0.7cm, minimum height=0.3cm, rotate=160, thick] (bivariate_ellipse) at (bivariate_1.center) {};
    \node[draw, ellipse, minimum width=1.0cm, minimum height=0.5cm, rotate=160] (bivariate_ellipse) at (bivariate_1.center) {};
    \node[draw, ellipse, minimum width=1.5cm, minimum height=0.8cm, rotate=160] (bivariate_ellipse) at (bivariate_1.center) {};
    \node[below left=0.4cm and 0.0cm of bivariate.center, align=center] (bivariate_text) {$ \scriptstyle (\xi, \eta) \sim q_{\xi,\eta} $};

    \node[draw, ellipse, minimum width=0.6cm, minimum height=0.1cm, rotate=120, thick] (bivariate_ellipse) at (bivariate_2.center) {};
    \node[draw, ellipse, minimum width=0.8cm, minimum height=0.5cm, rotate=120] (bivariate_ellipse) at (bivariate_2.center) {};
    \node[draw, ellipse, minimum width=1.2cm, minimum height=0.7cm, rotate=120] (bivariate_ellipse) at (bivariate_2.center) {};
    
    % Points.
    \fill ($(bivariate_1) + 1/3*(-1cm,0)$) circle (1pt);
    \fill ($(bivariate_1) + 1/3*(-1cm,1cm)$) circle (1pt);
    \fill ($(bivariate_1) + 1/3*(0.5cm,0.1cm)$) circle (1pt);
    \fill ($(bivariate_1) + 1/3*(0.25cm,-0.2cm)$) circle (1pt);
    \fill ($(bivariate_1) + 1/3*(1.25cm,-0.5cm)$) circle (1pt);

    \fill ($(bivariate_2) + 1/3*(-0.2cm,1.2cm)$) circle (1pt);
    \fill ($(bivariate_2) + 1/3*(-0.25cm,-0.2cm)$) circle (1pt);
    \fill ($(bivariate_2) + 1/3*(0.5cm,-0.5cm)$) circle (1pt);
    
    % Gaussians
    \node[main_node, above right=0.5cm and 2.5cm of bivariate.center] (gaussian_x) {};
    \node[main_node, below right=0.5cm and 2.5cm of bivariate.center] (gaussian_y) {};
    
    \draw[thick, <->] (bivariate) -- (gaussian_x);
    \draw[thick, <->] (bivariate) -- (gaussian_y);
    \draw[thick, <->, red] (gaussian_x) -- node[right] {$ \scriptstyle \MI(\xi;\eta) $} node[left] {\footnotesize{\begin{tabular}{r}(easy to estimate \\ or tractable)\end{tabular}}} (gaussian_y);

    \draw[density_plot] plot[shift={(gaussian_x)}] ({\x}, {0.5*(exp(-40*(\x+0.1)*(\x+0.1)) + 0.6*exp(-40*(\x-0.25)*(\x-0.25)) - 0.6)});
    \draw[density_plot] plot[shift={(gaussian_y)}] ({\x}, {0.5*(0.7*exp(-40*(\x+0.25)*(\x+0.25)) + exp(-30*(\x-0.2)*(\x-0.2)) - 0.6)});
    \node[left=0.4cm of gaussian_x.center, align=center] (gaussian_x_text) {$ \scriptstyle \xi \sim q_\xi $};
    \node[left=0.4cm of gaussian_y.center, align=center] (gaussian_y_text) {$ \scriptstyle \eta \sim q_\eta $};

    % Images.
    \node[main_node, right=3.5cm of gaussian_x.center] (images_x) {};
    \node[main_node, right=3.5cm of gaussian_y.center] (images_y) {};

    \draw[thick, ->] (gaussian_x) -- node[above] {$ g_X $} node[below] {{\footnotesize(diffeomorphism)}} (images_x);
    \draw[thick, ->] (gaussian_y) -- node[above] {$ g_Y $} node[below] {{\footnotesize(diffeomorphism)}} (images_y);

    \node[draw, rectangle, minimum width=1.0cm, minimum height=1.0cm, dcs, fill=white, thick] at (images_x) {};
    \node[draw, rectangle, minimum width=0.65cm, minimum height=0.35cm, above right=-0.1cm and -0.3 cm of images_x.center] {};
    
    \node[draw, rectangle, minimum width=1.0cm, minimum height=1.0cm, dcs, fill=white, thick] at (images_y) {};
    \node[draw, circle, minimum size=0.65cm, below right=-0.2cm and -0.3 cm of images_y.center] {};

    \draw[thick, <->, violet] (images_x) -- node[left] {$ \scriptstyle \MI(X; Y) $} node[right] {\footnotesize{(hard to estimate)}} (images_y);
\end{tikzpicture}
    \caption{
        A general method for producing complex, high-dimensional datasets with tractable \gls{MI}.
        Here, $q$ is a latent distribution with tractable or easily estimable \gls{MI}.
        Smooth, injective mappings $g_X$ and $g_Y$ transform the marginals of $q$,
        thus achieving a complex distribution while preserving the \gls{MI}.
    }
    \label{figure:synthetic_images_generation}
\end{figure}

\begin{figure}[hb!]
    \centering
    \begin{subfigure}[b]{0.5\textwidth}
        \captionsetup{skip=0.5\baselineskip}
        \centering
        \frame{\includegraphics[width=0.22\textwidth]{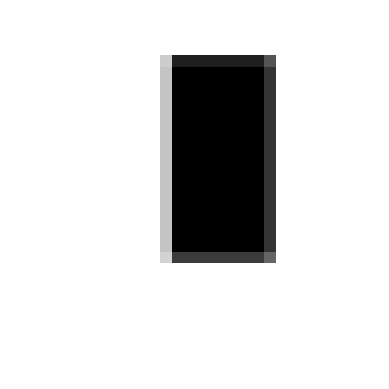}}
        \frame{\includegraphics[width=0.22\textwidth]{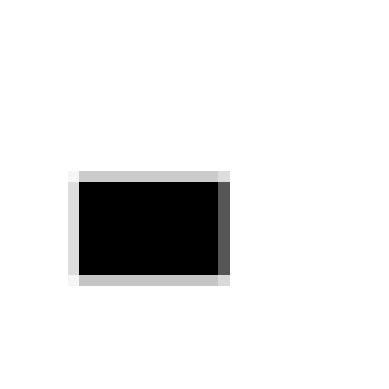}}
        \frame{\includegraphics[width=0.22\textwidth]{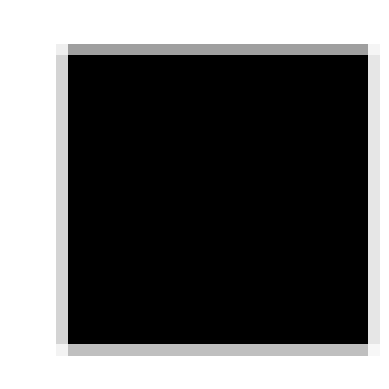}}
        \frame{\includegraphics[width=0.22\textwidth]{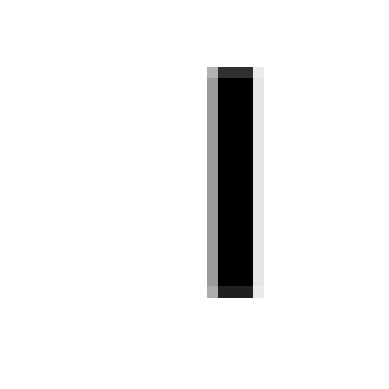}}
        \caption{Rectangles}
    \end{subfigure}%
    \begin{subfigure}[b]{0.5\textwidth}
        \captionsetup{skip=0.5\baselineskip}
        \centering
        \frame{\includegraphics[width=0.22\textwidth]{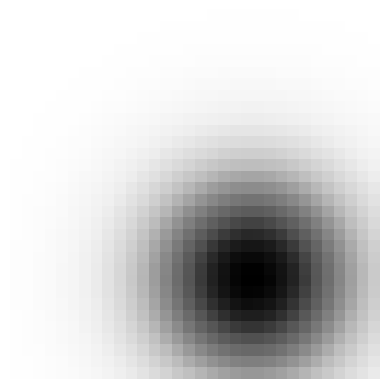}}
        \frame{\includegraphics[width=0.22\textwidth]{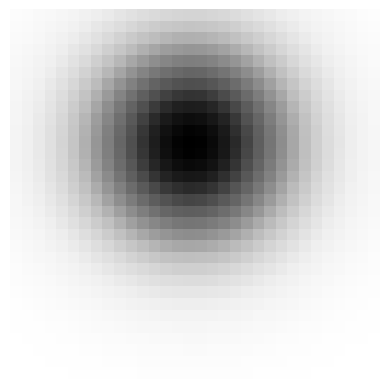}}
        \frame{\includegraphics[width=0.22\textwidth]{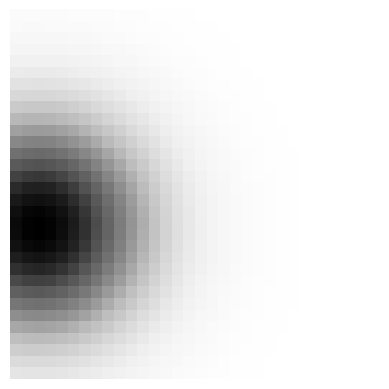}}
        \frame{\includegraphics[width=0.22\textwidth]{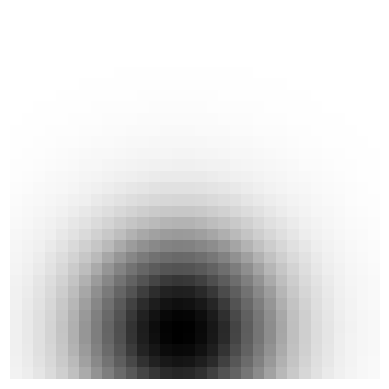}}
        \caption{Gaussian blobs}
    \end{subfigure}
    \caption{Examples of synthetic images from~\citep{butakov2024lossy_compression}. Note that images are high-dimensional, but admit latent structure, which is similar to real datasets.}
    \label{figure:synthetic_images_examples}
\end{figure}

\paragraph{Benchmarks based on real data.}
\citet{lee2024benchmark,gowri2024LMI}, instead, focus on benchmarking \gls{MI} estimators on unstructured data, such as images and texts. Their approach relies on \textit{same class pairing} for constructing datasets with known \gls{MI} under the assumptions that the random vectors within the dataset only share \textit{class} information and that this class information is easily decodable. While this approach allows the usage of complex datasets to evaluate \gls{MI} estimators, it does so by ignoring the data generating process, limiting the difficulty of the benchmark to classification tasks.

\paragraph{Consistency tests.}
Other works \citep{song2020understanding_limitations} propose consistency tests on real world data not limited to class information, with the drawback that, while they consider the complexity of the data generating process, they cannot provide a ground truth value of \gls{MI}.
\section{Proposed Benchmarks}
\label{section:benchmarks}

From the background overview, we know that existing benchmarking approaches are diverse, complex,
and often described from different perspectives, which pits them against one another. More importantly, their diversity does not translate into comprehensiveness,
since contemporary tests are often proposed to address universal obstacles
(e.g., long-tailed distributions) and do not provide a systematic coverage of challenges that are specific to \gls{MI} estimation.
We therefore establish our \textbf{first major contribution} by describing all existing benchmarks from a unified perspective.
Specifically, we posit that all tests essentially evaluate how well estimators capture the \gls{DPI}:
\begin{equation}
    V \to \boxed{X} \to A \longrightarrow B \to \boxed{Y} \to W \qquad \MI(V;W) \leq \MI(X;Y) \leq \MI(A;B)
\end{equation}
\begin{enumerate}
    \item \textbf{Synthetic low- and high-dimensional tests}
    employ $V = A$ and $W = B$ with tractable (or easy to compute) $\MI(V;W)$ and construct $X = g_X(V)$, $Y = g_Y(W)$,
    which satisfy~\Cref{theorem:MI_invariance} by design.

    \item \textbf{Real data class-pairing benchmarks} leverage discrete $V$ and $W$ with tractable $\MI(V;W)$
    and Markov kernels $V \to X$, $W \to Y$ which, for a given class label, sample a random data point from that class.
    Under the perfect classification assumption, $V$ and $W$ are deterministic functions of $X$ and $Y$, respectively;
    thus, one can select $A = V$ and $B = W$ and get $\MI(V;W) = \MI(X;Y)$.

    \item \textbf{Consistency tests} define $X$ and $Y$ as increasingly corrupted versions of $V$, discarding $A$, $B$, and $W$.
    An estimator is then validated by checking that it satisfies $\MI(V;X) \geq \MI(V;Y)$.
\end{enumerate}

Equipped with this knowledge, we propose a \textbf{comprehensive benchmark suite} for evaluating \gls{MI} estimators,
spanning from purely synthetic settings to complex scenarios involving real-world data.
Our benchmark is divided into two complementary parts which incorporate all the previous approaches but also include new tests:

\begin{itemize}
    \item \textbf{Copula first, marginals second}:
    all existing synthetic benchmarks with tractable \gls{MI} can be viewed as push-forwards of some copulas with tractable entropy
    under Cartesian products of \gls{MI}-preserving transforms:
    \begin{equation}\label{eq:copula_first}
        \jproba{X}{Y} = \underbrace{\jcopula{\xi}{\eta}}_{\mathclap{\text{tractable entropy}}} \quad \circ \quad \underbrace{(g_X^{-1} \times g_Y^{-1})}_{\mathclap{\text{preserves \gls{MI}}}}
    \end{equation}
    We therefore gather and organize all members of this family that are unique up to the said transforms
    and propose our own examples which highlight specific challenges of \gls{MI} estimation.
    This relatively simple benchmark enables extensive evaluation of \gls{MI} estimators under controlled conditions, allowing us to vary the ground truth \gls{MI}, dimensionality, and sample size.

    \item \textbf{Marginals first, copula second}:
    %we show that all existing benchmarks based on real data can be viewed as coupling of marginals according to
    we extend the class-pairing approach to a general data-coupling method
    \[
        \jproba{X}{Y} = \underbrace{\jcopula{\xi}{\eta}}_{\mathclap{\text{tractable entropy}}} \quad \cdot \quad \underbrace{\proba_{X \mid \xi} \otimes \proba_{Y \mid \eta}}_{\text{real data}},
    \]
    where $\proba_{\blankarg \mid \blankarg}$ are Markov kernels launching some $\alpha$ to an element of parametric distribution family $\{\proba_\alpha\}_{\alpha \in \mathcal{A}}$ with disjoint supports.
    %we take real-world marginal distributions \( \pi_1 \) and \( \pi_2 \), and combine them using a synthetic copula:
    %\[
    %    \pi = C(\pi_1, \pi_2),
    %\]
    %where \( C \) denotes a copula function constructed using auxiliary random variables and theoretical tools such as the data processing inequality.
    This setup allows us to evaluate \gls{MI} estimators on realistic marginals while preserving control over the underlying dependence structure.
\end{itemize}

\subsection{Synthetic distributions}
\label{section:distributions}

This section defines the synthetic distributions that form the basis of our benchmark. 
Other distributions are derived from these using the mechanism described in~\Cref{section:benchmarks}. 
\Cref{table:distributions} lists the names, aliases, brief descriptions of the sampling procedures, and formulas for the ground-truth \gls{MI}.

\begin{table}[h]
    \centering
    {
        \setcellgapes{1em}
        \scriptsize
        \begin{tabular}{llll}
        \toprule
        \textbf{Name} & \textbf{Alias} & \textbf{Definition} & \textbf{Mutual Information} \\
        \midrule
        Correlated Normal & Corr. $\normal$
        & $X,Y \sim \normal\left(0, \begin{bmatrix} \identity & \Sigma_{X,Y} \\ \Sigma_{X,Y}^T & \identity \end{bmatrix}\right) $
        & $\MI(X;Y) = -\frac{1}{2} \log \det \Sigma_{X,Y}$\\
        \addlinespace[1.0em]
        Correlated Uniform & Corr. $\uniform$
        & \makecell[l]{$X,Y = \CDF(\xi), \CDF(\eta)$; $\xi,\eta \sim \text{corr. $\normal$}$, \\[0.5em] $\CDF$ is a Gaussian CDF.}
        & $\MI(X;Y) = -\frac{1}{2} \log \det \Sigma_{X,Y}$ \\
        \addlinespace[1.0em]
        Correlated Student's $t$ & Corr. St.
        & \makecell[l]{$X = \xi \cdot \sqrt{k/Z}$, $Y = \eta \cdot \sqrt{k/Z}$ \\[0.5em]
          $\xi,\eta \sim \text{corr. $\normal$}$, $Z \sim \chi^2(k)$}
        & \makecell[l]{$\MI(X;Y) = -\frac{1}{2} \log \det \Sigma_{X,Y} + c(k,d_X,d_Y)$ \\[0.5em]
          $\begin{aligned}
            c(k,n,m) &= f(k) + f(k+n+m) \\ &- f(k+n) - f(k+m) \\
            f(x) &= \log \Gamma\left(\frac{x}{2}\right) - \frac{x}{2} \psi \left( \frac{x}{2} \right)
            \end{aligned}$} \\
        \addlinespace[1.0em]
        Smoothed Uniform & Sm. $\uniform$
        & $\begin{aligned}X &\sim \uniform[0;1], \\ Z &\sim \uniform[-\varepsilon;\varepsilon], \end{aligned} \; \; Y = X + Z$
        & $\MI(X; Y) = \begin{dcases} \varepsilon - \log (2 \varepsilon), &\; \varepsilon < 1/2 \\ (4 \varepsilon)^{-1}, &\; \varepsilon \geq 1/2 \end{dcases}$ \\
        \addlinespace[1.0em]
        Log Gamma Exponential & LGE
        & $e^X, e^Y \sim \PDF(t,s) = \frac{1}{\Gamma(\theta)} t^\theta e^{-t-ts}$
        & $\MI(X;Y) = \psi(\theta + 1) - \log \theta$ \\
        \addlinespace[1.0em]
        Rare Event Channel & REC
        & \makecell[l]{
          $S \sim \bernoulli(p)$. \\
          If $S = 0$, $X' = Y' = \const$. \\
          If $S = 1$, $X' = Y' = Z$. \\
          $X,Y$ are dequantized $X',Y'$.
          }
        & $\MI(X;Y) = h(p) + (1 - p) \sent(Z)$ \\
        \addlinespace[1.0em]
        Uniformly Quantized & UQ
        & $X \sim \normal(0,1)$, $Y = \operatorname{quant}(X)$
        & $\MI(X;Y) = \sent (Y)$ \\
        \addlinespace[1.0em]
        \makecell[l]{Smoothed Discrete \\ Uniform} & Sm. d. $\uniform$
        & $\begin{aligned}X &\sim \uniform\{0, \ldots, K-1\}, \\ Z &\sim \uniform[0;a], \\ Y &= X + Z \end{aligned}$
        & Please, refer to~\Cref{lemma:smoothed-discrete-uniform}\\
        \bottomrule
        \end{tabular}
    }
    \caption{Base distributions definitions and corresponding ground-truth values of \gls{MI}.}
    \label{table:distributions}
\end{table}

A more detailed description is provided below:
\begin{itemize}
    \item
        \textbf{Correlated Normal}: A standard benchmark based on the multivariate Gaussian distribution. 
        In this case, the copula and mutual information are completely determined by the covariance structure. 
        For normalization purposes, we require $X$ and $Y$ to have unit covariance matrices, 
        thereby enforcing a block structure onto the joint covariance matrix.
    \item
        \textbf{Correlated Uniform}: A copula-based variant of the Correlated Normal test, 
        where each individual component is mapped to a uniform distribution via the standard Gaussian \gls{CDF}.
        While this distribution shares the exact same copula as the Correlated Normal test,
        we list it separately for consistency reasons.
    \item
        \textbf{Correlated Student's $t$}: This distribution is derived from the Gaussian case 
        by dividing the variables by $\sqrt{Z/k}$, where $Z$ is an independent random variable 
        following a chi-square distribution with $k$ degrees of freedom. 
        For small $k$, this test is challenging due to heavy tails. 
        Furthermore, it cannot be constructed for \gls{MI} values smaller than a certain correction term, 
        as the division by $\sqrt{Z/k}$ unavoidably introduces a baseline level of dependence.
    \item
        \textbf{Smoothed Uniform}: Constructed by the summation of two independent, 
        uniformly distributed continuous random variables.
        Its copula differs significantly from the Gaussian copula. 
        %We apply linear normalization to constrain the support to the interval $[-1, 1]$.
    \item
        \textbf{Smoothed Discrete Uniform}: A mixed counterpart to Smoothed Uniform,
        where $X$ follows a discrete uniform distribution supported on $\{0, \ldots, K-1\}$, and $Y = X + \uniform[0;a]$.
    \item
        \textbf{Log-Gamma Exponential}: This distribution features a non-Gaussian copula and heavy tails. 
        It is constructed by summing two independent continuous random variables 
        following the log-gamma distribution with distinct parameters.
    \item
        \textbf{Rare Event Channel}: Constructed as a continuous extension of an errorless 
        discrete channel with one dominating symbol (the ``failure'' symbol), where 
        the probabilities of discrete outcomes correspond to segments within $[0, 1]$. 
        This dominating symbol artificially increases the variance of $\text{PMI}(X;Y)$.
    \item
        \textbf{Uniformly Quantized}: This setup employs a continuous $X$ (defaulting to $\mathcal{N}(0,1)$) 
        which is quantized to produce a discrete $Y$. For a specified ground-truth \gls{MI}, 
        the minimum number of quantization levels is used to ensure the distribution of $Y$ 
        remains as uniform as possible.
\end{itemize}
For some tests, we also apply linear normalization to constrain the variance or support.
For a comprehensive discussion of these distributions, including the derivation of ground-truth \gls{MI}, 
probability densities, and sampling procedures, please refer to \Cref{appendix:distributions}.

\subsection{Synthetic copula}\label{synthetic_copula}
In the previous section we designed a synthetic benchmark starting from synthetic distribution with analytical derivation of \gls{MI}. In this section, instead, we only use synthetic copula while using marginal distributions from real data. Our approach relies on injective mappings: consider two random variables $Z_1$, $Z_2$, for which we already know the ground truth \gls{MI}. With the help of injective mappings $f_1,f_2$, we can map $Z_1$ and $Z_2$ to another pair of random variables $W_1,W_2$. This transformation constitutes a Markov chain
\begin{equation}\label{eq:mc}
    Z_1\to Z_2\overset{f_2}{\to} W_2 \overset{f_2^{-1}}{\to} Z_2 \to Z_1 \overset{f_1}{\to} W_1 \overset{f_1^{-1}}{\to} Z_1.
\end{equation}

Where $f_1^{-1}$ and $f_2^{-1}$ are left inverses.
Using the data processing inequality, we can prove that $I(Z_1,Z_2)\geq I(W_1,W_2) \geq I(Z_1,Z_2)$, which means that $I(Z_1,Z_2)=I(W_1,W_2)$. In general, $f_1$ and $f_2$ need not be deterministic: they only need to be invertible Markov kernels. 

\paragraph{Same class sampling}  
As a motivating example, consider the case $Z_1 = Z_2 = Z$, where $Z$ is a discrete random variable with finite support of size $k$. Let $\{ \pi_i \}_{1 \leq i \leq k}$ be a collection of distributions with disjoint supports, each corresponding to a distinct class label $i$. We define a Markov kernel $f$ such that $f(Z = i) \sim \pi_i$, i.e., class label $i$ is mapped to a random sample drawn from distribution $\pi_i$.

Applying $f$ independently to two instances of $Z$, we obtain random variables $W_1 = f(Z_1)$ and $W_2 = f(Z_2)$. Substituting $f_1 = f_2 = f$ in the Markov chain formulation in~\Cref{eq:mc}, the data processing inequality yields:
\[
    \MI(W_1; W_2) = \sent(Z).
\]

This construction extends the method introduced by~\citet{lee2024benchmark}, where a uniformly distributed latent variable $Z$ is used to sample images or text from distinct classes.

\paragraph{Regression}  
We extend the previous approach to the continuous setting. Let $(Z_1,Z_2)$ be a continuous pair of random variable with known \gls{MI} $\MI(Z_1;Z_2)$. Instead of a finite collection of distributions, we define a \emph{family of distributions} $\{ \pi_\theta \}_{\theta \in \Theta}$, parameterized by $\theta \in \Theta\). We use this family to construct a Markov kernel $f$ such that $f(Z = \theta) \sim \pi_\theta$. As in the discrete case, we assume the supports of the $\pi_\theta$ distributions are disjoint to ensure information is preserved.

By applying this kernel independently to $Z_1$ and $Z_2$,
we obtain complex random variables $W_1 = f(Z_1)$ and $W_2 = f(Z_2)$, for which the mutual information satisfies:
\[
    \MI(W_1; W_2) = \MI(Z_1;Z_2).
\]

A concrete example of this construction is as follows: let $Z$ represent the \emph{brightness level} of an image, modeled as a continuous random variable. For each brightness level $\theta$, define $\pi_\theta$ as a distribution over natural images with brightness $\theta$. Then, $W_1$ and $W_2$ correspond to two independently sampled images from the brightness levels $Z_1$ and $Z_2$,
resulting in a dataset where complex dependencies are present, but the mutual information remains analytically tractable.

\paragraph*{Continuous Normalizing Flows} \glspl{CNF} \citep{chen2018neural} offer a structured methodology to construct invertible mappings between distributions: starting from a probability path \(p_t:[0,T]\times\mathbb{R}^d\to\mathbb{R}_{>0}\) and a time-dependent vector field \(\vfield:[0,T]\times\mathbb{R}^d\to\mathbb{R}^d\) to construct a time-dependent diffeomorphic map \(\phi:[0,T]\times\mathbb{R}^d\to\mathbb{R}^d\) called flow through a \gls{ODE}:
\begin{align*}
    \frac{\partial}{\partial t}\phi_t(x)&=\vfield_t(\phi_t(x)) \\
    \phi_0(x) &= x_0
\end{align*}
\glspl{CNF} are parametric models of $\phi$, which can be used to reshape $p_0$ into $p_1$ and, since they form invertible mappings, we can use them to construct \(g_X,g_Y\) in \cref{eq:copula_first}. This allows us to use powerful generative modeling paradigms, such as flow matching \cite{lipman2022flow}, to construct realistic datasets with known ground truth \gls{MI} and non-trivial mappings.
\section{Experiments}
\label{section:experiments}

In this section we implement the proposed benchmarks, measuring the performance of three different classes of \gls{MI} estimators: generative estimators (MINDE-C and MINDE-J), variational estimators (MINE-DV, MINE-NWJ and InfoNCE) and k-NN-based estimators (WKL and KSG).
For readability, verbose tables are provided in~\Cref{appendix:results}.

\subsection{Synthetic Distributions}
\label{subsection:synthetic}
\label{sec:synthetic}

We evaluate all estimators on three classes of synthetic benchmarks: continuous (\Cref{table:distributions}) and discrete distributions with closed-form \gls{MI}, as well as synthetic distributions generated via \glspl{CNF}. To construct the latter, we sample a pair of bivariate correlated normal random variables, $A$ and $B$, with known \gls{MI}. We then train two distinct flow matching models to learn the mappings $g_X(A)=X$ and $g_Y(B)=Y$. This procedure is repeated using the \textit{moons}, \textit{swissroll}, and \textit{s-curve} distributions from the \textit{scikit-learn} library \citep{scikit-learn}. For each distribution, we generate datasets spanning 11 linearly spaced \gls{MI} values between 0 and 10, across dimensions $d \in \{1, 2, 3\}$. Higher-dimensional random variables are constructed by stacking independent and identically distributed univariate variables. To ensure robustness, we average the results of 10 independent runs across different random seeds for each setting. Finally, to compare estimator families, we aggregate the results by taking the best estimate within each class for every distribution and ground-truth \gls{MI} pair, visualizing these group-level performances with line plots consistent with our individual estimator results.

\begin{figure}[!b]
    \centering
    \includegraphics[width=0.95\linewidth]{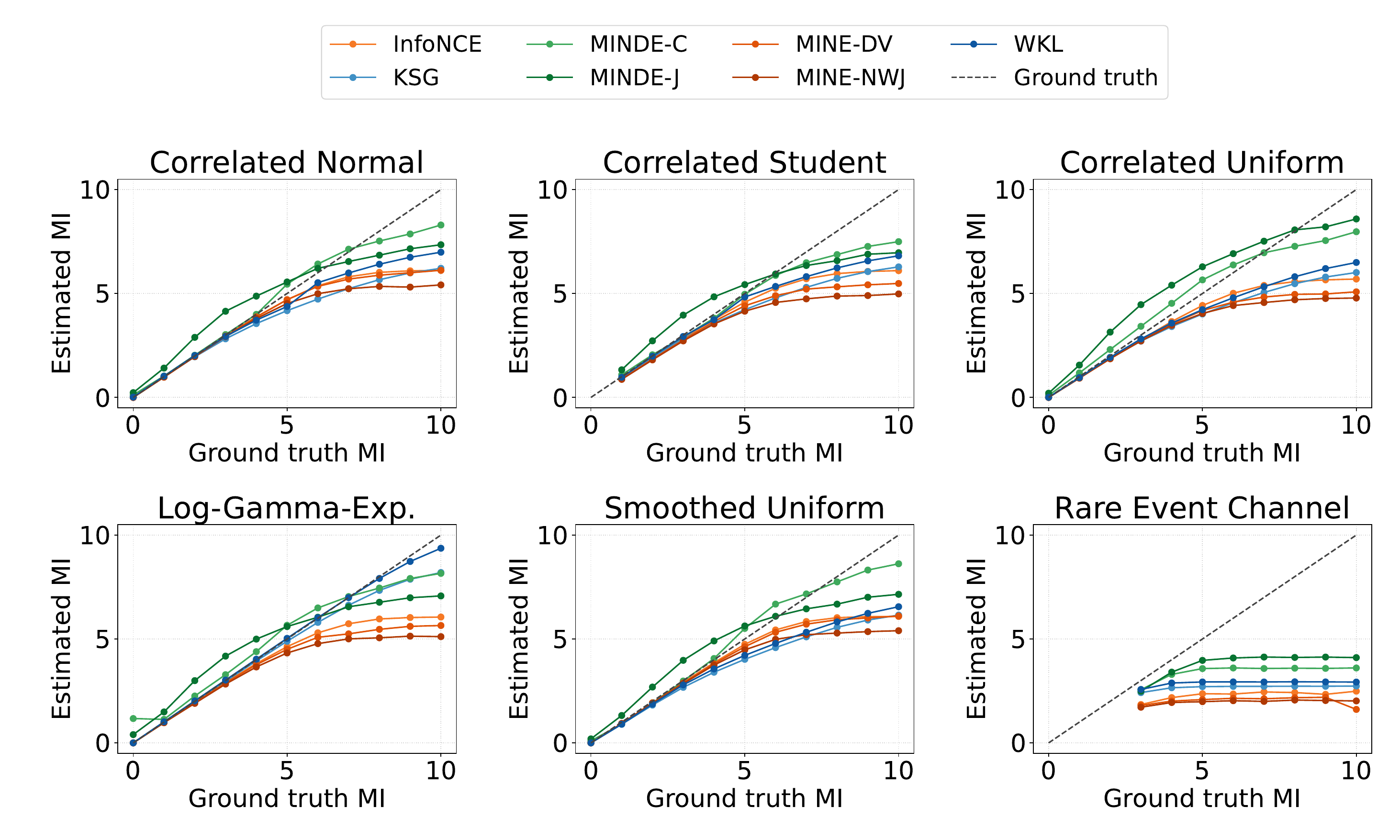}
    \caption{MI Estimation in Continuous Synthetic Experiment (All estimators)}
    \label{fig:cont_synth}
\end{figure}

\begin{figure}[!b]
    \centering
    \includegraphics[width=0.95\linewidth]{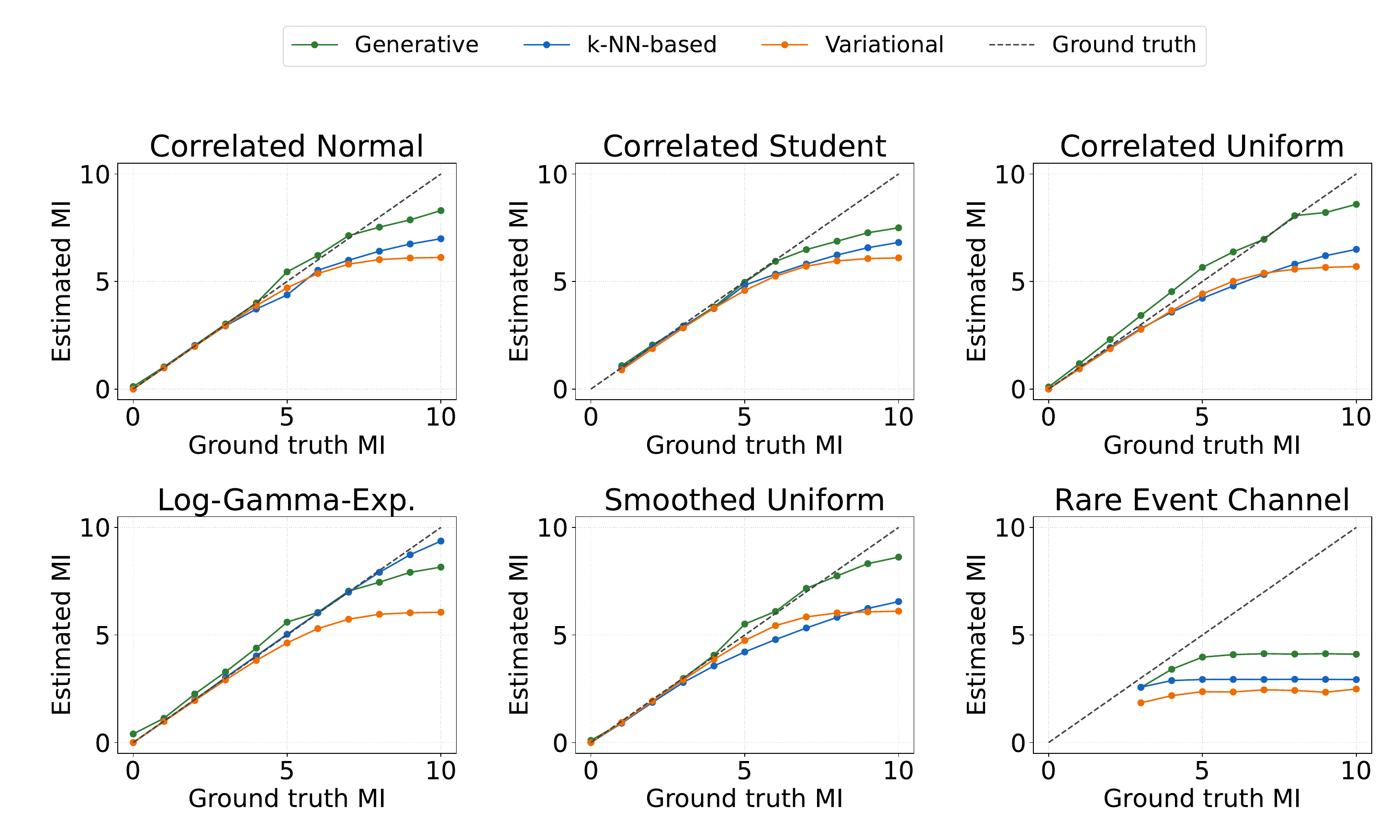}
    \caption{Comparison of estimator families on continuous synthetic distributions. To clearly contrast k-NN, variational, and generative approaches, we aggregate the results by estimator class. For any given point on the x-axis (representing a specific distribution and ground-truth \gls{MI} pair), the plotted value shows the most accurate estimate achieved by any model within that specific family.}
    \label{fig:cont_synth_by_group}
\end{figure}

% Trying a findings approach

The results of these evaluations reveal distinct behavior patterns across estimator families, particularly regarding dimensionality, data complexity, and sample efficiency. %We detail these core takeaways below.

\paragraph{$k$-NN-based estimators outperform variational estimators in low dimensions.} Overall, our low-dimensional experiments (\Cref{fig:cont_synth_by_group}, \Cref{fig:disc_group}, and \Cref{fig:flow_by_group}) show that variational estimators lag behind k-NN-based estimators in terms of accuracy, especially at high \gls{MI}. This comes as a surprise: k-NN estimators are decisively less compute-intensive in this regime, requiring less time and memory, which leaves little practical reason to prefer variational estimators. Generative estimators, however, remain the most accurate overall.

\paragraph{Non-linear transformations via \glspl{CNF} make estimation much harder.} Compared to the correlated normal benchmark in \Cref{fig:cont_synth}, \Cref{fig:flow} shows that all estimators struggle significantly more when the data is transformed via an invertible mapping. This transformation allows for the construction of highly challenging \gls{MI} benchmarks, which until now have largely been limited to simple toy distributions with closed-form \gls{MI} formulas.

\paragraph{Variational estimators are robust to sample size and network dimension.} We ran an additional ablation on the continuous distribution benchmark to measure the sensitivity of variational \gls{MI} estimators to training sample size and parameter count. The average estimation error across all distributions (\Cref{tab:continuous:double_descend}) demonstrates that variational estimators maintain their accuracy even with as few as 500 samples. Furthermore, neural network size does not appear to affect the quality of the \gls{MI} estimates.
The results also exhibit double descent-like patterns (estimation accuracy may sometimes improve when the sampling size is reduced).

\paragraph{$k$-NN-based methods do not require large $k$.}
Our additional ablation on the estimator hyperparameter $k$ shows that these methods achieve the best performance when $k$ is low
($k=1$ for KSG and $k=5$ for WKL).
Complete results are presented in~\Cref{tab:KSG_WKL_kNN_ablation}.

\begin{table}[ht!]
    \centering
    \scriptsize
    \begin{tabular}{llcccccc}
    \toprule
     &  & \multicolumn{6}{c}{$N$ samples} \\
    Estimator & Hidden dim & 100 & 500 & 1000 & 2000 & 5000 & 10000 \\
    \midrule
    \multirow[l]{4}{*}{MINE-DV} & 16 & $ 2.50 $ & $ 1.50 $ & $ 1.63 $ & $ 1.51 $ & $ 1.45 $ & $ 1.51 $ \\
     & 32 & $ 1.99 $ & $ 1.63 $ & $ 1.24 $ & $ 1.40 $ & $ 1.47 $ & $ 1.37 $ \\
     & 64 & $ 1.92 $ & $ 1.67 $ & $ 1.41 $ & $ 1.42 $ & $ 1.45 $ & $ 1.36 $ \\
     & 128 & $ 1.90 $ & $ 1.54 $ & $ 1.57 $ & $ 1.41 $ & $ 1.56 $ & $ 1.50 $ \\
    \cmidrule{1-8}
    \multirow[l]{4}{*}{MINE-NWJ} & 16 & $ 2.85 $ & $ 1.76 $ & $ 1.76 $ & $ 1.72 $ & $ 1.59 $ & $ 1.67 $ \\
     & 32 & $ 2.80 $ & $ 1.79 $ & $ 1.46 $ & $ 1.57 $ & $ 1.59 $ & $ 1.54 $ \\
     & 64 & $ 2.68 $ & $ 1.86 $ & $ 1.58 $ & $ 1.56 $ & $ 1.60 $ & $ 1.54 $ \\
     & 128 & $ 2.73 $ & $ 1.83 $ & $ 1.71 $ & $ 1.67 $ & $ 1.69 $ & $ 1.67 $ \\
    \cmidrule{1-8}
    \multirow[l]{4}{*}{InfoNCE} & 16 & $ 2.70 $ & $ 1.47 $ & $ 1.17 $ & $ 1.14 $ & $ 1.14 $ & $ 1.14 $ \\
     & 32 & $ 2.90 $ & $ 1.55 $ & $ 1.16 $ & $ 1.11 $ & $ 1.10 $ & $ 1.10 $ \\
     & 64 & $ 3.07 $ & $ 1.65 $ & $ 1.20 $ & $ 1.09 $ & $ 1.08 $ & $ 1.07 $ \\
     & 128 & $ 3.16 $ & $ 1.82 $ & $ 1.29 $ & $ 1.11 $ & $ 1.07 $ & $ 1.07 $ \\
    \bottomrule
    \end{tabular}
    \caption{
        Mutual information estimation error,
        varying sampling size and estimators' network hidden dimensionality
        ($d_X = d_Y  = 1$).
    }
    \label{tab:continuous:double_descend}
\end{table}

\begin{table}[ht!]
    \centering
    \scriptsize
    \begin{subtable}[t]{0.49\textwidth}
        \centering
        %\begin{table}
%\center
\begin{tabular}{lcccccc}
\toprule
 & \multicolumn{6}{c}{$k_{NN}$} \\
Distribution & 1 & 2 & 3 & 5 & 10 & 20 \\
\midrule
Corr $\normal$  & $ 1.80 $ & $ 1.99 $ & $ 2.12 $ & $ 2.30 $ & $ 2.56 $ & $ 2.85 $ \\
Corr St.        & $ 1.95 $ & $ 2.17 $ & $ 2.31 $ & $ 2.51 $ & $ 2.80 $ & $ 3.13 $ \\
Corr $\uniform$ & $ 1.91 $ & $ 2.10 $ & $ 2.23 $ & $ 2.39 $ & $ 2.64 $ & $ 2.92 $ \\
LGE             & $ 1.09 $ & $ 1.26 $ & $ 1.39 $ & $ 1.57 $ & $ 1.86 $ & $ 2.20 $ \\
Sm. $\uniform$  & $ 1.90 $ & $ 2.09 $ & $ 2.22 $ & $ 2.38 $ & $ 2.63 $ & $ 2.91 $ \\
UQ              & $ 2.23 $ & $ 2.59 $ & $ 2.79 $ & $ 3.02 $ & $ 3.33 $ & $ 3.63 $ \\
\bottomrule
\end{tabular}
%\caption{Mean absolute error for the KSG estimator across different distributions, averaged over varying dimensionalities and ground-truth \gls{MI} values.}
%\label{tab:KSG_kNN_ablation}
%\end{table}
    \end{subtable}
    \begin{subtable}[t]{0.49\textwidth}
        \centering
        %\begin{table}
%\center
\begin{tabular}{lccccc}
\toprule
 & \multicolumn{5}{c}{$k_{NN}$} \\
Distribution & 1 & 5 & 9 & 13 & 17 \\
\midrule
Corr $\normal$  & $ 1.69 $ & $ 1.48 $ & $ 1.51 $ & $ 1.55 $ & $ 1.56 $ \\
Corr St.        & $ 1.70 $ & $ 1.59 $ & $ 1.61 $ & $ 1.66 $ & $ 1.67 $ \\
Corr $\uniform$ & $ 2.22 $ & $ 1.59 $ & $ 1.61 $ & $ 1.66 $ & $ 1.67 $ \\
LGE             & $ 2.53 $ & $ 0.76 $ & $ 0.75 $ & $ 0.74 $ & $ 0.73 $ \\
Sm. $\uniform$  & $ 2.18 $ & $ 1.63 $ & $ 1.66 $ & $ 1.71 $ & $ 1.72 $ \\
UQ              & $ 4.84 $ & $ 4.94 $ & $ 4.95 $ & $ 4.97 $ & $ 4.97 $ \\
\bottomrule
\end{tabular}
%\caption{Mean absolute error for the WKL estimator across different distributions, averaged over varying dimensionalities and ground-truth \gls{MI} values.}
%\label{tab:WKL_kNN_ablation}
%\end{table}
    \end{subtable}
    \caption{Mean absolute error for the KSG (left) and WKL (right) estimators across different distributions, averaged over varying dimensionalities and ground-truth \gls{MI} values.}
    \label{tab:KSG_WKL_kNN_ablation}
\end{table}

\begin{figure}[!t]
    \centering
    \begin{subfigure}[b]{0.8\linewidth}
        \centering
        \includegraphics[width=\linewidth]{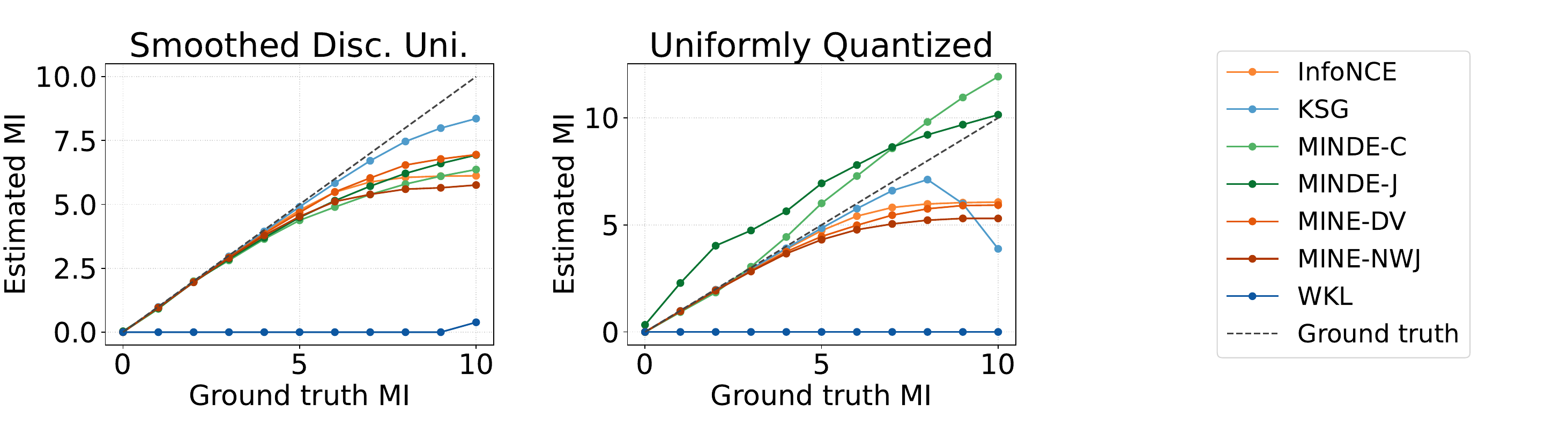}
        \caption{All estimators}
        \label{fig:disc_est}
    \end{subfigure}
    \vspace{0.5cm} % Adjust this value to increase/decrease vertical gap
    \begin{subfigure}[b]{0.8\linewidth}
        \centering
        \includegraphics[width=\linewidth]{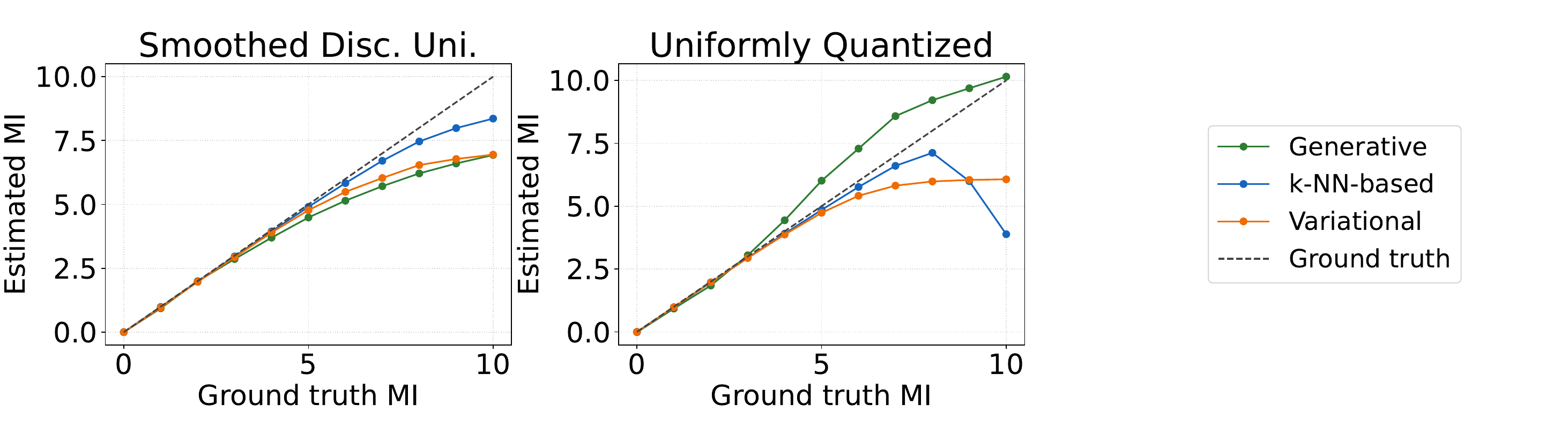}
        \caption{Estimator families}
        \label{fig:disc_group}
    \end{subfigure}
    \caption{MI Estimation for discrete experiments. \textbf{(a)} Individual performance of all estimators. \textbf{(b)} Comparison of estimator families on discrete synthetic distributions. To clearly contrast k-NN, variational, and generative approaches, we aggregate the results by estimator class. For any given point on the x-axis (representing a specific distribution and ground-truth \gls{MI} pair), the plotted value shows the most accurate estimate achieved by any model within that specific family.}
    \label{fig:disc_combined}
\end{figure}

\begin{figure}[!t]
    \centering
    \begin{subfigure}[b]{0.8\linewidth}
        \centering
        \includegraphics[width=\linewidth]{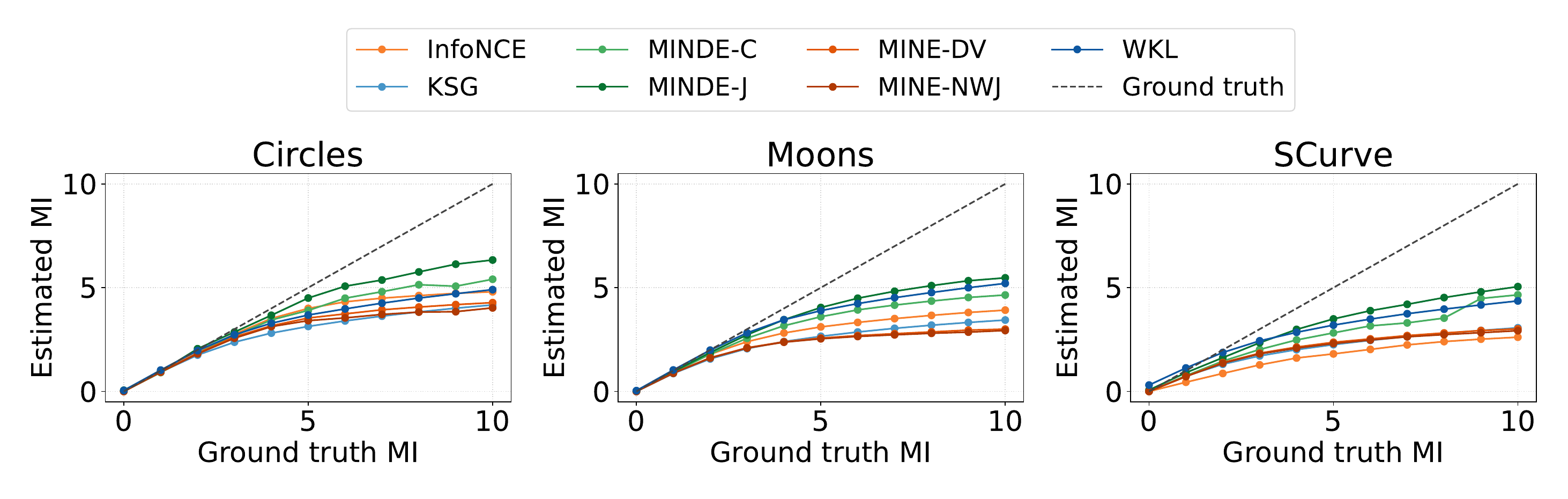}
        \caption{All estimators}
        \label{fig:flow}
    \end{subfigure}
    
    \vspace{0.1cm} % Adds vertical space between the figures; adjust as needed
    \begin{subfigure}[b]{0.8\linewidth}
        \centering
        \includegraphics[width=\linewidth]{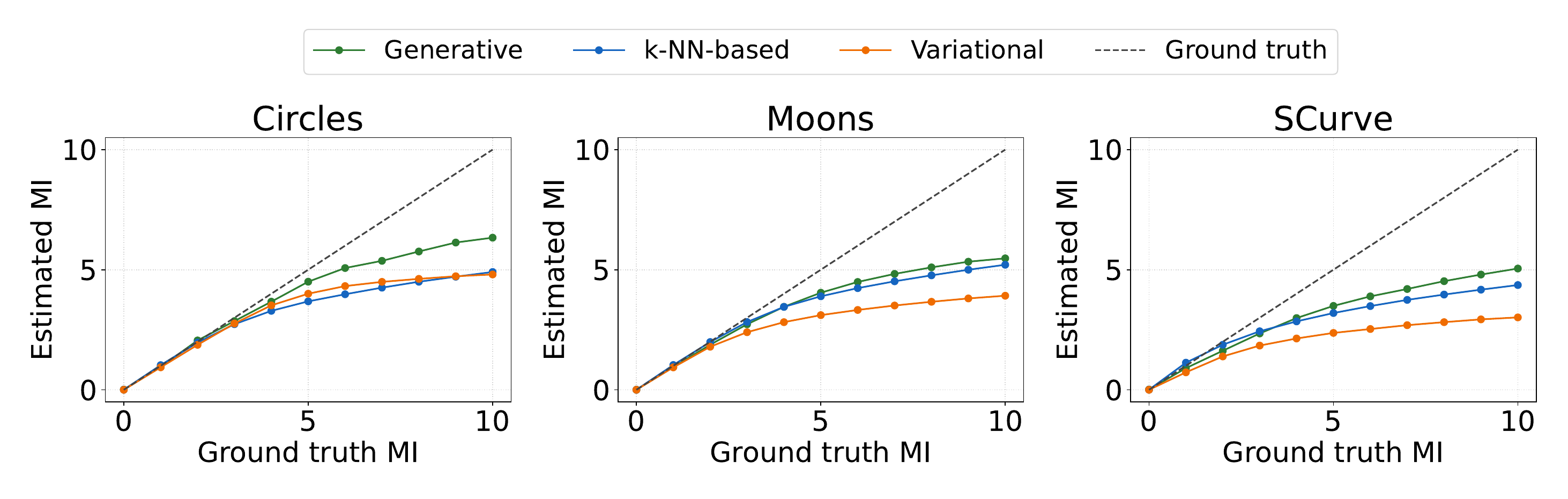}
        \caption{Best-in-class aggregation}
        \label{fig:flow_by_group}
    \end{subfigure}
    \caption{Normalizing Flow Experiments. \textbf{(a)} Individual performance of all evaluated \gls{MI} estimators. \textbf{(b)} Aggregated performance by estimator family. To clearly contrast the approaches, each line plots the single most accurate estimate within its respective class for every given distribution and ground-truth \gls{MI} pair.}
    \label{fig:flow_combined}
\end{figure}

\begin{comment}
\begin{figure}[!b]
    \centering
    \includegraphics[width=0.95\linewidth]{images/disc_est.pdf}
    \caption{Mi Estimation for discrete experiments (all estimators)}
    \label{fig:disc_est}
\end{figure}

\begin{figure}[!b]
    \centering
    \includegraphics[width=0.95\linewidth]{images/disc_group.pdf}
    \caption{Comparison of estimator families on discrete synthetic distributions. To clearly contrast k-NN, variational, and generative approaches, we aggregate the results by estimator class. For any given point on the x-axis (representing a specific distribution and ground-truth \gls{MI} pair), the plotted value shows the most accurate estimate achieved by any model within that specific family.}
    \label{fig:disc_group}
\end{figure}    
\end{comment}

% \input{tables/mi_moons}
% \input{tables/mi_scurve}
% \input{tables/mi_circles}

% \subsection{Synthetic Data~-- Discrete-Continuous mixtures experiments} This is the section where our experimental results for discrete-continuous mixtures will be located.

\subsection{Real Data~-- Synthetic Copula Experiments}

Following \Cref{synthetic_copula}, we leverage the \emph{synthetic copula} framework to generate paired samples from real-world datasets while maintaining a known ground-truth \gls{MI}. We evaluate our estimators on two image-based benchmarks: MNIST and CIFAR-10. For MNIST, we construct 50{,}000 paired, non-normalized $28 \times 28$ images without duplicates. We explore two settings here: a \textit{label pairing} approach utilizing digit class information via \emph{same-class sampling}, and a \textit{regression} approach where \gls{MI} is modulated through multiplicative brightness scaling. For the more complex CIFAR-10 dataset, operating directly on raw pixels is prohibitively challenging. Instead, we train a ResNet feature extractor---using standard augmentations and image normalization---to project the images into a lower-dimensional embedding space. We then apply the \emph{same-class sampling} pairing scheme to these CIFAR-10 embeddings to construct paired samples with varying, known \gls{MI}.

\begin{table}[t]
\centering
\scriptsize

\begin{subtable}[t]{0.49\textwidth}
\centering
\resizebox{\textwidth}{!}{
\begin{tabular}{lcccccc}
\toprule
  & \multicolumn{6}{c}{Ground-truth Mutual Information} \\
    Estimator & 0.0 & 0.5 & 1.0 & 1.5 & 2.0 & 2.3 \\
    \midrule
    KSG       & \faStopwatch & \faStopwatch & \faStopwatch & \faStopwatch & \faStopwatch & \faStopwatch \\
    WKL       & \faStopwatch & \faStopwatch & \faStopwatch & \faStopwatch & \faStopwatch & \faStopwatch \\
    MINE-DV  & 0.00 & 0.24 & 0.34 & 0.30 & 0.37 & 0.40 \\
    MINE-NWJ & 0.00 & 0.41 & 0.74 & 1.04 & 1.20 & 1.36 \\
    InfoNCE       & \faStopwatch & \faStopwatch & \faStopwatch & \faStopwatch & \faStopwatch & \faStopwatch \\
    MINDE-C  & 0.06 & 0.18 & 0.27 & 0.34 & 0.51 & 0.51 \\
    MINDE-J   & - & - & - & - & - & - \\
    \bottomrule
    \end{tabular}
    }
    \caption{Labels pairing}
    \label{tab:mnist_labels_pairing}
    \end{subtable}
    \hfill
    \begin{subtable}[t]{0.49\textwidth}
    \centering
    \resizebox{\textwidth}{!}{
    \begin{tabular}{lcccccc}
    \toprule
  & \multicolumn{6}{c}{Ground-truth Mutual Information} \\
Estimator & 0 & 1 & 2 & 3 & 4 & 5 \\
\midrule
KSG       & \faStopwatch & \faStopwatch & \faStopwatch & \faStopwatch & \faStopwatch & \faStopwatch \\
WKL       & \faStopwatch & \faStopwatch & \faStopwatch & \faStopwatch & \faStopwatch & \faStopwatch \\
MINE-DV  & 0.00 & 0.08 & 0.24 & 0.63 & 1.40 & 2.30 \\
MINE-NWJ & 0.00 & 0.27 & 0.98 & 1.64 & 2.28 & 3.49 \\
InfoNCE       & \faStopwatch & \faStopwatch & \faStopwatch & \faStopwatch & \faStopwatch & \faStopwatch \\
MINDE-C  & 0.06 & 0.02 & 0.39 & 0.97 & 1.38 & 0.93 \\
MINDE-J   & - & - & - & - & - & - \\
\bottomrule
\end{tabular}
}
\caption{Brightness multiplication}
\label{tab:mnist_brightness_multiplication}
\end{subtable}

\caption{Mutual information estimation error on MNIST. Comparison across two transformations. The ``\faStopwatch'' denotes a missing result due to time limit, and ``-'' denotes a missing value due to numerical instability.
Raw image data was used, no compression was applied.}
\label{tab:mnist_combined}
\end{table}

\paragraph{K-NN-based estimators are not a viable choice in high dimensional regimes.} Throughout our synthetic copula experiments, we could not use KSG and WKL except for \Cref{tab:cifar_d16} due to expensive high dimensional distance computation. KSG, however, proved extremely effective in \Cref{tab:cifar_d16}, suggesting that pairing it with a powerful feature extractor could present a powerful alternative to current neural estimators.

\paragraph{Variational estimators are robust to representation dimension.} \Cref{tab:cifar_combined} show that variational estimators do not suffer the dimensionality of CIFAR embeddings as long as those contain the required information for \gls{MI} estimation. Generative estimators, instead, are required to learn irrelevant information about the marginals even if it is not shared among the two random variables.

\paragraph{Plain InfoNCE with MLP critic is impractical in high dimensions.}
In our experiments involving MNIST class pairing and brightness modulation,
we observed that standard InfoNCE fails to achieve acceptable computational efficiency for reasonably high batch sizes.
The high dimensionality of the raw pixel space, coupled with the quadratic complexity $\mathcal{O}(N^2)$ relative to the batch size $N$,
results in prohibitive overhead when utilizing an MLP critic.
To mitigate this, one must either perform preliminary dimensionality reduction % (as in our experiments with CIFAR)
or employ separable critics~\citep{tschannen2020on_DIM},
which enable some optimizations of the similarity matrix computation.

\paragraph{MINDE-C is superior to MINDE-J in real world tasks.} Across all of the synthetic copula experiments MINDE-J consistently breaks, showing that its parameterization is a suboptimal choice compared with MINDE-C, which, instead, only fails on high dimensional embedding experiments in \Cref{tab:cifar_d512}. Compared to other estimators, however, MINDE-C does not appear to be the best, while it still obtain a much lower estimation error in high \gls{MI} case in \Cref{tab:mnist_brightness_multiplication}.

% \vspace{0.3cm}

% In \Cref{subsubsec:mnist_class} we employed MNIST to construct datasets with known ground truth \gls{MI}; in this section we attempt the same task using the CIFAR10 dataset. Preliminary results {\color{red}ALBERTO: maybe we should include an appendix where we show just a few numbers where all estimators fail}, however, show that all \gls{MI} estimators struggle at providing meaningful \gls{MI} estimates. For this reason, we train a ResNet feature extractor to construct 512-dimensional embedding datasets. Results in \Cref{tab:mi_estimates_cifar10} show that feature extractors are extremely beneficial to discriminative estimators, while generative estimators do not benefit from them. {\color{red} ALBERTO: for now MINDE does not work on the embeddings, maybe because of network size; but at this point a discussion on resource demands for generative estimators should be done.}

% We employ the same techniques used in \Cref{subsubsec:mnist_class} to construct 

\begin{table}[t]
    \centering
    \scriptsize
    \begin{subtable}[t]{0.49\textwidth}
        \centering
        \resizebox{\textwidth}{!}{
            \begin{tabular}{lcccccc}
                \toprule
                  & \multicolumn{6}{c}{Ground-truth Mutual Information} \\
                Estimator & 0.0 & 0.5 & 1.0 & 1.5 & 2.0 & 2.3 \\
                \midrule
                KSG        & \textbf{0.00} & \textbf{0.00} & \textbf{0.00} & \textbf{0.00} & \textbf{0.01} & \textbf{0.00} \\
                WKL        & \textbf{0.00} & 0.50 & 1.00 & 1.50 & 2.00 & 2.30 \\
                MINE-DV    & \textbf{0.00} & 0.01 & 0.01 & 0.02 & \textbf{0.01} & 0.01 \\
                MINE-NWJ   & \textbf{0.00} & 0.02 & 0.02 & 0.03 & 0.02 & 0.02 \\
                InfoNCE    & \textbf{0.00} & 0.01 & 0.02 & 0.03 & 0.02 & 0.01 \\
                MINDE-C    & 0.07 & 0.14 & 0.18 & 0.23 & 0.19 & 0.12 \\
                MINDE-J    & 4.06 & 4.10 & 4.14 & 4.85 & 6.34 & 6.50 \\
                \bottomrule
            \end{tabular}
        }
        \caption{$\texttt{embedding\_dim}=16$}
        \label{tab:cifar_d16}
    \end{subtable}
    \hfill
    \begin{subtable}[t]{0.49\textwidth}
        \centering
        \resizebox{\textwidth}{!}{
            \begin{tabular}{lcccccc}
                \toprule
                  & \multicolumn{6}{c}{Ground-truth Mutual Information} \\
                Estimator & 0.0 & 0.5 & 1.0 & 1.5 & 2.0 & 2.3 \\
                \midrule
                KSG       & \faStopwatch & \faStopwatch & \faStopwatch & \faStopwatch & \faStopwatch & \faStopwatch \\
                WKL       & \faStopwatch & \faStopwatch & \faStopwatch & \faStopwatch & \faStopwatch & \faStopwatch \\
                MINE-DV   & \textbf{0.00} & 0.05 & 0.05 & 0.05 & 0.04 & \textbf{0.02} \\
                MINE-NWJ  & \textbf{0.00} & 0.05 & 0.05 & 0.05 & 0.05 & 0.03 \\
                InfoNCE   & \textbf{0.00} & \textbf{0.03} & \textbf{0.03} & \textbf{0.03} & \textbf{0.02} & 0.03 \\
                MINDE-C   & - & - & - & - & - & - \\
                MINDE-J   & - & - & - & - & - & - \\ 
                \bottomrule
            \end{tabular}
        }
        \caption{$\texttt{embedding\_dim}=512$}
        \label{tab:cifar_d512}
    \end{subtable}

    \caption{
        Mutual information estimation error in Labels Pairing experiment on CIFAR10 embeddings.
        The ``\faStopwatch'' denotes a missing result due to time limit, and ``-''~--- due to numerical instability.    }
    \label{tab:cifar_combined}
\end{table}

\section{Discussion}
\label{section:discussion}

Our extensive benchmarking of various Mutual Information estimators uncovers new nuances
and provides additional experimental grounding for known challenges and problems.
Below, we carefully investigate our results, connect them to specific phenomena in \gls{MI} estimation,
and offer a thorough discussion of their corresponding preconditions and impact.

\subsection{Fundamental Problems}
\label{subsection:fundamental_problems}

Our results indicate consistent saturation of \gls{MI} estimates for high ground-truth values
that is especially pronounced in low-dimensional regimes.
This behavior is observed universally (for any estimator and distribution), even in the simplest setups,
such as the Gaussian or quantized uniform distributions,
and cannot always be tied to the difficulties identified in previous works.

\paragraph{High sample complexity.}
\gls{MI} is notoriously hard to estimate due to its high sample complexity,
which typically grows exponentially with the dimensionality.
Existing literature establishes the following results:
\begin{enumerate}
    \item Any distribution-free, high-confidence \emph{lower bound} on mutual information estimated from $N$ samples cannot be larger than $O(\log N)$~\citep[Theorem 4.1]{mcallester2020limitations_MI};
    \item Theorem 1 by~\citet{goldfeld2020convergence_of_SEM_entropy_estimation} derives exponential (in dimensionality) sample complexity of differential entropy estimation for a certain class of distributions.
\end{enumerate}
In contrast to previous findings, our results suggest that \gls{MI} estimation sample complexity is also exponential \emph{in mutual information}
for all distribution-free estimators.
The following example serves as a simple formal illustration of this limitation.
\begin{simplebox}
    \begin{example}[Sample complexity, quantized uniform distribution]
    Let $X \sim \uniform[0;1]$ and $Y = \left\lfloor n \cdot X \right\rfloor$ for some $n \in \naturals$.
    Then the \gls{MI} satisfies $\MI(X;Y) = \sent(Y) = \log n$,
    implying that at least $\exp(\MI(X;Y))$ samples are necessary to fully recover the support of $Y$.
    Conversely, for any estimator of \gls{MI} that makes no assumptions about the underlying distributions,
    given $N$ i.i.d. samples from $\jproba{X}{Y}$, the estimate is bounded above by $\log N$.
\end{example}
\end{simplebox}

\paragraph{Numerical instability.}
Given that \gls{MI} estimates saturate even in the Gaussian case,
where \gls{MI} is a function of the covariance matrix~--
a simple statistic that is \emph{not} exponentially hard to estimate in high dimensions~\citep{cai2015logdet}~--
one might conclude that sample complexity is not the only limiting factor in measuring \gls{MI}.
We interpret this as a distinct manifestation of a problem not addressed in previous work:
even when \gls{MI} can be expressed through well-behaved statistics,
the relation may not be numerically stable enough to allow precise estimation.
The following example illustrates the phenomenon.

\begin{simplebox}
    \begin{example}[Numerical instability, Gaussian distribution]
    \label{example:instability}
    Let $(X,Y) \sim \normal(0, \Sigma)$, where
    \[
        \Sigma =
        \begin{pmatrix}
            1 & \rho \\
            \rho & 1
        \end{pmatrix},
        \qquad
        \rho \in (-1;1)
    \]
    Then $\MI(X;Y) = -\frac{1}{2} \log(1 - \rho^2)$ and $\partial \MI(X;Y) / \partial \rho = \frac{\rho}{1 - \rho^2}$.
    Consequently, given a correlation coefficient estimation error $\Delta \rho$,
    the \gls{MI} estimation error satisfies
    \[
        \Delta \MI(X;Y)
        = \frac{\rho}{1 - \rho^2} \cdot \Delta \rho + O((\Delta \rho)^2)
        = e^{2 \MI(X;Y)} \sqrt{1 - e^{-2 \MI(X;Y)}} \cdot \Delta \rho + O((\Delta \rho)^2)
    \]
    Here $\sqrt{1 - e^{-2 \MI(X;Y)}} \to 1$ for high \gls{MI}, and $e^{2 \MI(X;Y)}$ grows exponentially with $\MI(X;Y)$. Moreover, setting $\Delta \rho = 10^{-6}$ (approximately single-precision floating-point machine epsilon) and $\rho = 1 - \Delta \rho$ yields $\MI(X;Y) \approx 6.56 \; \textnormal{nats}$~-- a figure that is close to the saturation value we observe in our setups.
\end{example}
\end{simplebox}

\paragraph{High variance.}
The problem of the high variance of the \gls{MI} estimates is widely recognized in the literature~\citep{poole2019variational_bounds,song2020understanding_limitations,letizia2024fDIME}.
However, this issue is typically tied to a particular estimator (e.g., to MINE without clipping~\citep{song2020understanding_limitations}),
and not to the fundamentally complex nature of copulas and \gls{MI}.
In contrast, we suggest shifting the focus from methods to distributions:
we argue that some pairs of $X$ and $Y$ inherently yield high-variance estimates for a wide range of estimators.
Recall that \gls{MI} can be viewed as an expected value of \gls{PMI} (see \eqref{eq:mutual_information})~---
$\PMI(x,y) = \log \frac{\partial \jproba{X}{Y}}{\partial \mproba{X}{Y}}(x,y)$.
Note that $\PMI(X,Y)$ is a random variable that may be of high variance on its own.
Given that majority of the \gls{MI} estimators approximate \gls{PMI}
either explicitly (via the density ratio, as in MIENF~\citep{butakov2024normflows} and VCE~\citep{chen2025vector_copula})
or implicitly (via the classifier, as in virtually any discriminative method),
the estimate may still be unreliable even if the \gls{PMI} is approximated precisely.
The following example illustrates this claim.
\begin{simplebox}
    \begin{example}[High underlying variance]
    %Let $(X,Y) \sim \uniform \left([0;p]^2 \cup \bigcup_{k=1}^n \left[p+\frac{(k-1)/n}{1-p};p+\frac{k/n}{1-p}\right] \right)$.
    %Then $\MI(X;Y) = h(p) + (1-p) \cdot \log n$,
    Let $X \sim \uniform[0;1]$ and
    \[
        Y \mid X \sim
        \begin{cases}
            \uniform[0;p], &\quad X \in [0;p] \\
            \uniform\!\left[p + (1-p)\frac{\lfloor X \cdot n \rfloor}{n};p + (1-p)\frac{\lfloor X \cdot n \rfloor + 1}{n}\right], &\quad X \in [p;1]
        \end{cases}
    \]
    Then
    \[
        \PMI(X,Y) =
        \begin{cases}
            -\log p, &\quad \textnormal{with probability $p$} \\
            \log n -\log (1 - p), &\quad \textnormal{with probability $1 - p$} \\
        \end{cases}
    \]
    Consequently, $\MI(X;Y) = \expect \PMI(X,Y) = h(p) + (1-p) \cdot \log n$
    and $\variance [\PMI(X,Y)] = p (1-p) (\log n - \log (1/p-1))^2$ %= p \log^2 p + (1 - p) (\log n - \log (1-p))^2$,
    where $h(\blankarg)$ is the binary cross-entropy function.
\end{example}
\end{simplebox}

%In practice, however, this intrinsic variance may manifest itself as increased bias of the estimate,
%as estimators are typically unable to learn the highly variable parts of the distribution.
%We indeed observe such behavior in our experiments with the Rare Event Channel.
%Lorem ipsum dolor bla bla pizza pasta something just to fill the space Lorem ipsum dolor bla bla pizza pasta something just to fill the space

% The [8] forces it to ONLY wrap exactly 8 lines of text. 
% Adjust this number down (e.g., [6] or [7]) to aggressively cut the space.
\begin{wrapfigure}[11]{r}{0.3\textwidth}
    \vspace{-\baselineskip} % Pulls the top of the image up to align with text
    \centering
    \includegraphics[width=\linewidth]{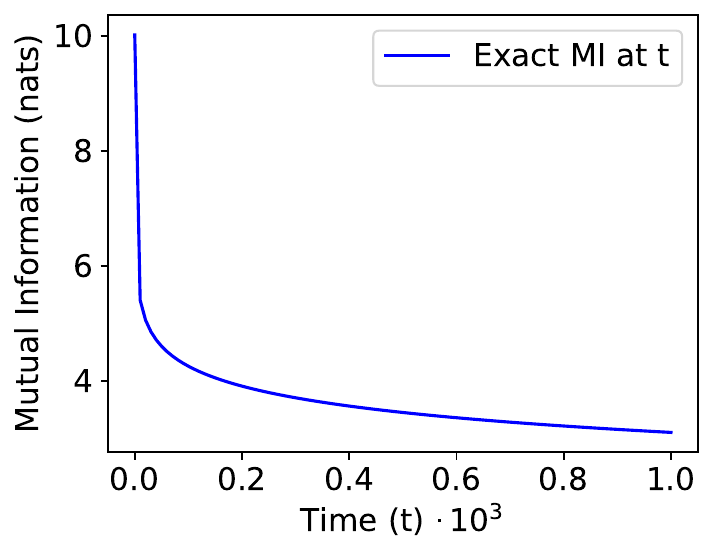}
    \caption{\gls{MI} vs Diffusion Time}
    \label{fig:mi_v_t}
    \vspace{-1.5\baselineskip} % Aggressively pulls the bottom boundary up
\end{wrapfigure}

In practice, however, this intrinsic variance may manifest itself as increased bias of the estimate,
as estimators are typically unable to learn the highly variable parts of the distribution.
We indeed observe such behavior in our experiments with the Rare Event Channel.

\paragraph{Diffusion-based estimators smooth sharp distributions.}
Generative estimators outperform competitors at high \gls{MI}, despite occasionally exhibiting higher variance or specific error modes. In fact, we identify a fundamental pitfall of diffusion-based generative \gls{MI} estimators such as MINDE. Diffusion models are typically trained by perturbing data using a random \(t\in[0;1]\); however, due to numerical instabilities near \(0\), it is common to choose a small \(\epsilon\) and perturb in \([\epsilon;1]\) \citep{song2020score}. In practice, the diffusion model never sees real clean data and cannot distinguish the true data from its slightly noisy version. Consequently, the estimator computes a perturbed mutual information that severely penalizes highly concentrated distributions.

\begin{simplebox}
    \begin{example}[Error inflation in concentrated distributions]
    \label{example:diffusion_pitfall}
    Consider the noisy version of the random variables under time evolution:
    \[\label{eq:ve_sde}
        X_t = X_0 + \sqrt{t}Z_X \qquad
        Y_t = Y_0 + \sqrt{t}Z_Y \nonumber
    \]
    where \(Z_X\) and \(Z_Y\) are standard normal distributions.
    By the De Bruijn identity, we have:
    \[
        \frac{\dif}{\dif t}\MI(X_t; Y_t)=-\frac{1}{2}\expect_{x,y}\left[\|\nabla_{xy}\PMI(X_t,Y_t)\|^2\right]
    \]
    Expanding \gls{MI} near zero yields the value effectively computed by MINDE:
    \[
        \MI(X_\epsilon;Y_\epsilon) = \MI(X;Y) - \frac{\epsilon}{2}\mathbb{E}_{x,y}\left[\| \nabla_{xy}\PMI(X_\epsilon,Y_\epsilon)\|^2\right].
    \]
    For high \gls{MI} distributions, the joint distribution is highly concentrated. This causes the gradient norm (and the resulting error term) to become exceptionally large. We demonstrate this using a bivariate Gaussian distribution with zero mean and correlation coefficient \(\rho=1-10^{-9}\). As shown in \Cref{fig:mi_v_t}, \gls{MI} drops dramatically in the very first instants of the diffusion process.
    \end{example}
\end{simplebox}

% --- THE KILL SWITCH ---
% This forces LaTeX to stop wrapping and resets the margins for the rest of the document.
\WFclear 
\section{Conclusion}
\label{section:conclusion}

While recent years have witnessed a proliferation of neural mutual information estimators, no single approach has emerged as universally superior. Progress has been largely incremental, and despite the adoption of cutting-edge models and concepts, the results remain limited: in certain regimes, modern neural estimators offer little to no improvement over simpler methods, while demanding substantially more computational resources. To highlight these challenges, we provide a comprehensive overview of the task and existing approaches, followed by an extensive benchmarking of the most popular \gls{MI} estimation paradigms.
Beyond corroborating previously documented limitations, % corroborating? wow, what a word! XD
our results reveal the following fundamental and practical challenges:
\begin{itemize}
    \item \textbf{Numerical instability}, especially in high-\gls{MI} settings;
    \item \textbf{High variance of PMI}, which propagates into \gls{MI} estimates and renders them unreliable;
    \item \textbf{Inability of neural estimators to learn useful representations}: in many cases achieving better results requires explicit compression before estimation;
    \item \textbf{An abundance and variability of estimator-specific practical considerations}:
        not only is there no single winner in terms of estimation accuracy,
        but there is also no method that \emph{just works} across all tested regimes.
\end{itemize}

Our results highlight a distinct trade-off.
Generative \gls{MI} estimators offer a significant advantage in high-dimensional and high-\gls{MI} tasks,
but they require complex modeling even for straightforward dependencies.
Conversely, variational estimators perform optimally when paired with expressive neural embeddings,
yet they are fundamentally constrained by a sample complexity that scales exponentially with the true \gls{MI}
and may fail without expressive compressed representations.
Finally, inexpensive $k$-NN-based nonparametric methods remain unchallenged in low-dimensional, low-\gls{MI}, and small-sample-size regimes,
while failing on raw high-dimensional and complex data. Our work establishes a comprehensive evaluation framework that exposes the inherent pitfalls of current state-of-the-art estimators,
providing a rigorous baseline for future developments.
By identifying these novel properties and specific failure modes,
we offer practitioners a roadmap for estimator selection and present a clear set of challenges
that must be addressed to advance the next generation of neural \gls{MI} estimation.

%\paragraph{Alberto:} While recent years have witnessed a proliferation of neural \gls{MI} estimators, no single approach has emerged as universally superior. In this work, we demonstrate through extensive benchmarking that classical estimators remain competitive in low-dimensional regimes across diverse distributions. Our results highlight a distinct trade-off: while generative \gls{MI} estimators offer a significant advantage in high-dimensional and high-\gls{MI} tasks, they necessitate complex modeling even for straightforward dependencies. Conversely, variational estimators perform optimally when paired with expressive neural embeddings but are fundamentally constrained by sample complexity that scales exponentially with the underlying \gls{MI} and fail without compressed expressive representations. Our work establishes a comprehensive evaluation framework that exposes the inherent pitfalls of current state-of-the-art estimators, providing a rigorous baseline for future developments in the field. By identifying these novel properties and specific failure modes, we provide practitioners with a roadmap for estimator selection and offer a clear set of challenges that must be addressed to advance the next generation of neural \gls{MI} estimation.

%{\color{red}GF: Promote the comment about quantization vs maximum MI, it is an interesting discussion for the whole work}

\appendix
% 

%\section{Complete proofs}
%\label{appendix:proofs}

\section{Distributions}
\label{appendix:distributions}

\subsection{Discrete}

\paragraph{Noiseless Channel (Entropy-Calibrated Construction).}
Let $X$ be a discrete random variable supported on a finite alphabet
$\mathcal A=\{1,\dots,K\}$ with probability mass function $p_X$.
The channel is deterministic:
\[
    Y = X.
\]
Hence
\[
    p_{X,Y}(i,j)=p_X(i) \cdot \mathbf I\{i=j\}, \qquad \sent(Y|X)=0,
\]
and the mutual information satisfies
\[
    \MI(X;Y)=\sent(X).
\]

%\medskip
%\noindent
%\textbf{Information calibration.}
To construct a benchmark instance with prescribed
\(
    \MI(X;Y)=h \ge 0,
\)
it suffices to choose $p_X$ such that $\sent(X)=\varkappa$.
We employ a maximum-symmetry parameterization:
let $K=\lceil e^{\varkappa}\rceil$ and define
\[
    p_X(i)=\frac{1-r}{K-1},
    \quad i=1,\dots,K-1,
    \qquad
    p_X(K)=r,
\]
where $r\in[0,1/K]$ solves the following equation:
\[
    - r \log r - (1-r)\log\!\left(\frac{1-r}{K-1}\right) = \varkappa.
\]
The scalar equation admits a unique solution in $[0,1/K]$
and is solved numerically.
Thus, in the noiseless channel, the target mutual information
is achieved exactly by entropy calibration of the input distribution.

\paragraph{Symmetric Noisy Channel (Noise-Calibrated Construction).}
Let $X$ be a discrete random variable over
$\mathcal A=\{1,\dots,K\}$ with probability mass function $p_X$.
Fix a reroll probability $\varepsilon\in[0,1]$.
The channel output $Y$ is generated according to
\[
    p_{Y|X}(j|i) =
    (1-\varepsilon) \cdot \mathbf I\{j=i\} + \frac{\varepsilon}{K}.
\]

An actual symbol error occurs only if rerolling produces
a symbol different from $X$, hence
\[
    p_{\mathrm{err}} = \varepsilon\!\left(1-\frac{1}{K}\right).
\]

The output marginal equals
\[
    p_Y(j) =
    (1-\varepsilon)p_X(j)  + \frac{\varepsilon}{K},
\]
and the mutual information can be expressed explicitly as
\[
    \MI(X;Y;\varepsilon)
    =
    \sum_{i=1}^{K}\sum_{j=1}^{K}
    p_X(i)\,p_{Y|X}(j|i)\,
    \log \frac{p_{Y|X}(j|i)}{p_Y(j)}.
\]

\medskip
\noindent
\textbf{Information calibration.}
For fixed alphabet size $K$ and fixed input distribution $p_X$,
the mapping $\varepsilon \mapsto \MI(X;Y;\varepsilon)$ is continuous and strictly decreasing on $[0,1]$, with
\[
    \MI(X;Y;0)=\sent(X),
    \qquad
    \MI(X;Y;1)=0.
\]
Therefore, for any target value
\(
    0 \le \varkappa \le \sent(X),
\)
there exists a unique $\varepsilon$ such that
\[
    \MI(X;Y;\varepsilon)=\varkappa.
\]
In practice, as implemented in the code, $\varepsilon$
is found numerically via one-dimensional root-finding, which ensures that generated
samples $(X,Y)$ achieve the prescribed mutual information $\varkappa$.

\medskip
\noindent
\textbf{Uniform-input case.}
When $p_X(i)=1/K \quad \forall i \in \mathcal{A}$, the output is also uniform and
\[
    \MI(X;Y) =  \log K + s\log s +
    (1-s)\log\!\left(\frac{\varepsilon}{K}\right),
    \qquad
    s=1-\varepsilon\!\left(1-\frac{1}{K}\right).
\]

\subsection{Continuous}

\paragraph{Correlated Normal \& Uniform.}
For jointly Gaussian $(X,Y) \sim \normal(m,\Sigma)$, the following holds~\citep{polyanskiy2024information_theory}:
\[
    \MI(X;Y) = \frac{1}{2} \log \det \Sigma_X + \frac{1}{2} \log \det \Sigma_Y - \frac{1}{2} \log \det \Sigma,
    \qquad
    \textnormal{where} \quad \Sigma = \begin{bmatrix} \Sigma_X & \Sigma_{XY} \\ \Sigma_{XY} & \Sigma_Y \end{bmatrix}
\]
Given $\{\rho_i\}$ are singular values of $\Sigma_X^{-1/2} \Sigma_{XY} \Sigma_Y^{-1/2}$,
this expression further simplifies to
\[
    \MI(X;Y) = -\frac{1}{2} \sum_{i} \log (1 - \rho_i^2)
\]
Finally, if $\Sigma_X$ and $\Sigma_Y$ are unit, applying Gaussian CDF to $X$ and $Y$ component-wise yields
a Gaussian copula distribution, which we refer to as the \emph{correlated uniform} distribution.

\paragraph{Correlated Student.}
Consider an $(n+m)$-dimensional pair $(\tilde X; \tilde Y) \sim \normal(0, \Sigma)$.
Define $X = \tilde X \sqrt{k / Z}$ and $Y = \tilde Y \sqrt{k / Z}$, where $Z \sim \chi^2(k)$.
Then $X$ and $Y$ jointly follow a multivariate Student-$t$ distribution with $k$ degrees of freedom.
\citet{arellano_valle2013MI_for_skew_distributions} showed that
\[
    \MI(X;Y) = \MI(\tilde X; \tilde Y) + c(k,n,m),
\]
where
\[
    c(k, n, m) = f(k) + f(k + n + m) - f(k + n) - f(k + m),
    \qquad
    f(x) = \log \Gamma \left( \frac{x}{2} \right) - \frac{x}{2} \psi \left( \frac{x}{2} \right),
\]
and $\psi$ is the digamma function.

Note that even when $\tilde X$ and $\tilde Y$ are independent, $\MI(X;Y) \neq 0$,
because $X$ and $Y$ share information through the common scaling factor involving $U$.

\paragraph{Smoothed Uniform.}

\begin{lemma}
    \label{lemma:closed_form_MI_expression_smoothed_uniform}
    Consider independent $X \sim \uniform[0;1]$, $Z \sim \uniform[-\varepsilon; \varepsilon]$ and $Y = X + Z$.
    Then
    \begin{equation}
        \label{eq:smoothed_uniform_mutual_information}
        \MI(X; Y) =
        \begin{dcases}
            \varepsilon - \log (2 \varepsilon), &\;\; \varepsilon < 1/2 \\
            (4 \varepsilon)^{-1}, &\;\; \varepsilon \geq 1/2
        \end{dcases}
    \end{equation}
\end{lemma}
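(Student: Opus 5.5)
We use the decomposition $\MI(X;Y) = \dent(Y) - \dent(Y \mid X)$ from \eqref{eq:mutual_information_PDF}. Since $Z$ is independent of $X$, the conditional law of $Y$ given $X=x$ is $\uniform[x-\varepsilon;x+\varepsilon]$. Hence $\dent(Y\mid X) = \dent(Z) = \log(2\varepsilon)$. The whole task therefore reduces to computing the differential entropy of $Y = X + Z$. Its density is the convolution of two uniform densities, which is a trapezoid (or a triangle in a degenerate case), and we treat the two regimes $2\varepsilon \le 1$ and $2\varepsilon \ge 1$ separately.

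\textbf{Case $\varepsilon < 1/2$.} The density of $Y$ is $\PDF_Y(y) = \lambda([0;1]\cap[y-\varepsilon;y+\varepsilon])/(2\varepsilon)$, where $\lambda$ is Lebesgue measure. It is supported on $[-\varepsilon;1+\varepsilon]$. On the plateau $[\varepsilon;1-\varepsilon]$ it equals $1$, which contributes $0$ to $-\int \PDF_Y\log\PDF_Y$. On the two ramps it is linear, rising from $0$ to $1$ over an interval of length $2\varepsilon$. The substitution $u=\PDF_Y(y)$ shows that each ramp contributes $-2\varepsilon\int_0^1 u\log u\,\dif u = \varepsilon/2$. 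This gives $\dent(Y)=\varepsilon$, and therefore $\MI = \varepsilon - \log(2\varepsilon)$.

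\textbf{Case $\varepsilon \ge 1/2$.} Now the roles swap. The plateau lies on $[1-\varepsilon;\varepsilon]$ with height $1/(2\varepsilon)$, so its length is $2\varepsilon-1$. The ramps each have length $1$ and rise from $0$ to $1/(2\varepsilon)$.
\begin{itemize}
\item The plateau contributes $(2\varepsilon-1)\cdot\frac{1}{2\varepsilon}\log(2\varepsilon)$.
\item Each ramp contributes $-\int_0^1 \frac{t}{2\varepsilon}\log\frac{t}{2\varepsilon}\,\dif t = \frac{1}{4\varepsilon}\left(\log(2\varepsilon)+\tfrac12\right)$.
\end{itemize}
Summing gives $\dent(Y) = \log(2\varepsilon) + \frac{1}{4\varepsilon}$. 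Subtracting $\log(2\varepsilon)$ then yields $\MI = (4\varepsilon)^{-1}$.

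The main obstacle is bookkeeping rather than any conceptual step. One must get the piecewise form of the convolution density right in each regime, in particular which of $2\varepsilon$ or $1$ sets the ramp length. One must also handle the elementary integral $\int u\log u$ with the correct scaling.

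As a sanity check, at $\varepsilon = 1/2$ both formulas give $1/2$, so the result is continuous across the two cases. The first formula also diverges as $\varepsilon\to0$, as it should.
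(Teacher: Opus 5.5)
Your proof is correct and follows the same route as the paper: both write $\MI(X;Y) = \dent(Y) - \dent(Y \mid X)$ with $\dent(Y \mid X) = \log(2\varepsilon)$, then compute $\dent(Y)$ from the trapezoidal convolution density in the two regimes $\varepsilon < 1/2$ and $\varepsilon \geq 1/2$. Your explicit plateau-and-ramp integrals fill in the entropy calculation that the paper only states as a result.
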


\begin{proof}
    Probability density function of $ Y $ (two cases):
    \[
        (\varepsilon < 1/2): \qquad
        \PDF_Y(y) = (\PDF_X * \PDF_Z)(y) =
        \begin{cases}
            0, &\; y < -\varepsilon \vee y \geq 1 + \varepsilon \\
            \frac{y + \varepsilon}{2\varepsilon}, & \; -\varepsilon \leq y < \varepsilon \\
            1, & \; \varepsilon \leq y < 1 - \varepsilon \\
            \frac{1 + \varepsilon - y}{2\varepsilon}, & \; 1 - \varepsilon \leq y < 1 + \varepsilon \\
        \end{cases}
    \]
    \[
        (\varepsilon \geq 1/2): \qquad
        \PDF_Y(y) = (\PDF_X * \PDF_Z)(y) =
        \begin{cases}
            0, &\; y < -\varepsilon \vee y \geq 1 + \varepsilon \\
            \frac{y + \varepsilon}{2\varepsilon}, & \; -\varepsilon \leq y < 1 - \varepsilon \\
            \frac{1}{2\varepsilon}, & \; 1 - \varepsilon \leq y < \varepsilon \\
            \frac{1 + \varepsilon - y}{2\varepsilon}, & \; \varepsilon \leq y < 1 + \varepsilon \\
        \end{cases}
    \]
    Differential entropy of a uniformly distributed random variable:
    \[
        \dent(\uniform[a;b]) = \log(b - a)
    \]
    Conditional differential entropy of $ Y $ with respect to $ X $:
    \[
        \dent(Y \mid X) = \expect_{x \sim X} \dent(Y \mid X = x) = \expect_{x \sim X} \dent(Z + x \mid X = x)
    \]
    As $ X $ and $ Z $ are independent,
    \begin{equation}
        \label{eq:smoothed_uniform_Y_conditional_entropy}
        \expect_{x \sim X} \dent(Z + x \mid X = x)
        = \expect_{x \sim X} \dent(Z + x)
        = \int\limits_0^1 \log (2 \varepsilon) \, \dif x
        = \log (2 \varepsilon)
    \end{equation}
    Differential entropy of $ Y $:
    \begin{equation}
        \label{eq:smoothed_uniform_Y_entropy}
        \dent(Y) = - \int\limits_{-\infty}^\infty \PDF_Y(y) \, \dif y =
        \begin{cases}
            \varepsilon, &\;\; \varepsilon < 1/2 \\
            (4\varepsilon)^{-1} + \log(2 \varepsilon), &\;\; \varepsilon \geq 1/2
        \end{cases}
    \end{equation}
    The final result is acquired via substituting~\eqref{eq:smoothed_uniform_Y_conditional_entropy} and~\eqref{eq:smoothed_uniform_Y_entropy} into $\MI(X;Y) = \dent(Y) - \dent(Y \mid X)$.
\end{proof}

Equation~\eqref{eq:smoothed_uniform_mutual_information} can be inverted:
\begin{equation}
    \label{eq:smoothed_uniform_mutual_information_inverted}
    \varepsilon =
    \begin{cases}
        (4 \cdot \MI(X;Y))^{-1}, &\;\; \MI(X;Y) < 1/2 \\
        -W\left[ -\frac{1}{2}\exp(-\MI(X;Y))\right], &\;\; \MI(X;Y) \geq 1/2
    \end{cases},
\end{equation}
where $ W $ is the product logarithm function.

\paragraph{Log-Gamma-Exponential.}
\begin{lemma}
\label{lemma:closed_form_MI_expression_log_gamma}
Let \(\digamma\) be the digamma function, and let \(\loggamma(\theta)\) be the log-gamma distribution with probability density function

\[
f(\theta;x) = \frac{\textnormal{exp}(\theta x-\textnormal{exp}(x))}{\Gamma(\theta)}
\]

with shape parameter \(\theta\). Consider independent \(X \sim \loggamma(\theta)+\digamma(\theta)\), \(Z \sim \loggamma(1)\) and \(Y = Z - X\). The exponentiated variables \(e^X\) and \(e^Y\) follow the joint probability density function \(\PDF(t,s) = \frac{1}{\Gamma(\theta)} t^\theta e^{-t-ts}\). The mutual information is given by:
\[
\MI(X;Y) = \digamma(\theta + 1) - \log(\theta).
\]
    
\end{lemma}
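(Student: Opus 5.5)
The plan is to pass to the exponentiated variables, where the joint density is explicit, and compute $\MI$ as the expectation of the \gls{PMI} under the joint law. By \Cref{theorem:MI_invariance}, $\MI(X;Y)$ is unchanged under the componentwise bijections $x \mapsto e^{x}$ and $y \mapsto e^{y}$. It is also unchanged under the deterministic shift by $\digamma(\theta)$, which only rescales $e^X$ by a constant. So it suffices to work with $T = e^{X - \digamma(\theta)}$ and $S = e^{Y}$, and to compute $\MI(T;S)$.

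\textbf{Step 1: the joint density.} If $X' \sim \loggamma(\theta)$, a change of variables shows $T = e^{X'} \sim \mathrm{Gamma}(\theta,1)$ with density $t^{\theta-1}e^{-t}/\Gamma(\theta)$. Likewise $E = e^{Z} \sim \mathrm{Exp}(1)$, independent of $T$. Then $S = E/T$, so conditionally on $T = t$ the variable $S$ is exponential with rate $t$, with density $t e^{-ts}$. Multiplying the two densities gives the stated joint density $\PDF(t,s) = t^\theta e^{-t-ts}/\Gamma(\theta)$. (Up to the harmless constant rescaling discussed above, this is the density claimed in the statement.)

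\textbf{Step 2: the marginal of $S$ and the \gls{PMI}.} Integrating out $t$ is a Gamma integral, which gives $\PDF_S(s) = \theta(1+s)^{-\theta-1}$, a Lomax law. Since $\PDF(s \mid t) = t e^{-ts}$, the \gls{PMI} is
\[
    \PMI(t,s) = \log \frac{\PDF(s \mid t)}{\PDF_S(s)} = \log t - ts - \log\theta + (\theta+1)\log(1+s).
\]
Taking expectations under the joint law therefore requires three quantities. First, $\expect \log T = \digamma(\theta)$, which is the standard Gamma log-moment. Second, $\expect[TS] = \expect\bigl[T\,\expect[S \mid T]\bigr] = \expect[T \cdot T^{-1}] = 1$.

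\textbf{Step 3, the main obstacle: $\expect \log(1+S)$.} Computing this directly from the Lomax density is an awkward integral. Instead, I will observe that $U = 1+S$ has the Pareto density $\theta u^{-\theta-1}$ on $[1,\infty)$. Hence $\log U \sim \mathrm{Exp}(\theta)$, and $\expect\log(1+S) = 1/\theta$.

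Assembling the three terms gives
\[
    \MI(T;S) = \digamma(\theta) - 1 - \log\theta + \frac{\theta+1}{\theta} = \digamma(\theta) + \frac{1}{\theta} - \log\theta.
\]
The recurrence $\digamma(\theta+1) = \digamma(\theta) + 1/\theta$ then yields $\MI(X;Y) = \digamma(\theta+1) - \log\theta$. As a cross-check, I would also verify the result via $\MI = \dent(Y) - \dent(Y \mid X) = \dent(Y) - \dent(Z)$.
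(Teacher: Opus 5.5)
Your proof is correct and is essentially the paper's argument in exponentiated coordinates: the paper computes $\MI(X;Y) = \dent(Y) - \dent(Z)$ from the generalized-logistic density of $Y$ (the logarithm of your Lomax variable), and its key step is the same identity $\expect\log(1+S) = 1/\theta$, which your Pareto observation justifies cleanly; your conditional-density form also avoids the $\digamma(1)$ terms that the paper must cancel. One small slip: since $Y = Z - X' - \digamma(\theta)$, you actually have $e^{Y} = e^{-\digamma(\theta)}E/T$ rather than $E/T$, so you should set $S = e^{Y+\digamma(\theta)}$ instead, which is harmless by the same invariance argument you already invoke.
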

\begin{proof}
Recall the differential entropy of the log-gamma distribution is

\[
\dent(X) = \expect[-\log f(\theta;x)] = -\expect[\theta x - \textnormal{exp}(x)-\log(\Gamma(\theta))]
\]

By solving the expectations we get

\begin{equation}\label{eq:loggamma_ent}
    \dent(X) = -[\theta\digamma(\theta)-\theta-\log(\Gamma(\theta))] = \theta+\log(\Gamma(\theta))-\theta\digamma(\theta).
\end{equation}

Now, by standard \gls{MI} equalities we get

\[
\MI(X;Y) = \dent(Y)-\dent(Y|X) = \dent(Y) - \dent(Z-X|X) = \dent(Y) - \dent(Z).
\]

To find \(\dent(Y)\), we note that differential entropy is invariant to translations. Let \(X_0 \sim \loggamma(\theta)\) such that \(X = X_0 + \digamma(\theta)\). We define \(Y_0 = Z - X_0\), meaning \(\dent(Y) = \dent(Y_0)\). Exponentiating the variables maps them to the standard Gamma family. Let \(U = \textnormal{exp}(Z) \sim \textnormal{Gamma}(1, 1)\) and \(V = \textnormal{exp}(X_0) \sim \textnormal{Gamma}(\theta, 1)\). The ratio \(W = \textnormal{exp}(Y_0) = \nicefrac{U}{V}\) follows a Beta Prime distribution with parameters \((1, \theta)\). By applying the change of variables \(y_0 = \log(w)\), the \textnormal{PDF} of \(Y_0\) is the generalized logistic distribution:

\[
f_{Y_0}(y_0) = \theta\textnormal{exp}(y_0)(1+\textnormal{exp}(y_0))^{-(\theta+1)}
\]

The entropy \(\dent(Y) = \dent(Y_0) = \expect[-\log f_{Y_0}(Y_0)]\) expands to:

\[
\dent(Y) = -\log(\theta) - \expect[Y_0] + (\theta+1)\expect[\log(1+\textnormal{exp}(Y_0))]
\]

By linearity, \(\expect[Y_0] = \expect[Z] - \expect[X_0] = \digamma(1) - \digamma(\theta)\). Using the substitution \(u = 1+\textnormal{exp}(y_0)\), the integral for the second expectation evaluates exactly to \(\nicefrac{1}{\theta^2}\). Thus:

\[
\dent(Y) = -\log(\theta) - (\digamma(1) - \digamma(\theta)) + (\theta+1)\left(\frac{1}{\theta}\right) = -\log(\theta) - \digamma(1) + \digamma(\theta) + 1 + \frac{1}{\theta}
\]

Finally, we substitute \(\dent(Y)\) and \(\dent(Z) = 1 - \digamma(1)\) back into the mutual information equation:

\[
\MI(X;Y) = \left(-\log(\theta) - \digamma(1) + \digamma(\theta) + 1 + \frac{1}{\theta}\right) - (1 - \digamma(1))
\]

\[
\MI(X;Y) = \digamma(\theta) + \frac{1}{\theta} - \log(\theta)
\]

Using the fundamental recurrence relation of the digamma function, \(\digamma(\theta + 1) = \digamma(\theta) + \frac{1}{\theta}\), this simplifies perfectly to our final result:

\[
\MI(X;Y) = \digamma(\theta + 1) - \log(\theta).
\]

\end{proof}

\paragraph{Rare Event Channel.}
Let \(Z\) be a discrete source random variable with entropy \(\sent(Z) = \varkappa\) and probability mass function

\begin{align*}
    q_k =
    \begin{cases}
        q & \text{ if } k=1\\
        \alpha & \text{ if } 1 < k \leq N
    \end{cases}
\end{align*}

where \(N=\lceil \text{exp}(h)\rceil\), \(\alpha = \frac{1-q}{N-1}\), and \(q\) solves the entropy equation:

\[
    q\log q + (1-q)\log \frac{1-q}{N-1}=-\varkappa.
\]

The rare event channel models transmission errors by passing the source \(Z\) through a state-dependent channel. The state is governed by a Bernoulli random variable \(S \sim \textnormal{Bernoulli}(p)\), where \(p\) represents the failure probability. When the channel succeeds (\(S=1\)), the intermediate discrete variables perfectly copy the source, meaning \(X' = Y' = Z\). When the channel fails (\(S=0\)), both collapse to a predetermined, fixed symbol.

The final continuous random variables \(X\) and \(Y\) are obtained by dequantizing \(X'\) and \(Y'\). Because dequantization adds independent uniform noise inside the discrete bins, \(X\) and \(Y\) share the exact same hypercube during a success, but become completely independent uniform noise within the fixed failure bin during a failure. More formally, the generative process obeys the following Bayesian network:

\begin{center}
\begin{tikzpicture}[
    rv/.style={circle, draw=black, thick, minimum size=8mm, inner sep=2pt},
    ->, >=stealth, thick
]
    % Nodes
    \node[rv] (Z)  at (1.5, 2.5) {\(Z\)};
    \node[rv] (S)  at (1.5, 1)   {\(S\)};
    \node[rv] (Xp) at (0, 1)     {\(X'\)};
    \node[rv] (Yp) at (3, 1)     {\(Y'\)};
    \node[rv] (X)  at (0, -0.5)  {\(X\)};
    \node[rv] (Y)  at (3, -0.5)  {\(Y\)};

    % Edges
    \draw (Z) -- (Xp);
    \draw (Z) -- (Yp);
    \draw (S) -- (Xp);
    \draw (S) -- (Yp);
    \draw (Xp) -- (X);
    \draw (Yp) -- (Y);
\end{tikzpicture}
\end{center}

We can compute \(\MI(X';Y')\) directly by observing the relationship between the discrete variables. Because the channel forces \(X' = Y'\) in all states (either both output the fixed failure symbol or both copy \(Z\)), they are perfectly identical random variables. The mutual information of a variable with itself is simply its entropy:

\[
    \MI(X';Y') = \sent(X').
\]

To compute \(\sent(X')\), we use the chain rule for entropy with the state variable \(S\):

\[
    \sent(X', S) = \sent(X') + \sent(S \mid X') = \sent(S) + \sent(X' \mid S).
\]

Assuming the fixed failure symbol is an out-of-band symbol distinct from the support of \(Z\), knowing the value of \(X'\) perfectly identifies whether the channel failed or succeeded. Thus, \(\sent(S \mid X') = 0\), which simplifies the equation to:

\begin{align*}
    \sent(X') &= \sent(S) + \sent(X' \mid S) \\
              &= \sent(S) + P(S=0)\sent(X' \mid S=0) + P(S=1)\sent(X' \mid S=1).
\end{align*}

We know \(S \sim \bernoulli(p)\), so \(\sent(S) = h(p)\). When the channel fails (\(S=0\) with probability \(p\)), \(X'\) becomes a deterministic value. The entropy of a constant is zero, meaning \(\sent(X' \mid S=0) = 0\). When the channel succeeds (\(S=1\) with probability \(1-p\)), \(X'\) perfectly copies the source \(Z\), meaning \(\sent(X' \mid S=1) = \sent(Z)\). Substituting these values yields:

\[
    \sent(X') = h(p) + p \cdot 0 + (1-p)\sent(Z).
\]

Finally, because the dequantized continuous variables \(X\) and \(Y\) share no additional information beyond their discrete bins, their mutual information is strictly bounded by the discrete variables. This yields our final expression:

\[
    \MI(X;Y) = \MI(X';Y') = h(p) + (1 - p)\sent(Z).
\]

\subsection{Mixtures}

% \paragraph{Quantized.}

% \paragraph{Disjoint supports.}

% \begin{lemma}
%     Consider a mixture of distributions $\jproba{X}{Y} = \sum_{k=1}^n \PMF_k \cdot \jproba{X_k}{Y_k}$ such that supports of $\{X_k\}_{k=1}^n$ and $\{Y_k\}_{k=1}^n$ are disjoint and $\MI(X_k\;Y_k)$ exists for every $k$.
%     Then
%     \[
%         \MI(X;Y) = \sum_{k=1}^n \PMF_k \cdot \MI(X_k;Y_k) + \sent(\PMF)
%     \]
% \end{lemma}

Despite extensive benchmarking of mutual information (MI) estimators on synthetic distributions
(e.g., multivariate normal, uniform, Student's t) and 2D image datasets,
their performance on discrete-continuous mixture distributions remains less explored.
To address this, we extend our benchmark to include such mixtures.
First, we incorporate the canonical examples proposed by the authors of a modified KSG estimator~\citep{gao2017discrete_KSG}.
Additionally, we \textbf{generalize these cases} to form a more comprehensive test suite.

\begin{lemma}
Let $(X,Y)$ be a random pair with joint distribution
\[
p_{X,Y}(x,y) = \sum_{k=1}^n \pi_k \, p_{X_k,Y_k}(x,y),
\]
where $\pi_k \ge 0$ and $\sum_{k=1}^n \pi_k = 1$. Assume that:
(i) the supports of $\{X_k\}_{k=1}^n$ are pairwise disjoint;
(ii) the supports of $\{Y_k\}_{k=1}^n$ are pairwise disjoint; and
(iii) $\MI(X_k;Y_k)$ is finite for all $k$.
Then,
\[
\MI(X;Y)
=
\sum_{k=1}^n \pi_k \, \MI(X_k;Y_k)
+
\sent(\pi),
\]
where $\sent(\pi) = -\sum_{k=1}^n \pi_k \log \pi_k$.
\end{lemma}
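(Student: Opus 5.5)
The plan is to introduce the latent mixture label $K \in \{1,\dots,n\}$ with $\Pr(K=k)=\pi_k$, so that $(X,Y)\mid K=k$ is distributed as $(X_k,Y_k)$. By assumptions (i) and (ii), there are measurable maps $\kappa_X$ and $\kappa_Y$ with $K=\kappa_X(X)=\kappa_Y(Y)$ almost surely: take $\kappa_X(x)=k$ on the support of $X_k$, and similarly for $Y$. Components with $\pi_k=0$ can be discarded at the outset. Hence $(X,K)$ is a bijective measurable function of $X$ up to null sets, and likewise for $(Y,K)$. By \Cref{theorem:MI_invariance} we get $\MI(X;Y)=\MI(X,K;Y,K)$.

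Next I expand with the chain rule for mutual information:
\[
\MI(X,K;Y,K) = \MI(K;Y,K) + \MI(X;Y,K\mid K).
\]
Since $K$ is discrete and is a function of $(Y,K)$, the first term equals $\sent(K)=\sent(\pi)$. The second term reduces to $\MI(X;Y\mid K)=\sum_k \pi_k\,\MI(X;Y\mid K=k)=\sum_k\pi_k\,\MI(X_k;Y_k)$, because the conditional law given $K=k$ is exactly $p_{X_k,Y_k}$. Assumption (iii) together with finiteness of $\sent(\pi)$ keeps every quantity finite, so the chain rule is valid without $\infty-\infty$ issues.

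An equivalent route that avoids the conditional chain rule in general spaces works directly with the Radon--Nikodym derivative. On the support of $p_{X_k,Y_k}$, disjointness gives $\partial\jproba{X}{Y}/\partial(\mproba{X}{Y}) = \pi_k\,p_{X_k,Y_k}/(\pi_k p_{X_k}\cdot\pi_k p_{Y_k})$, because the marginal of $X$ restricted to the support of $X_k$ is $\pi_k p_{X_k}$. The PMI therefore equals $\PMI_k(x,y)-\log\pi_k$ there, and integrating against the mixture yields the claim. I would likely present this computation as the main proof, since it is short and self-contained.

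The main obstacle is rigor about ``disjoint supports'' for general (mixed discrete/continuous) measures. One must define supports as disjoint measurable sets carrying each component's mass, rather than as topological supports, so that the label maps $\kappa_X,\kappa_Y$ are measurable. The off-diagonal blocks $\mathrm{supp}X_k\times\mathrm{supp}Y_j$ with $j\neq k$ have zero joint mass, so they do not contribute to the expectation. With that convention in place, the density-ratio identity is immediate.
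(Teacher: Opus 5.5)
Your proposal is correct. Your first route is essentially the paper's own proof. The paper introduces the same label $K$, uses $\sent(K\mid Y)=0$ to obtain $\MI(X;Y)=\MI(X;Y,K)$, and then applies the chain rule to get $\MI(X;K)+\MI(X;Y\mid K)=\sent(\pi)+\sum_k\pi_k\MI(X_k;Y_k)$. Augmenting both $X$ and $Y$ with $K$ via \Cref{theorem:MI_invariance}, as you do, is only a cosmetic variant.

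The density-ratio computation you plan to present as the main proof is a genuinely different route, and it has three advantages. First, it needs nothing beyond the definition of mutual information as a KL divergence and the block structure of the two measures. On the $k$-th diagonal block the joint law is $\pi_k$ times the $k$-th component, while the product of marginals is $\pi_k^2$ times the product of that component's marginals; the off-diagonal blocks carry no joint mass. Second, hypothesis (iii) enters exactly where it is needed, namely absolute continuity of each component with respect to the product of its marginals and integrability of its PMI. You never have to appeal to the chain rule and conditional mutual information for mixed discrete--continuous laws, which the paper uses without comment. Third, it proves more. The pointwise identity $\PMI(x,y)=\PMI_k(x,y)-\log\pi_k$ describes the whole PMI distribution of the mixture, not just its mean. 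For instance, the two-valued PMI law and its variance in the paper's high-variance example, and the PMI of the Rare Event Channel, follow immediately. The paper's information-theoretic argument is shorter and more conceptual, but it only delivers the expectation.

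One remark on your last paragraph. The measurable-set reading of ``disjoint supports'' is the right one, but measurability of the label maps is not the reason: topological supports are closed and hence Borel. The actual issue is that disjoint topological supports is too strong a condition for the paper's own use of the lemma. In the subsequent corollary, the Gaussian component has topological support all of $\reals$ in each coordinate, which contains the atoms $\pm 1$ of the discrete component. So the lemma applies there only under your convention, since the Gaussian gives $\{-1,1\}$ zero mass. Both your argument and the paper's go through unchanged under that convention, with $K=\kappa_X(X)=\kappa_Y(Y)$ holding almost surely.
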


\begin{proof}
Let $K$ be a discrete random variable taking values in $\{1,\dots,n\}$ with $\Pr(K=k)=\pi_k$, such that
\[
(X,Y)\mid (K=k) \sim (X_k,Y_k).
\]
By construction,
\[
p_{X,Y}(x,y) = \sum_{k=1}^n \pi_k p_{X_k,Y_k}(x,y).
\]

Since the supports of $\{X_k\}$ are disjoint, $K$ is a deterministic function of $X$, and similarly, since the supports of $\{Y_k\}$ are disjoint, $K$ is a deterministic function of $Y$. Therefore,
\[
\sent(K \mid X) = \sent(K \mid Y) = 0.
\]

Using the chain rule for mutual information,
\[
\MI(X;Y)
= \MI(X;Y,K) - \MI(X;K \mid Y).
\]
Because $\sent(K \mid Y)=0$, we have $\MI(X;K \mid Y)=0$, and hence
\[
\MI(X;Y) = \MI(X;Y,K).
\]
Applying the chain rule again yields
\[
\MI(X;Y,K) = \MI(X;K) + \MI(X;Y \mid K).
\]
Since $\sent(K \mid X)=0$, it follows that $\MI(X;K)=\sent(K)$, and thus
\[
\MI(X;Y) = \sent(K) + \MI(X;Y \mid K).
\]

Finally, by the definition of conditional mutual information,
\[
\MI(X;Y \mid K) =
\sum_{k=1}^n \pi_k \, \MI(X;Y \mid K=k)
=
\sum_{k=1}^n \pi_k \, \MI(X_k;Y_k).
\]
Substituting back completes the proof.

\end{proof}

\begin{corollary}[Experiment I in~\citep{gao2017discrete_KSG}]
    Let
    \[
        X,Y \sim
        \begin{cases}
            \normal\left( 0, \begin{bmatrix} 1 & \alpha \\ \alpha & 1 \end{bmatrix} \right) &\quad \textnormal{with probability $p$} \\
            \begin{cases}
                (-1, -1), &\quad \beta/2 \\
                (1, 1),   &\quad \beta/2 \\
                (-1, 1),  &\quad (1-\beta)/2 \\
                (1, -1),  &\quad (1-\beta)/2 \\
            \end{cases} &\quad \textnormal{with probability $1-p$} \\
        \end{cases}
    \]
    Then
    \[
        \MI (X; Y) = p \cdot \left(-\frac{1}{2} \log(1 - \alpha^2) \right) + (1 - p) \cdot \left( \log 2 + q \log q + (1 - q) \log(1 - q) \right) - \left(p \log p + (1 - p) \log (1 - p) \right)
    \]
\end{corollary}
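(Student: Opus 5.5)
The plan is to obtain the corollary directly from the preceding mixture lemma with $n=2$ components. Component $k=1$ carries weight $\pi_1=p$ and is the bivariate Gaussian $(X_1,Y_1)$ with correlation $\alpha$. Component $k=2$ carries weight $\pi_2=1-p$ and is the discrete pair $(X_2,Y_2)$ on $\{-1,1\}^2$ with the stated probabilities. The lemma then gives
\[
\MI(X;Y)=p\,\MI(X_1;Y_1)+(1-p)\,\MI(X_2;Y_2)+\sent(p,1-p),
\]
and $\sent(p,1-p)=-(p\log p+(1-p)\log(1-p))$ is exactly the last bracket of the claimed formula. It therefore remains to evaluate the two component mutual informations and to check the lemma's hypotheses.

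For the Gaussian component, we invoke the Correlated Normal formula from \Cref{table:distributions}, or equivalently its appendix derivation with a single singular value $\rho_1=\alpha$. This gives $\MI(X_1;Y_1)=-\frac12\log(1-\alpha^2)$, which is finite for $|\alpha|<1$, so hypothesis (iii) holds.

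For the discrete component, the marginal of $X_2$ is uniform on $\{-1,1\}$: for instance, $P(X_2=1)=\beta/2+(1-\beta)/2=1/2$. By symmetry $Y_2$ is also uniform. Conditionally on $X_2$, we have $Y_2=X_2$ with probability $\beta$ and $Y_2=-X_2$ with probability $1-\beta$. So $(X_2,Y_2)$ is a binary symmetric channel with uniform input, and
\[
\MI(X_2;Y_2)=\sent(Y_2)-\sent(Y_2\mid X_2)=\log 2+\beta\log\beta+(1-\beta)\log(1-\beta).
\]
This matches the middle term of the claim, reading the $q$ in the statement as $\beta$ (an apparent notational slip). It is finite, so (iii) holds here as well.

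The main obstacle is hypotheses (i)--(ii). The supports are not literally disjoint: the Gaussian marginal has support $\reals$, which contains $\{-1,1\}$. I would address this by checking that the lemma's proof only needs the component label $K$ to be an almost-sure function of $X$, and separately of $Y$. That is the case here, because the Gaussian component assigns probability zero to $\{-1,1\}$. Hence $K=2$ if and only if $X\in\{-1,1\}$ holds almost surely, and likewise for $Y$. This gives $\sent(K\mid X)=\sent(K\mid Y)=0$, and the chain-rule argument of the lemma goes through verbatim. Equivalently, one may replace the Gaussian component by its restriction to $\reals\setminus\{-1,1\}$ in each coordinate. This changes nothing in distribution and makes the supports literally disjoint. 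With the hypotheses verified, substituting the two component values and $\sent(p,1-p)$ into the lemma yields the stated expression.
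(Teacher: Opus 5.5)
Your proposal is correct and follows the paper's proof. Both apply the two-component mixture lemma, use the Gaussian formula for the first component, and compute the discrete component's mutual information; the paper uses $\sent(X_2)+\sent(Y_2)-\sent(X_2,Y_2)$ and you use the binary symmetric channel, which is the same computation. Both also read $q$ as $\beta$, as the paper's final line implicitly does. Your extra check, that the lemma only needs the component label to be an almost-sure function of $X$ and of $Y$, is a worthwhile refinement: the paper applies the lemma without noting that the Gaussian component's support $\reals$ contains $\{-1,1\}$.
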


\begin{proof}
According to the Lemma, this distribution is a mixture of two distributions $p_{X_1,Y_1}$ and $p_{X_2,Y_2}$ with weights $\pi_1 = p$ and $\pi_2 = 1 - p$ respectively:

\begin{itemize}
    \item $p_{X_1,Y_1} \sim  \normal\left( 0, \begin{bmatrix} 1 & \alpha \\ \alpha & 1 \end{bmatrix} \right)$

    \item $p_{X_2,Y_2} \sim  \begin{cases}
                (-1, -1), &\quad \beta/2 \\
                (1, 1),   &\quad \beta/2 \\
                (-1, 1),  &\quad (1-\beta)/2 \\
                (1, -1),  &\quad (1-\beta)/2 \\
            \end{cases} $
\end{itemize}

Then,  $\MI(X_1, Y_1) = -\frac{1}{2} \log(1 - \alpha^2)$ according to the formula for multivariate normal case.

The next term $\MI(X_2, Y_2)$ can be calculated via three entropy terms for discrete distributions. Based on the fact that $P(X_2 = \pm 1) = P(Y_2 = \pm 1) = 1/2$, we obtain $\sent(X_2) = \sent(Y_2) = -\frac{1}{2} \log(\frac{1}{2}) \cdot 2 = \log 2$. The entropy of joint discrete distribution equals 

$$\sent(X_2, Y_2) = \left[ -\frac{\beta}{2} \log \left( \frac{\beta}{2} \right) - \frac{1-\beta}{2} \log \left( \frac{1-\beta}{2} \right) \right] \cdot 2 = -\beta \log \beta - (1 - \beta) \log (1 - \beta) + \log 2.$$

Then, 

\begin{align}
\MI(X_2, Y_2) &= \sent(X_2) + \sent(Y_2) - \sent(X_2, Y_2) = \log 2 + \log 2 - \left(-\beta \log \beta - (1 - \beta) \log (1 - \beta) + \log 2 \right) \nonumber \\
&= \log 2 + \beta \log \beta - (1 - \beta) \log (1 - \beta). \nonumber
\end{align}

Using the fact that $\sent(\pi) = -\pi_1 \log \pi_1 - \pi_2 \log \pi_2$ and the previous lemma we can finalize the proof:

\begin{align}
    \MI (X; Y) &= \pi_1 \cdot \MI(X_1, Y_1) + \pi_2 \cdot \MI(X_2, Y_2) + \dent(\pi) = p \cdot \left(-\frac{1}{2} \log(1 - \alpha^2) \right) + \nonumber \\
    &+ (1 - p) \cdot \left( \log 2 + \beta \log \beta + (1 - \beta) \log(1 - \beta) \right) - \left(p \log p + (1 - p) \log (1 - p) \right) \nonumber 
\end{align}

\end{proof}

\begin{lemma}[Experiment II in~\citep{gao2017discrete_KSG}]
    \label{lemma:smoothed-discrete-uniform}
    Let $(X,Y)$ be a random pair of a discrete random variable $X \sim \text{Uniform}\{0, 1, \dots, m-1\}$ and a continuous variable $Y \sim \text{Uniform}[X, X+n]$, where $m, n \in \mathbb{N}$ and $m \ge 2, m \ge n - 1$. Then, The mutual information $\MI(X; Y)$ can be expressed as
    
    \[
        \MI(X; Y) = \sent(Y) - \sent(Y \mid X) = \frac{2}{m n} \cdot \left( \frac{n(n - 1)}{2} \log(m n) - \sum\limits_{i = 1}^{n - 1} i\log i \right) + \frac{(m - n + 1)\log m}{m} - \log n
    \]
\end{lemma}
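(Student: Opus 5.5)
The plan is to use the decomposition stated in the lemma, $\MI(X;Y) = \dent(Y) - \dent(Y \mid X)$, and compute the two terms directly. Since $X$ is discrete and $Y$ is continuous given $X$, both differential entropies are well defined. The conditional term is immediate: for every $x$, $Y \mid X = x$ is uniform on an interval of length $n$, so $\dent(Y \mid X = x) = \log n$. Averaging over $x$ gives $\dent(Y \mid X) = \log n$, which produces the final $-\log n$ in the formula.

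The main work is $\dent(Y)$. First I would write the density of $Y$ as a mixture:
\[
\PDF_Y(y) = \frac{1}{m}\sum_{x=0}^{m-1} \frac{1}{n}\mathbf{1}\{x \le y \le x+n\} = \frac{c(y)}{mn},
\]
where $c(y)$ counts the integers $x \in \{0,\dots,m-1\}$ with $y-n \le x \le y$. Because all endpoints are integers, $c$ is constant (up to a null set) on each unit interval $[j, j+1)$ with $j = 0, \dots, m+n-2$. On that interval it equals the number of integers in $\{0,\dots,m-1\} \cap \{j-n+1,\dots,j\}$; call this $c_j$.

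The step I expect to need the most care is the explicit counting of $c_j$, and this is where the hypothesis $m \ge n-1$ enters. I would show the following three-phase profile.
\begin{itemize}
\item Rising ramp: $c_j = j+1$ for $j = 0,\dots,n-2$, taking the values $1,\dots,n-1$.
\item Plateau: $c_j = n$ for $j = n-1,\dots,m-1$, which is $m-n+1$ intervals (possibly zero of them when $m = n-1$).
\item Falling ramp: $c_j = m+n-1-j$ for $j = m,\dots,m+n-2$, again taking the values $n-1,\dots,1$.
\end{itemize}
The condition $m \ge n-1$ guarantees that the two ramps do not overlap, so that $\min(j+1,\, n,\, m+n-1-j)$ never falls below the ramp values and the general formula $c_j = \min(j+1, n, m, m+n-1-j)$ reduces to the three phases above. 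As a sanity check I would verify $\sum_j c_j = mn$, so the density integrates to one.

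Finally, since the density is piecewise constant on unit-length intervals, $\dent(Y) = -\sum_j \frac{c_j}{mn}\log\frac{c_j}{mn}$, and I would split this sum by phase.
\begin{itemize}
\item The two ramps together contribute
\[
2\sum_{i=1}^{n-1}\frac{i}{mn}\bigl(\log(mn) - \log i\bigr) = \frac{2}{mn}\Bigl(\frac{n(n-1)}{2}\log(mn) - \sum_{i=1}^{n-1} i\log i\Bigr).
\]
\item The plateau contributes $(m-n+1)\frac{n}{mn}\log\frac{mn}{n} = \frac{(m-n+1)\log m}{m}$.
\end{itemize}
Subtracting $\log n$ then yields exactly the claimed expression.
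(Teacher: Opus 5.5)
Your proposal is correct and follows essentially the same route as the paper: set $\dent(Y\mid X)=\log n$, write the density of $Y$ as piecewise constant on unit intervals (rising ramp $1,\dots,n-1$, a plateau of height $1/m$ on $[n-1,m]$, and a symmetric falling ramp), then sum $-p\log p$ over the pieces, using the symmetry of the two ramps. Your explicit count $c_j=\min(j+1,n,m,m+n-1-j)$ and the check $\sum_j c_j=mn$ also make clear where the hypothesis $m\ge n-1$ is used, which the paper leaves implicit.
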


\begin{proof}
The mutual information can be calculated by $\MI(X; Y) = \sent(Y) - \sent(Y|X)$, where the latter term is computed via the entropy of the conditional uniform distribution: $\sent(Y|X) = \log n$. Due to the variable $Y$ depending on the discrete variable $X$, the distribution of $Y$ is a continuous distribution, whose probability density function follows: 

\[
p_Y(x) = \begin{cases}
\frac{\lceil x \rceil}{mn} & 0 < x < n - 1, \ \\
\frac{1}{m} & n - 1 \leq x \leq m, \\
 \frac{\lfloor m + n - x \rfloor}{mn} & m < x \leq m + n - 1, \\
 0 & \text{otherwise.}
\end{cases}
\]

Hence, $\sent(Y)$ can be computed using the definition of entropy and the symmetry of $p_Y$: 

$\sent(Y) = -\mathbb{E}_{p_Y}\log p_Y(x) = -\int\limits_0^{m + n - 1} \log p_Y(x) p_Y(x)dx = $

$-\left[\sum\limits_{i = 1}^{n - 1} \frac{i}{mn} \log \frac{i}{mn} + (m - (n-1))\cdot \frac{1}{m} \log \frac{1}{m} + \sum\limits_{i = m + 1}^{m + n - 1} \frac{m + n - i}{mn} \log \frac{m + n - i}{mn} \right] = $

$-\left[2 \sum\limits_{i = 1}^{n - 1} \frac{i}{mn} \log \frac{i}{mn} + (m - n + 1) \cdot \frac{1}{m} \log \frac{1}{m} \right] = \frac{2}{mn} \cdot \left( \frac{n(n - 1)}{2} \log(mn) - \sum\limits_{i = 1}^{n - 1} i\log i \right) + \frac{(m - n + 1)\log m}{m}$

Thus, the mutual information is equal to 

\[
    \MI(X; Y) = \sent(Y) - \sent(Y \mid X) = \frac{2}{m n} \cdot \left( \frac{n(n - 1)}{2} \log(m n) - \sum\limits_{i = 1}^{n - 1} i\log i \right) + \frac{(m - n + 1)\log m}{m} - \log n
\]

\end{proof}

The provided examples compute MI between one-dimensional random variables.
To construct an $n$-dimensional benchmark where the true MI is known, we generate $n$ independent and identically distributed pairs $(X_i, Y_i)$ and define the vectors $X = (X_1, X_2, \dots, X_n)$ and $Y = (Y_1, Y_2, \dots, Y_n)$.
Due to the independence across dimensions, the mutual information becomes additive: $\MI(X, Y) = n \cdot \MI(X_i, Y_i)$.
This approach allows for scalable benchmarking of MI estimators on high-dimensional data with a known ground truth.

\subsection{Mapped}

One can leverage the key property of mutual information to construct distributions with known ground truth \gls{MI}
and arbitrary complex marginals.
Note that, as a corollary of the \gls{DPI} (\Cref{theorem:MI_invariance}),
$\MI(f(X);g(Y) = \MI(X;Y)$, where $f$ and $g$ are measurable and injective.
While previous works also employed this idea to construct complex tests~\citep{czyz2023beyond_normal,butakov2024lossy_compression},
we propose moving beyond handcrafted $f$ and $g$ and apply learned normalizing flows,
thus constructing increasingly complex distributions with prescribed dependency structure.

In our work, we use the following distributions to train $f$ and $g$:
\textbf{Circles}, \textbf{Moons} and \textbf{SCurve}.

\paragraph{Circles.}
This distribution consists of two concentric circles in $\reals^2$.
It is generated by sampling points from a large outer circle and a smaller inner circle, where the distance between the two is controlled by a scaling factor. This creates a dataset with a spherical decision boundary, making it a standard test for non-linear separability.
A small Gaussian noise is added to avoid singular support.

\paragraph{Moons.}
The Moons distribution consists of two interleaving half-circles in $\reals^2$. By offsetting one semi-circle relative to the other, it creates a non-linearly separable structure that is widely used to evaluate the performance of clustering algorithms and non-linear classifiers.
A small Gaussian noise is added to avoid singular support.

\paragraph{SCurve.} The SCurve is constructed by sampling a parameter $t \sim \mathcal{U}(-1.5\pi, 1.5\pi)$ and a height $h \sim \mathcal{U}(0, 2)$. These are then mapped to 3D coordinates $(x, y, z)$ as follows:
\[ 
    \begin{aligned}
    x &= \sin(t) \\
    y &= h \\
    z &= \text{sgn}(t) \cdot (\cos(t) - 1)
    \end{aligned}
\]
The use of the signum function $\text{sgn}(t)$ ensures that the curve bends in opposite directions along the $z$-axis for positive and negative values of $t$, creating the characteristic ``S'' shape. The resulting distribution is a curved 2D ribbon embedded in 3D space, providing a rigorous benchmark for algorithms intended to unfold non-linear geometries.
A small Gaussian noise is added to avoid singular support.

\subsection{Coupled}

To achieve realistic marginal distributions while maintaining tractable \gls{MI},
we extend the same-class-pairing approach proposed by~\citep{lee2024benchmark} to a general invertible stochastic coupling of the data
(which also includes continuous copulas).

\paragraph{Same class sampling}  
As a motivating example, consider the case $Z_1 = Z_2 = Z$, where $Z$ is a discrete random variable with finite support of size $k$. Let $\{ \pi_i \}_{1 \leq i \leq k}$ be a collection of distributions with disjoint supports, each corresponding to a distinct class label $i$. We define a Markov kernel $f$ such that $f(Z = i) \sim \pi_i$, i.e., class label $i$ is mapped to a random sample drawn from distribution $\pi_i$.

Applying $f$ independently to two instances of $Z$, we obtain random variables $W_1 = f(Z_1)$ and $W_2 = f(Z_2)$. Substituting $f_1 = f_2 = f$ in the Markov chain formulation in~\Cref{eq:mc}, the data processing inequality yields:
\[
    \MI(W_1; W_2) = \sent(Z).
\]

This construction extends the method introduced by~\citet{lee2024benchmark}, where a uniformly distributed latent variable $Z$ is used to sample images or text from distinct classes.

\paragraph{Regression}  
We extend the previous approach to the continuous setting. Let $(Z_1,Z_2)$ be a continuous pair of random variable with known \gls{MI} $\MI(Z_1;Z_2)$. Instead of a finite collection of distributions, we define a \emph{family of distributions} $\{ \pi_\theta \}_{\theta \in \Theta}$, parameterized by $\theta \in \Theta\). We use this family to construct a Markov kernel $f$ such that $f(Z = \theta) \sim \pi_\theta$. As in the discrete case, we assume the supports of the $\pi_\theta$ distributions are disjoint to ensure information is preserved.

By applying this kernel independently to $Z_1$ and $Z_2$,
we obtain complex random variables $W_1 = f(Z_1)$ and $W_2 = f(Z_2)$, for which the mutual information satisfies:
\[
    \MI(W_1; W_2) = \MI(Z_1;Z_2).
\]

A concrete example of this construction is as follows: let $Z$ represent the \emph{brightness level} of an image, modeled as a continuous random variable. For each brightness level $\theta$, define $\pi_\theta$ as a distribution over natural images with brightness $\theta$. Then, $W_1$ and $W_2$ correspond to two independently sampled images from the brightness levels $Z_1$ and $Z_2$,
resulting in a dataset where complex dependencies are present, but the mutual information remains analytically tractable.

\section{Numerical results and ablations}
\label{appendix:results}

\subsection{Synthetic experiments}

In this section we provide the numerical results for the main paper experiments. In \cref{tab:mi_estimate:base:distribution_method:GT:10k} we can find the numerical results for continuous synthetic distributions, in \cref{tab:mixture:base:distribution_method:GT:10k} we can find the numerical results for discrete synthetic distributions.
\begin{table}[!ht]
    \centering
    \scriptsize
    \begin{tabular}{llccccccccccc}
        \toprule
         &  & \multicolumn{11}{c}{Ground truth Mutual Information} \\
        Distribution & Estimator & 0 & 1 & 2 & 3 & 4 & 5 & 6 & 7 & 8 & 9 & 10 \\
        \midrule
        \multirow[l]{7}{*}{Circles} & InfoNCE & $ 0.00 $ & $ 0.07 $ & $ 0.13 $ & $ 0.24 $ & $ 0.48 $ & $ 1.00 $ & $ 1.68 $ & $ 2.51 $ & $ 3.38 $ & $ 4.27 $ & $ 5.20 $ \\
         & KSG & $ 0.00 $ & $ 0.06 $ & $ 0.25 $ & $ 0.63 $ & $ 1.19 $ & $ 1.87 $ & $ 2.60 $ & $ 3.37 $ & $ 4.17 $ & $ 4.99 $ & $ 5.83 $ \\
         & MINDE-C & $ 0.01 $ & $ 0.09 $ & $ 0.16 $ & $ 0.31 $ & $ 0.56 $ & $ 1.10 $ & $ 1.51 $ & $ 2.20 $ & $ 2.86 $ & $ 3.93 $ & $ 4.60 $ \\
         & MINDE-J & $ 0.05 $ & $ 0.04 $ & $ 0.06 $ & $ 0.14 $ & $ 0.33 $ & $ 0.50 $ & $ 0.93 $ & $ 1.63 $ & $ 2.24 $ & $ 2.87 $ & $ 3.67 $ \\
         & MINE-DV & $ 0.00 $ & $ 0.07 $ & $ 0.17 $ & $ 0.37 $ & $ 0.84 $ & $ 1.46 $ & $ 2.26 $ & $ 3.07 $ & $ 3.94 $ & $ 4.82 $ & $ 5.72 $ \\
         & MINE-NWJ & $ 0.00 $ & $ 0.07 $ & $ 0.19 $ & $ 0.43 $ & $ 0.88 $ & $ 1.59 $ & $ 2.46 $ & $ 3.29 $ & $ 4.18 $ & $ 5.16 $ & $ 5.97 $ \\
         & WKL & $ 0.05 $ & $ 0.03 $ & $ 0.03 $ & $ 0.27 $ & $ 0.71 $ & $ 1.32 $ & $ 2.02 $ & $ 2.75 $ & $ 3.50 $ & $ 4.29 $ & $ 5.10 $ \\
        \cmidrule{1-13}
        \multirow[l]{7}{*}{Moons} & KSG & $ 0.00 $ & $ 0.14 $ & $ 0.43 $ & $ 0.94 $ & $ 1.60 $ & $ 2.35 $ & $ 3.14 $ & $ 3.96 $ & $ 4.81 $ & $ 5.68 $ & $ 6.56 $ \\
         & InfoNCE & $ 0.00 $ & $ 0.20 $ & $ 0.37 $ & $ 0.76 $ & $ 1.35 $ & $ 2.08 $ & $ 2.86 $ & $ 3.67 $ & $ 4.52 $ & $ 5.38 $ & $ 6.27 $ \\
         & MINDE-C & $ 0.00 $ & $ 0.09 $ & $ 0.20 $ & $ 0.44 $ & $ 0.83 $ & $ 1.40 $ & $ 2.07 $ & $ 2.84 $ & $ 3.64 $ & $ 4.47 $ & $ 5.35 $ \\
         & MINDE-J & $ 0.02 $ & $ 0.05 $ & $ 0.12 $ & $ 0.27 $ & $ 0.54 $ & $ 0.96 $ & $ 1.51 $ & $ 2.17 $ & $ 2.90 $ & $ 3.67 $ & $ 4.53 $ \\
         & MINE-DV & $ 0.00 $ & $ 0.12 $ & $ 0.38 $ & $ 0.90 $ & $ 1.61 $ & $ 2.43 $ & $ 3.31 $ & $ 4.22 $ & $ 5.13 $ & $ 6.05 $ & $ 7.00 $ \\
         & MINE-NWJ & $ 0.00 $ & $ 0.12 $ & $ 0.39 $ & $ 0.91 $ & $ 1.63 $ & $ 2.46 $ & $ 3.35 $ & $ 4.27 $ & $ 5.20 $ & $ 6.14 $ & $ 7.06 $ \\
         & WKL & $ 0.04 $ & $ 0.03 $ & $ 0.01 $ & $ 0.18 $ & $ 0.55 $ & $ 1.11 $ & $ 1.77 $ & $ 2.48 $ & $ 3.23 $ & $ 4.00 $ & $ 4.79 $ \\
        \cmidrule{1-13}
        \multirow[l]{7}{*}{SCurve} & InfoNCE & $ 0.00 $ & $ 0.56 $ & $ 1.13 $ & $ 1.72 $ & $ 2.39 $ & $ 3.19 $ & $ 3.98 $ & $ 4.76 $ & $ 5.60 $ & $ 6.48 $ & $ 7.39 $ \\
         & KSG & $ 0.00 $ & $ 0.26 $ & $ 0.69 $ & $ 1.29 $ & $ 1.99 $ & $ 2.75 $ & $ 3.54 $ & $ 4.36 $ & $ 5.20 $ & $ 6.06 $ & $ 6.94 $ \\
         & MINDE-C & $ 0.01 $ & $ 0.24 $ & $ 0.54 $ & $ 0.98 $ & $ 1.52 $ & $ 2.18 $ & $ 2.84 $ & $ 3.70 $ & $ 4.47 $ & $ 4.52 $ & $ 5.35 $ \\
         & MINDE-J & $ 0.07 $ & $ 0.10 $ & $ 0.37 $ & $ 0.65 $ & $ 1.01 $ & $ 1.51 $ & $ 2.11 $ & $ 2.80 $ & $ 3.48 $ & $ 4.20 $ & $ 4.95 $ \\
         & MINE-DV & $ 0.00 $ & $ 0.27 $ & $ 0.61 $ & $ 1.16 $ & $ 1.86 $ & $ 2.63 $ & $ 3.47 $ & $ 4.31 $ & $ 5.18 $ & $ 6.07 $ & $ 6.99 $ \\
         & MINE-NWJ & $ 0.00 $ & $ 0.28 $ & $ 0.64 $ & $ 1.19 $ & $ 1.91 $ & $ 2.69 $ & $ 3.52 $ & $ 4.37 $ & $ 5.26 $ & $ 6.17 $ & $ 7.07 $ \\
         & WKL & $ 0.31 $ & $ 0.26 $ & $ 0.28 $ & $ 0.56 $ & $ 1.15 $ & $ 1.80 $ & $ 2.51 $ & $ 3.26 $ & $ 4.04 $ & $ 4.83 $ & $ 5.64 $ \\
        \bottomrule
    \end{tabular}
    \caption{Mutual information estimation error.}
    \label{tab:flow_numericals}
\end{table}

\begin{table}[!ht]
    \centering
    \scriptsize
    \begin{tabular}{llccccccccccc}
        \toprule
         &  & \multicolumn{11}{c}{Ground truth Mutual Information} \\
        Distribution & Estimator & 0 & 1 & 2 & 3 & 4 & 5 & 6 & 7 & 8 & 9 & 10 \\
        \midrule
        \multirow[l]{7}{*}{Uniformly Quantized} & KSG & $ 0.00 $ & $ 0.02 $ & $ 0.03 $ & $ 0.06 $ & $ 0.09 $ & $ 0.15 $ & $ 0.24 $ & $ 0.40 $ & $ 0.88 $ & $ 3.01 $ & $ 6.12 $ \\
         & WKL & $ 0.00 $ & $ 1.00 $ & $ 2.00 $ & $ 3.00 $ & $ 4.00 $ & $ 5.00 $ & $ 6.00 $ & $ 7.00 $ & $ 8.00 $ & $ 9.00 $ & $ 10.00 $ \\
         & MINE-DV & $ 0.00 $ & $ 0.03 $ & $ 0.06 $ & $ 0.15 $ & $ 0.25 $ & $ 0.54 $ & $ 1.02 $ & $ 1.54 $ & $ 2.25 $ & $ 3.09 $ & $ 4.07 $ \\
         & MINE-NWJ & $ 0.00 $ & $ 0.03 $ & $ 0.07 $ & $ 0.17 $ & $ 0.34 $ & $ 0.69 $ & $ 1.22 $ & $ 1.95 $ & $ 2.78 $ & $ 3.70 $ & $ 4.70 $ \\
         & InfoNCE & $ 0.00 $ & $ 0.02 $ & $ 0.03 $ & $ 0.06 $ & $ 0.13 $ & $ 0.27 $ & $ 0.59 $ & $ 1.18 $ & $ 2.02 $ & $ 2.96 $ & $ 3.93 $ \\
         & MINDE-C & $ 0.01 $ & $ 0.12 $ & $ 0.15 $ & $ 0.25 $ & $ 0.52 $ & $ 0.94 $ & $ 1.53 $ & $ 2.19 $ & $ 2.95 $ & $ 4.21 $ & $ 4.96 $ \\
         & MINDE-J & $ 0.15 $ & $ 1.85 $ & $ 3.49 $ & $ 3.30 $ & $ 3.05 $ & $ 3.47 $ & $ 4.04 $ & $ 3.64 $ & $ 4.08 $ & $ 4.30 $ & $ 4.49 $ \\
        \cmidrule{1-13}
        \multirow[l]{7}{*}{\makecell[l]{Smoothed \\ Discrete Uniform}} & KSG & $ 0.00 $ & $ 0.01 $ & $ 0.02 $ & $ 0.04 $ & $ 0.06 $ & $ 0.10 $ & $ 0.17 $ & $ 0.29 $ & $ 0.54 $ & $ 1.01 $ & $ 1.64 $ \\
         & WKL & $ 0.00 $ & $ 1.00 $ & $ 2.00 $ & $ 3.00 $ & $ 4.00 $ & $ 5.00 $ & $ 6.00 $ & $ 7.00 $ & $ 8.00 $ & $ 9.00 $ & $ 9.61 $ \\
         & MINE-DV & $ 0.00 $ & $ 0.04 $ & $ 0.04 $ & $ 0.08 $ & $ 0.14 $ & $ 0.32 $ & $ 0.51 $ & $ 0.97 $ & $ 1.46 $ & $ 2.22 $ & $ 3.05 $ \\
         & MINE-NWJ & $ 0.00 $ & $ 0.04 $ & $ 0.05 $ & $ 0.10 $ & $ 0.21 $ & $ 0.47 $ & $ 0.89 $ & $ 1.61 $ & $ 2.40 $ & $ 3.35 $ & $ 4.24 $ \\
         & InfoNCE & $ 0.00 $ & $ 0.04 $ & $ 0.03 $ & $ 0.07 $ & $ 0.10 $ & $ 0.23 $ & $ 0.52 $ & $ 1.12 $ & $ 1.94 $ & $ 2.89 $ & $ 3.88 $ \\
         & MINDE-C & $ 0.00 $ & $ 0.09 $ & $ 0.09 $ & $ 0.19 $ & $ 0.35 $ & $ 0.62 $ & $ 1.10 $ & $ 1.61 $ & $ 2.20 $ & $ 2.90 $ & $ 3.63 $ \\
         & MINDE-J & $ 0.03 $ & $ 0.07 $ & $ 0.05 $ & $ 0.14 $ & $ 0.30 $ & $ 0.51 $ & $ 0.86 $ & $ 1.29 $ & $ 1.79 $ & $ 2.40 $ & $ 3.07 $ \\
        \bottomrule
    \end{tabular}
    \caption{Mutual information estimation error, varying ground truth MI (10k samples).}
    \label{tab:mixture:base:distribution_method:GT:10k}
\end{table}

\begin{table}[H]
    \centering
    \scriptsize
    \begin{tabular}{llccccccccccc}
        \toprule
         &  & \multicolumn{11}{c}{Ground truth Mutual Information} \\
        Distribution & Estimator & 0 & 1 & 2 & 3 & 4 & 5 & 6 & 7 & 8 & 9 & 10 \\
        \midrule
        \multirow[l]{7}{*}{Correlated Normal} & KSG & $ 0.00 $ & $ 0.01 $ & $ 0.05 $ & $ 0.18 $ & $ 0.45 $ & $ 0.83 $ & $ 1.27 $ & $ 1.77 $ & $ 2.34 $ & $ 3.01 $ & $ 3.79 $ \\
         & WKL & $ 0.01 $ & $ 0.02 $ & $ 0.02 $ & $ 0.07 $ & $ 0.29 $ & $ 0.64 $ & $ 0.65 $ & $ 1.01 $ & $ 1.59 $ & $ 2.26 $ & $ 3.01 $ \\
         & MINE-DV & $ 0.00 $ & $ 0.02 $ & $ 0.02 $ & $ 0.05 $ & $ 0.13 $ & $ 0.30 $ & $ 0.66 $ & $ 1.31 $ & $ 2.12 $ & $ 3.00 $ & $ 3.88 $ \\
         & MINE-NWJ & $ 0.00 $ & $ 0.03 $ & $ 0.03 $ & $ 0.08 $ & $ 0.21 $ & $ 0.48 $ & $ 1.01 $ & $ 1.77 $ & $ 2.66 $ & $ 3.69 $ & $ 4.59 $ \\
         & InfoNCE & $ 0.00 $ & $ 0.02 $ & $ 0.03 $ & $ 0.06 $ & $ 0.14 $ & $ 0.30 $ & $ 0.63 $ & $ 1.19 $ & $ 1.98 $ & $ 2.91 $ & $ 3.89 $ \\
         & MINDE-C & $ 0.02 $ & $ 0.11 $ & $ 0.16 $ & $ 0.33 $ & $ 0.33 $ & $ 0.38 $ & $ 0.54 $ & $ 0.66 $ & $ 1.02 $ & $ 1.51 $ & $ 2.28 $ \\
         & MINDE-J & $ 0.07 $ & $ 0.61 $ & $ 1.22 $ & $ 1.70 $ & $ 1.46 $ & $ 1.17 $ & $ 1.21 $ & $ 1.26 $ & $ 1.36 $ & $ 1.71 $ & $ 1.94 $ \\
        \cmidrule{1-13}
        \multirow[l]{7}{*}{Correlated Student} & KSG & -- & $ 0.08 $ & $ 0.08 $ & $ 0.18 $ & $ 0.42 $ & $ 0.78 $ & $ 1.21 $ & $ 1.70 $ & $ 2.27 $ & $ 2.95 $ & $ 3.72 $ \\
         & WKL & -- & $ 0.03 $ & $ 0.03 $ & $ 0.07 $ & $ 0.26 $ & $ 0.49 $ & $ 0.66 $ & $ 1.19 $ & $ 1.77 $ & $ 2.43 $ & $ 3.19 $ \\
         & MINE-DV & -- & $ 0.12 $ & $ 0.18 $ & $ 0.24 $ & $ 0.34 $ & $ 0.60 $ & $ 1.12 $ & $ 1.90 $ & $ 2.75 $ & $ 3.64 $ & $ 4.63 $ \\
         & MINE-NWJ & -- & $ 0.13 $ & $ 0.19 $ & $ 0.28 $ & $ 0.47 $ & $ 0.85 $ & $ 1.43 $ & $ 2.26 $ & $ 3.12 $ & $ 4.10 $ & $ 5.02 $ \\
         & InfoNCE & -- & $ 0.11 $ & $ 0.12 $ & $ 0.16 $ & $ 0.24 $ & $ 0.42 $ & $ 0.75 $ & $ 1.29 $ & $ 2.04 $ & $ 2.94 $ & $ 3.90 $ \\
         & MINDE-C & -- & $ 0.13 $ & $ 0.18 $ & $ 0.31 $ & $ 0.51 $ & $ 0.49 $ & $ 0.64 $ & $ 1.03 $ & $ 1.55 $ & $ 2.07 $ & $ 2.78 $ \\
         & MINDE-J& -- & $ 0.67 $ & $ 1.26 $ & $ 1.75 $ & $ 1.58 $ & $ 1.42 $ & $ 1.19 $ & $ 1.33 $ & $ 1.55 $ & $ 1.83 $ & $ 2.26 $ \\
        \cmidrule{1-13}
        \multirow[l]{7}{*}{Correlated Uniform} & KSG & $ 0.00 $ & $ 0.03 $ & $ 0.11 $ & $ 0.29 $ & $ 0.60 $ & $ 0.99 $ & $ 1.45 $ & $ 1.95 $ & $ 2.53 $ & $ 3.21 $ & $ 3.99 $ \\
         & WKL & $ 0.00 $ & $ 0.05 $ & $ 0.07 $ & $ 0.19 $ & $ 0.44 $ & $ 0.79 $ & $ 1.22 $ & $ 1.67 $ & $ 2.19 $ & $ 2.80 $ & $ 3.51 $ \\
         & MINE-DV & $ 0.00 $ & $ 0.07 $ & $ 0.12 $ & $ 0.25 $ & $ 0.46 $ & $ 0.82 $ & $ 1.42 $ & $ 2.17 $ & $ 3.04 $ & $ 4.02 $ & $ 4.92 $ \\
         & MINE-NWJ & $ 0.00 $ & $ 0.07 $ & $ 0.14 $ & $ 0.28 $ & $ 0.53 $ & $ 0.96 $ & $ 1.58 $ & $ 2.43 $ & $ 3.30 $ & $ 4.24 $ & $ 5.22 $ \\
         & InfoNCE & $ 0.00 $ & $ 0.07 $ & $ 0.14 $ & $ 0.22 $ & $ 0.35 $ & $ 0.58 $ & $ 0.99 $ & $ 1.61 $ & $ 2.43 $ & $ 3.34 $ & $ 4.30 $ \\
         & MINDE-C & $ 0.06 $ & $ 0.16 $ & $ 0.33 $ & $ 0.50 $ & $ 0.68 $ & $ 0.72 $ & $ 0.68 $ & $ 0.75 $ & $ 0.81 $ & $ 1.40 $ & $ 2.14 $ \\
         & MINDE-J & $ 0.11 $ & $ 0.97 $ & $ 1.89 $ & $ 2.37 $ & $ 2.50 $ & $ 2.53 $ & $ 2.46 $ & $ 3.20 $ & $ 3.53 $ & $ 3.30 $ & $ 3.68 $ \\
        \cmidrule{1-13}
        \multirow[l]{7}{*}{Log-Gamma-Exp.} & KSG & $ 0.00 $ & $ 0.01 $ & $ 0.02 $ & $ 0.04 $ & $ 0.06 $ & $ 0.11 $ & $ 0.20 $ & $ 0.37 $ & $ 0.66 $ & $ 1.13 $ & $ 1.80 $ \\
         & WKL & $ 0.00 $ & $ 0.02 $ & $ 0.02 $ & $ 0.03 $ & $ 0.04 $ & $ 0.08 $ & $ 0.13 $ & $ 0.23 $ & $ 0.42 $ & $ 0.70 $ & $ 1.07 $ \\
         & MINE-DV & $ 0.00 $ & $ 0.02 $ & $ 0.07 $ & $ 0.11 $ & $ 0.24 $ & $ 0.50 $ & $ 0.92 $ & $ 1.75 $ & $ 2.54 $ & $ 3.39 $ & $ 4.35 $ \\
         & MINE-NWJ & $ 0.00 $ & $ 0.03 $ & $ 0.10 $ & $ 0.17 $ & $ 0.35 $ & $ 0.67 $ & $ 1.22 $ & $ 1.99 $ & $ 2.94 $ & $ 3.87 $ & $ 4.89 $ \\
         & InfoNCE & $ 0.00 $ & $ 0.03 $ & $ 0.05 $ & $ 0.09 $ & $ 0.18 $ & $ 0.37 $ & $ 0.70 $ & $ 1.27 $ & $ 2.04 $ & $ 2.97 $ & $ 3.94 $ \\
         & MINDE-C & $ 1.77 $ & $ 0.06 $ & $ 0.29 $ & $ 0.31 $ & $ 0.40 $ & $ 0.85 $ & $ 0.55 $ & $ 0.80 $ & $ 0.88 $ & $ 1.39 $ & $ 1.63 $ \\
         & MINDE-J & $ 0.16 $ & $ 0.81 $ & $ 1.63 $ & $ 1.93 $ & $ 1.79 $ & $ 1.44 $ & $ 0.83 $ & $ 1.76 $ & $ 1.73 $ & $ 1.63 $ & $ 2.75 $ \\
        \cmidrule{1-13}
        \multirow[l]{7}{*}{Smoothed Uniform} & KSG & $ 0.00 $ & $ 0.11 $ & $ 0.18 $ & $ 0.33 $ & $ 0.60 $ & $ 0.98 $ & $ 1.41 $ & $ 1.90 $ & $ 2.44 $ & $ 3.09 $ & $ 3.85 $ \\
         & WKL & $ 0.00 $ & $ 0.10 $ & $ 0.13 $ & $ 0.21 $ & $ 0.44 $ & $ 0.79 $ & $ 1.21 $ & $ 1.68 $ & $ 2.18 $ & $ 2.77 $ & $ 3.45 $ \\
         & MINE-DV & $ 0.00 $ & $ 0.07 $ & $ 0.08 $ & $ 0.12 $ & $ 0.19 $ & $ 0.38 $ & $ 0.68 $ & $ 1.29 $ & $ 2.07 $ & $ 2.99 $ & $ 3.89 $ \\
         & MINE-NWJ & $ 0.00 $ & $ 0.07 $ & $ 0.09 $ & $ 0.15 $ & $ 0.26 $ & $ 0.52 $ & $ 1.02 $ & $ 1.79 $ & $ 2.72 $ & $ 3.65 $ & $ 4.60 $ \\
         & InfoNCE & $ 0.00 $ & $ 0.07 $ & $ 0.07 $ & $ 0.09 $ & $ 0.14 $ & $ 0.26 $ & $ 0.56 $ & $ 1.16 $ & $ 1.97 $ & $ 2.93 $ & $ 3.92 $ \\
         & MINDE-C & $ 0.06 $ & $ 0.10 $ & $ 0.15 $ & $ 0.30 $ & $ 0.47 $ & $ 0.96 $ & $ 1.34 $ & $ 1.34 $ & $ 1.78 $ & $ 1.99 $ & $ 2.00 $ \\
         & MINDE-J & $ 0.11 $ & $ 0.53 $ & $ 0.90 $ & $ 1.28 $ & $ 1.43 $ & $ 1.64 $ & $ 1.49 $ & $ 1.53 $ & $ 1.62 $ & $ 1.43 $ & $ 1.79 $ \\
         \cmidrule{1-13}
         \multirow[l]{7}{*}{RareEventChannel} & KSG & -- & -- & -- & $ 0.63 $ & $ 1.16 $ & $ 1.84 $ & $ 2.65 $ & $ 3.54 $ & $ 4.49 $ & $ 5.45 $ & $ 6.43 $ \\
         & WKL & -- & -- & -- & $ 0.42 $ & $ 0.87 $ & $ 1.47 $ & $ 2.23 $ & $ 3.07 $ & $ 4.00 $ & $ 4.95 $ & $ 5.92 $ \\
         & MINE-DV & -- & -- & -- & $ 1.00 $ & $ 1.44 $ & $ 2.06 $ & $ 2.83 $ & $ 3.72 $ & $ 4.66 $ & $ 5.70 $ & $ 6.80 $ \\
         & MINE-NWJ & -- & -- & -- & $ 1.04 $ & $ 1.52 $ & $ 2.19 $ & $ 3.01 $ & $ 3.95 $ & $ 4.95 $ & $ 5.93 $ & $ 6.89 $ \\
         & InfoNCE & -- & -- & -- & $ 1.02 $ & $ 1.39 $ & $ 1.92 $ & $ 2.62 $ & $ 3.46 $ & $ 4.38 $ & $ 5.35 $ & $ 6.27 $ \\
         & MINDE-C & -- & -- & -- & $ 0.29 $ & $ 0.61 $ & $ 1.06 $ & $ 1.50 $ & $ 2.02 $ & $ 2.66 $ & $ 3.43 $ & $ 4.28 $ \\
         & MINDE-J & -- & -- & -- & $ 0.39 $ & $ 0.68 $ & $ 0.95 $ & $ 1.29 $ & $ 1.70 $ & $ 2.24 $ & $ 2.87 $ & $ 3.64 $ \\
        \bottomrule
    \end{tabular}
    \caption{Mutual information estimation error, varying ground truth MI (10k samples).}
    \label{tab:mi_estimate:base:distribution_method:GT:10k}
\end{table}

\section{Experimental details}
\subsection{Synthetic experiments}
For \textsc{minde}, we employ a \textsc{mlp} with skip connections as in \citet{franzese2024minde}. For the diffusion process, we use both importance sampling and \gls{EMA}. We report the main hyperparameters (includeing the \gls{SDE} type) in \Cref{tab:hyperparameters_minde_combined}. We use the same configuration for both \textsc{minde-c} and \textsc{minde-j}.

%\input{tables/exp_configs/minde_synthetic}
%\subsection{Synthetic image experiments}
\subsection{MNIST Image experiments}
For \textsc{minde}, we employ a 2D UNet backbone \citep{ronneberger2015unetconvolutionalnetworksbiomedical} from the \textit{diffusers} Python package \citep{von-platen-etal-2022-diffusers}. For the diffusion process, we use both importance sampling and \gls{EMA}. We report the main hyperparameters in \Cref{tab:hyperparameters_minde_combined}. We use the same configuration for both \textsc{minde-c} and \textsc{minde-j}.

\begin{table}[ht!]
    \centering
    \scriptsize
    \begin{subtable}[t]{0.49\textwidth}
        \centering
        %\begin{table}[htbp]
%\centering
%\scriptsize
\begin{tabular}{@{}lll@{}}
\toprule
\textbf{Category} & \textbf{Hyperparameter} & \textbf{Value} \\
\midrule
\textbf{Model} & Backbone & \texttt{MLP} \\
 & Hidden Dimensions & 32 \\
 & Total Parameters & 11 K \\
 & \gls{EMA} rate & 0.999 \\
\addlinespace
\textbf{SDE} & \gls{SDE} Type & \makecell[l]{Variance \\ Preserving} \\
 & $T$ & 1.0 \\
 & $\beta_{\min}$ & 0.1 \\
 & $\beta_{\max}$ & 20.0 \\
 & Importance Sampling & Yes \\
\addlinespace
\textbf{Training} & Optimizer & \texttt{Adam} \\
 & Learning Rate & 1e-2 \\
 & Train Batch Size & 512 \\
 & Training Steps & 10000 \\
\bottomrule
\end{tabular}
%\caption{Model and training hyperparameters for MINDE in synthetic experiments. We report key parameters for the estimator, SDE, and training process.}
%\label{tab:hyperparameters_minde_synthetic}
%\end{table}

    \end{subtable}
    \begin{subtable}[t]{0.49\textwidth}
        \centering
        %\begin{table}[htbp]
%\centering
%\scriptsize
\begin{tabular}{@{}lll@{}}
\toprule
\textbf{Category} & \textbf{Hyperparameter} & \textbf{Value} \\
\midrule
\textbf{Model} & Backbone & \texttt{UNet} \\
 & Hidden Dimensions & 64 \\
 & Layers per Block & 2 \\
 & Total Parameters & 14.4 M \\
 & \gls{EMA} rate & 0.9999 \\
\addlinespace
\textbf{SDE} & \gls{SDE} Type & \makecell[l]{Variance \\ Preserving} \\
 & $T$ & 1.0 \\
 & $\beta_{\min}$ & 0.1 \\
 & $\beta_{\max}$ & 20.0 \\
 & Importance Sampling & Yes \\
\addlinespace
\textbf{Training} & Optimizer & \texttt{Adam} \\
 & Learning Rate & 2e-4 \\
 & Train Batch Size & 512 \\
 & Training Steps & 500000 \\
\bottomrule
\end{tabular}
%\caption{Model and training hyperparameters for MINDE in image experiments. We report key parameters for the estimator, SDE, and training process (images).}
%\label{tab:hyperparameters_minde_mnist}
%\end{table}

    \end{subtable}
    \caption{Model and training hyperparameters for MINDE in synthetic experiments (left) and experiments with MNIST (right).
    We report key parameters for the estimator, SDE, and training process.}
\label{tab:hyperparameters_minde_combined}
\end{table}


\begin{thebibliography}{56}
\providecommand{\natexlab}[1]{#1}
\providecommand{\url}[1]{\texttt{#1}}
\expandafter\ifx\csname urlstyle\endcsname\relax
  \providecommand{\doi}[1]{doi: #1}\else
  \providecommand{\doi}{doi: \begingroup \urlstyle{rm}\Url}\fi

\bibitem[Arbel et~al.(2021)Arbel, Zhou, and Gretton]{arbel2021generalized_EBMs}
Michael Arbel, Liang Zhou, and Arthur Gretton.
\newblock Generalized energy based models.
\newblock In \emph{International Conference on Learning Representations}, 2021.
\newblock URL \url{https://openreview.net/forum?id=0PtUPB9z6qK}.

\bibitem[Arellano-Valle et~al.(2013)Arellano-Valle, Contreras-Reyes, and
  Genton]{arellano_valle2013MI_for_skew_distributions}
R.~Arellano-Valle, Javier Contreras-Reyes, and Marc Genton.
\newblock Shannon entropy and mutual information for multivariate
  skew-elliptical distributions.
\newblock \emph{Scandinavian Journal of Statistics}, 40:\penalty0 42--62, 03
  2013.
\newblock \doi{10.1111/j.1467-9469.2011.}

\bibitem[Belghazi et~al.(2018)Belghazi, Baratin, Rajeshwar, Ozair, Bengio,
  Courville, and Hjelm]{belghazi2018mine}
Mohamed~Ishmael Belghazi, Aristide Baratin, Sai Rajeshwar, Sherjil Ozair,
  Yoshua Bengio, Aaron Courville, and Devon Hjelm.
\newblock Mutual information neural estimation.
\newblock In Jennifer Dy and Andreas Krause (eds.), \emph{Proceedings of the
  35th International Conference on Machine Learning}, volume~80 of
  \emph{Proceedings of Machine Learning Research}, pp.\  531--540. PMLR, 07
  2018.
\newblock URL \url{https://proceedings.mlr.press/v80/belghazi18a.html}.

\bibitem[Berrett et~al.(2019)Berrett, Samworth, and
  Yuan]{berrett2019efficient_knn_entropy_estimation}
Thomas~B. Berrett, Richard~J. Samworth, and Ming Yuan.
\newblock Efficient multivariate entropy estimation via $k$-nearest neighbour
  distances.
\newblock \emph{Ann. Statist.}, 47\penalty0 (1):\penalty0 288--318, 02 2019.
\newblock \doi{10.1214/18-AOS1688}.
\newblock URL \url{https://doi.org/10.1214/18-AOS1688}.

\bibitem[Butakov et~al.(2024{\natexlab{a}})Butakov, Tolmachev, Malanchuk,
  Neopryatnaya, and Frolov]{butakov2024normflows}
Ivan Butakov, Alexander Tolmachev, Sofia Malanchuk, Anna Neopryatnaya, and
  Alexey Frolov.
\newblock Mutual information estimation via normalizing flows.
\newblock In \emph{The Thirty-eighth Annual Conference on Neural Information
  Processing Systems}, 2024{\natexlab{a}}.
\newblock URL \url{https://openreview.net/forum?id=JiQXsLvDls}.

\bibitem[Butakov et~al.(2024{\natexlab{b}})Butakov, Tolmachev, Malanchuk,
  Neopryatnaya, Frolov, and Andreev]{butakov2024lossy_compression}
Ivan Butakov, Alexander Tolmachev, Sofia Malanchuk, Anna Neopryatnaya, Alexey
  Frolov, and Kirill Andreev.
\newblock Information bottleneck analysis of deep neural networks via lossy
  compression.
\newblock In \emph{The Twelfth International Conference on Learning
  Representations}, 2024{\natexlab{b}}.
\newblock URL \url{https://openreview.net/forum?id=huGECz8dPp}.

\bibitem[Cai et~al.(2015)Cai, Liang, and Zhou]{cai2015logdet}
T.~Tony Cai, Tengyuan Liang, and Harrison~H. Zhou.
\newblock Law of log determinant of sample covariance matrix and optimal
  estimation of differential entropy for high-dimensional gaussian
  distributions.
\newblock \emph{Journal of Multivariate Analysis}, 137:\penalty0 161--172,
  2015.
\newblock ISSN 0047-259X.
\newblock \doi{https://doi.org/10.1016/j.jmva.2015.02.003}.
\newblock URL
  \url{https://www.sciencedirect.com/science/article/pii/S0047259X1500038X}.

\bibitem[Chen et~al.(2018)Chen, Rubanova, Bettencourt, and
  Duvenaud]{chen2018neural}
Ricky~TQ Chen, Yulia Rubanova, Jesse Bettencourt, and David~K Duvenaud.
\newblock Neural ordinary differential equations.
\newblock \emph{Advances in neural information processing systems}, 31, 2018.

\bibitem[Chen et~al.(2016)Chen, Duan, Houthooft, Schulman, Sutskever, and
  Abbeel]{chen2016InfoGAN}
Xi~Chen, Yan Duan, Rein Houthooft, John Schulman, Ilya Sutskever, and Pieter
  Abbeel.
\newblock Infogan: Interpretable representation learning by information
  maximizing generative adversarial nets.
\newblock In D.~Lee, M.~Sugiyama, U.~Luxburg, I.~Guyon, and R.~Garnett (eds.),
  \emph{Advances in Neural Information Processing Systems}, volume~29. Curran
  Associates, Inc., 2016.
\newblock URL
  \url{https://proceedings.neurips.cc/paper_files/paper/2016/file/7c9d0b1f96aebd7b5eca8c3edaa19ebb-Paper.pdf}.

\bibitem[Chen et~al.(2025)Chen, Ou, Weller, and Gutmann]{chen2025vector_copula}
Yanzhi Chen, Zijing Ou, Adrian Weller, and Michael Gutmann.
\newblock Neural mutual information estimation with vector copulas.
\newblock In \emph{The Thirty-ningth Annual Conference on Neural Information
  Processing Systems}, 2025.

\bibitem[Choi et~al.(2022)Choi, Meng, Song, and Ermon]{choi2022DRE-infinity}
Kristy Choi, Chenlin Meng, Yang Song, and Stefano Ermon.
\newblock Density ratio estimation via infinitesimal classification.
\newblock In Gustau Camps-Valls, Francisco J.~R. Ruiz, and Isabel Valera
  (eds.), \emph{Proceedings of The 25th International Conference on Artificial
  Intelligence and Statistics}, volume 151 of \emph{Proceedings of Machine
  Learning Research}, pp.\  2552--2573. PMLR, 28--30 Mar 2022.
\newblock URL \url{https://proceedings.mlr.press/v151/choi22a.html}.

\bibitem[Cover \& Thomas(2006)Cover and Thomas]{cover2006information_theory}
Thomas~M. Cover and Joy~A. Thomas.
\newblock \emph{Elements of Information Theory (Wiley Series in
  Telecommunications and Signal Processing)}.
\newblock Wiley-Interscience, USA, 2006.

\bibitem[Czy{\.z} et~al.(2023)Czy{\.z}, Grabowski, Vogt, Beerenwinkel, and
  Marx]{czyz2023beyond_normal}
Pawe{\l} Czy{\.z}, Frederic Grabowski, Julia~E Vogt, Niko Beerenwinkel, and
  Alexander Marx.
\newblock Beyond normal: On the evaluation of mutual information estimators.
\newblock In \emph{Thirty-seventh Conference on Neural Information Processing
  Systems}, 2023.
\newblock URL \url{https://openreview.net/forum?id=25vRtG56YH}.

\bibitem[Czy{\.z} et~al.(2025)Czy{\.z}, Grabowski, Vogt, Beerenwinkel, and
  Marx]{czyz2025PMI}
Pawe{\l} Czy{\.z}, Frederic Grabowski, Julia~E Vogt, Niko Beerenwinkel, and
  Alexander Marx.
\newblock On the properties and estimation of pointwise mutual information
  profiles.
\newblock \emph{Transactions on Machine Learning Research}, 2025.
\newblock ISSN 2835-8856.
\newblock URL \url{https://openreview.net/forum?id=LdflD41Gn8}.

\bibitem[Dahlke \& Pacheco(2025)Dahlke and Pacheco]{dahlke2025flowbased}
Caleb Dahlke and Jason Pacheco.
\newblock Flow-based variational mutual information: Fast and flexible
  approximations.
\newblock In \emph{The Thirteenth International Conference on Learning
  Representations}, 2025.
\newblock URL \url{https://openreview.net/forum?id=spDUv05cEq}.

\bibitem[Duong \& Nguyen(2023{\natexlab{a}})Duong and Nguyen]{duong2023dine}
Bao Duong and Thin Nguyen.
\newblock Diffeomorphic information neural estimation.
\newblock \emph{Proceedings of the AAAI Conference on Artificial Intelligence},
  37\penalty0 (6):\penalty0 7468--7475, Jun. 2023{\natexlab{a}}.
\newblock \doi{10.1609/aaai.v37i6.25908}.
\newblock URL \url{https://ojs.aaai.org/index.php/AAAI/article/view/25908}.

\bibitem[Duong \& Nguyen(2023{\natexlab{b}})Duong and
  Nguyen]{duong2023normflows_for_conditional_independence_testing}
Bao Duong and Thin Nguyen.
\newblock Normalizing flows for conditional independence testing.
\newblock \emph{Knowledge and Information Systems}, 66, 08 2023{\natexlab{b}}.
\newblock \doi{10.1007/s10115-023-01964-w}.

\bibitem[Fan \& Henry(2021)Fan and Henry]{fan2021vector_copulas}
Yanqin Fan and Marc Henry.
\newblock Vector copulas, 2021.
\newblock URL \url{https://arxiv.org/abs/2009.06558}.

\bibitem[Franzese et~al.(2024)Franzese, BOUNOUA, and
  Michiardi]{franzese2024minde}
Giulio Franzese, Mustapha BOUNOUA, and Pietro Michiardi.
\newblock {MINDE}: Mutual information neural diffusion estimation.
\newblock In \emph{The Twelfth International Conference on Learning
  Representations}, 2024.
\newblock URL \url{https://openreview.net/forum?id=0kWd8SJq8d}.

\bibitem[Franzese et~al.(2025)Franzese, Martini, Corallo, Papotti, and
  Michiardi]{franzese2025latent_abstractions}
Giulio Franzese, Mattia Martini, Giulio Corallo, Paolo Papotti, and Pietro
  Michiardi.
\newblock Latent abstractions in generative diffusion models.
\newblock \emph{Entropy}, 27\penalty0 (4), 2025.
\newblock ISSN 1099-4300.
\newblock \doi{10.3390/e27040371}.
\newblock URL \url{https://www.mdpi.com/1099-4300/27/4/371}.

\bibitem[Gao et~al.(2017)Gao, Kannan, Oh, and Viswanath]{gao2017discrete_KSG}
Weihao Gao, Sreeram Kannan, Sewoong Oh, and Pramod Viswanath.
\newblock Estimating mutual information for discrete-continuous mixtures.
\newblock In I.~Guyon, U.~Von Luxburg, S.~Bengio, H.~Wallach, R.~Fergus,
  S.~Vishwanathan, and R.~Garnett (eds.), \emph{Advances in Neural Information
  Processing Systems}, volume~30. Curran Associates, Inc., 2017.
\newblock URL
  \url{https://proceedings.neurips.cc/paper_files/paper/2017/file/ef72d53990bc4805684c9b61fa64a102-Paper.pdf}.

\bibitem[Geenens(2020)]{geenens2020discrete_copula}
Gery Geenens.
\newblock Copula modeling for discrete random vectors.
\newblock \emph{Dependence Modeling}, 8\penalty0 (1):\penalty0 417--440, 2020.
\newblock \doi{doi:10.1515/demo-2020-0022}.
\newblock URL \url{https://doi.org/10.1515/demo-2020-0022}.

\bibitem[{Goldfeld} et~al.(2020){Goldfeld}, {Greenewald}, {Niles-Weed}, and
  {Polyanskiy}]{goldfeld2020convergence_of_SEM_entropy_estimation}
Z.~{Goldfeld}, K.~{Greenewald}, J.~{Niles-Weed}, and Y.~{Polyanskiy}.
\newblock Convergence of smoothed empirical measures with applications to
  entropy estimation.
\newblock \emph{IEEE Transactions on Information Theory}, 66\penalty0
  (7):\penalty0 4368--4391, 2020.
\newblock \doi{10.1109/TIT.2020.2975480}.

\bibitem[Goldfeld et~al.(2018)Goldfeld, Berg, Greenewald, Melnyk, Nguyen,
  Kingsbury, and Polyanskiy]{goldfeld2018estimating}
Ziv Goldfeld, Ewout van~den Berg, Kristjan Greenewald, Igor Melnyk, Nam Nguyen,
  Brian Kingsbury, and Yury Polyanskiy.
\newblock Estimating information flow in deep neural networks.
\newblock \emph{arXiv preprint arXiv:1810.05728}, 2018.

\bibitem[Gowri et~al.(2024)Gowri, Lun, Klein, and Yin]{gowri2024LMI}
Gokul Gowri, Xiao-Kang Lun, Allon~M. Klein, and Peng Yin.
\newblock Approximating mutual information of high-dimensional variables using
  learned representations.
\newblock In A.~Globerson, L.~Mackey, D.~Belgrave, A.~Fan, U.~Paquet,
  J.~Tomczak, and C.~Zhang (eds.), \emph{Advances in Neural Information
  Processing Systems}, volume~37, pp.\  132843--132875. Curran Associates,
  Inc., 2024.
\newblock URL
  \url{https://proceedings.neurips.cc/paper_files/paper/2024/file/efa9e3ed9c411aeb53d7bfe38a19d884-Paper-Conference.pdf}.

\bibitem[Hjelm et~al.(2019)Hjelm, Fedorov, Lavoie-Marchildon, Grewal, Bachman,
  Trischler, and Bengio]{hjelm2018deep_infomax}
R~Devon Hjelm, Alex Fedorov, Samuel Lavoie-Marchildon, Karan Grewal, Phil
  Bachman, Adam Trischler, and Yoshua Bengio.
\newblock Learning deep representations by mutual information estimation and
  maximization.
\newblock In \emph{International Conference on Learning Representations}, 2019.
\newblock URL \url{https://openreview.net/forum?id=Bklr3j0cKX}.

\bibitem[Huang et~al.(2024)Huang, Zhou, Shi, and
  Gong]{huang2024MI_feature_selection_TS}
Lin Huang, Xingqiang Zhou, Lianhui Shi, and Li~Gong.
\newblock Time series feature selection method based on mutual information.
\newblock \emph{Applied Sciences}, 14\penalty0 (5), 2024.
\newblock ISSN 2076-3417.
\newblock \doi{10.3390/app14051960}.
\newblock URL \url{https://www.mdpi.com/2076-3417/14/5/1960}.

\bibitem[Kholkin et~al.(2025)Kholkin, Butakov, Burnaev, Gushchin, and
  Korotin]{kholkin2025InfoBridge}
Sergei Kholkin, Ivan Butakov, Evgeny Burnaev, Nikita Gushchin, and Alexander
  Korotin.
\newblock Infobridge: Mutual information estimation via bridge matching, 2025.
\newblock URL \url{https://arxiv.org/abs/2502.01383}.

\bibitem[Kingma \& Welling(2013)Kingma and Welling]{kingma2013VAE}
Diederik~P Kingma and Max Welling.
\newblock Auto-encoding variational bayes, 2013.
\newblock URL \url{https://arxiv.org/abs/1312.6114}.

\bibitem[Kozachenko \& Leonenko(1987)Kozachenko and
  Leonenko]{kozachenko1987entropy_of_random_vector}
L.~F. Kozachenko and N.~N. Leonenko.
\newblock Sample estimate of the entropy of a random vector.
\newblock \emph{Problems Inform. Transmission}, 23:\penalty0 95--101, 1987.

\bibitem[Kraskov et~al.(2004)Kraskov, St\"ogbauer, and
  Grassberger]{kraskov2004KSG}
Alexander Kraskov, Harald St\"ogbauer, and Peter Grassberger.
\newblock Estimating mutual information.
\newblock \emph{Phys. Rev. E}, 69:\penalty0 066138, Jun 2004.
\newblock \doi{10.1103/PhysRevE.69.066138}.
\newblock URL \url{https://link.aps.org/doi/10.1103/PhysRevE.69.066138}.

\bibitem[Lee \& Rhee(2024)Lee and Rhee]{lee2024benchmark}
Kyungeun Lee and Wonjong Rhee.
\newblock A benchmark suite for evaluating neural mutual information estimators
  on unstructured datasets.
\newblock \emph{arXiv preprint arXiv:2410.10924}, 2024.

\bibitem[Letizia et~al.(2024)Letizia, Novello, and Tonello]{letizia2024fDIME}
Nunzio~Alexandro Letizia, Nicola Novello, and Andrea~M Tonello.
\newblock Mutual information estimation via \$f\$-divergence and data
  derangements.
\newblock In \emph{The Thirty-eighth Annual Conference on Neural Information
  Processing Systems}, 2024.
\newblock URL \url{https://openreview.net/forum?id=PThi9hf9UT}.

\bibitem[Lipman et~al.(2022)Lipman, Chen, Ben-Hamu, Nickel, and
  Le]{lipman2022flow}
Yaron Lipman, Ricky~TQ Chen, Heli Ben-Hamu, Maximilian Nickel, and Matt Le.
\newblock Flow matching for generative modeling.
\newblock \emph{arXiv preprint arXiv:2210.02747}, 2022.

\bibitem[Ma \& Sun(2011)Ma and Sun]{ma2011MI_is_copula_entropy}
Jian Ma and Zengqi Sun.
\newblock Mutual information is copula entropy.
\newblock \emph{Tsinghua Science \& Technology}, 16\penalty0 (1):\penalty0
  51--54, 2011.
\newblock ISSN 1007-0214.
\newblock \doi{https://doi.org/10.1016/S1007-0214(11)70008-6}.
\newblock URL
  \url{https://www.sciencedirect.com/science/article/pii/S1007021411700086}.

\bibitem[McAllester \& Stratos(2020)McAllester and
  Stratos]{mcallester2020limitations_MI}
David McAllester and Karl Stratos.
\newblock Formal limitations on the measurement of mutual information.
\newblock In Silvia Chiappa and Roberto Calandra (eds.), \emph{Proceedings of
  the Twenty Third International Conference on Artificial Intelligence and
  Statistics}, volume 108 of \emph{Proceedings of Machine Learning Research},
  pp.\  875--884. PMLR, 08 2020.
\newblock URL \url{https://proceedings.mlr.press/v108/mcallester20a.html}.

\bibitem[McCann(1995)]{mccann1995measure_preserving_maps}
Robert McCann.
\newblock Existence and uniqueness of monotone measure-preserving maps.
\newblock \emph{Duke Mathematical Journal}, 80, 11 1995.
\newblock \doi{10.1215/S0012-7094-95-08013-2}.

\bibitem[Moddemeijer(1989)]{moddemeijer1989binning_MI}
R.~Moddemeijer.
\newblock On estimation of entropy and mutual information of continuous
  distributions.
\newblock \emph{Signal Processing}, 16\penalty0 (3):\penalty0 233--248, 1989.
\newblock ISSN 0165-1684.
\newblock \doi{https://doi.org/10.1016/0165-1684(89)90132-1}.
\newblock URL
  \url{https://www.sciencedirect.com/science/article/pii/0165168489901321}.

\bibitem[Moon et~al.(1995)Moon, Rajagopalan, and Lall]{moon1995KDE_MI}
Young-Il Moon, Balaji Rajagopalan, and Upmanu Lall.
\newblock Estimation of mutual information using kernel density estimators.
\newblock \emph{Phys. Rev. E}, 52:\penalty0 2318--2321, Sep 1995.
\newblock \doi{10.1103/PhysRevE.52.2318}.
\newblock URL \url{https://link.aps.org/doi/10.1103/PhysRevE.52.2318}.

\bibitem[Nguyen et~al.(2010)Nguyen, Wainwright, and Jordan]{nguyen2010NWJ}
XuanLong Nguyen, Martin~J Wainwright, and Michael~I Jordan.
\newblock Estimating divergence functionals and the likelihood ratio by convex
  risk minimization.
\newblock \emph{IEEE Transactions on Information Theory}, 56\penalty0
  (11):\penalty0 5847--5861, 2010.

\bibitem[Ni \& Lotz(2025)Ni and Lotz]{ni2025neural_DoE}
Haoran Ni and Martin Lotz.
\newblock A neural difference-of-entropies estimator for mutual information,
  2025.
\newblock URL \url{https://arxiv.org/abs/2502.13085}.

\bibitem[Pedregosa et~al.(2011)Pedregosa, Varoquaux, Gramfort, Michel, Thirion,
  Grisel, Blondel, Prettenhofer, Weiss, Dubourg, Vanderplas, Passos,
  Cournapeau, Brucher, Perrot, and Duchesnay]{scikit-learn}
F.~Pedregosa, G.~Varoquaux, A.~Gramfort, V.~Michel, B.~Thirion, O.~Grisel,
  M.~Blondel, P.~Prettenhofer, R.~Weiss, V.~Dubourg, J.~Vanderplas, A.~Passos,
  D.~Cournapeau, M.~Brucher, M.~Perrot, and E.~Duchesnay.
\newblock Scikit-learn: Machine learning in {P}ython.
\newblock \emph{Journal of Machine Learning Research}, 12:\penalty0 2825--2830,
  2011.

\bibitem[Polyanskiy \& Wu(2024)Polyanskiy and
  Wu]{polyanskiy2024information_theory}
Y.~Polyanskiy and Y.~Wu.
\newblock \emph{Information Theory: From Coding to Learning}.
\newblock Cambridge University Press, 2024.
\newblock ISBN 9781108832908.
\newblock URL \url{https://books.google.ru/books?id=CySo0AEACAAJ}.

\bibitem[Poole et~al.(2019)Poole, Ozair, Van Den~Oord, Alemi, and
  Tucker]{poole2019variational_bounds}
Ben Poole, Sherjil Ozair, Aaron Van Den~Oord, Alex Alemi, and George Tucker.
\newblock On variational bounds of mutual information.
\newblock In Kamalika Chaudhuri and Ruslan Salakhutdinov (eds.),
  \emph{Proceedings of the 36th International Conference on Machine Learning},
  volume~97 of \emph{Proceedings of Machine Learning Research}, pp.\
  5171--5180. PMLR, 09--15 Jun 2019.
\newblock URL \url{https://proceedings.mlr.press/v97/poole19a.html}.

\bibitem[Rhodes et~al.(2020)Rhodes, Xu, and Gutmann]{rhodes2020TRE}
Benjamin Rhodes, Kai Xu, and Michael~U. Gutmann.
\newblock Telescoping density-ratio estimation.
\newblock In \emph{The Thirty-fourth Annual Conference on Neural Information
  Processing Systems}, 2020.
\newblock URL \url{https://arxiv.org/abs/2006.12204}.

\bibitem[Ronneberger et~al.(2015)Ronneberger, Fischer, and
  Brox]{ronneberger2015unetconvolutionalnetworksbiomedical}
Olaf Ronneberger, Philipp Fischer, and Thomas Brox.
\newblock U-net: Convolutional networks for biomedical image segmentation,
  2015.
\newblock URL \url{https://arxiv.org/abs/1505.04597}.

\bibitem[Runge(2018)]{runge2018conditional_independence_testing_MI}
Jakob Runge.
\newblock Conditional independence testing based on a nearest-neighbor
  estimator of conditional mutual information.
\newblock In Amos Storkey and Fernando Perez-Cruz (eds.), \emph{Proceedings of
  the Twenty-First International Conference on Artificial Intelligence and
  Statistics}, volume~84 of \emph{Proceedings of Machine Learning Research},
  pp.\  938--947. PMLR, 09--11 Apr 2018.
\newblock URL \url{https://proceedings.mlr.press/v84/runge18a.html}.

\bibitem[Song \& Ermon(2020)Song and Ermon]{song2020understanding_limitations}
Jiaming Song and Stefano Ermon.
\newblock Understanding the limitations of variational mutual information
  estimators.
\newblock In \emph{International Conference on Learning Representations}, 2020.
\newblock URL \url{https://openreview.net/forum?id=B1x62TNtDS}.

\bibitem[Song et~al.(2020)Song, Sohl-Dickstein, Kingma, Kumar, Ermon, and
  Poole]{song2020score}
Yang Song, Jascha Sohl-Dickstein, Diederik~P Kingma, Abhishek Kumar, Stefano
  Ermon, and Ben Poole.
\newblock Score-based generative modeling through stochastic differential
  equations.
\newblock \emph{arXiv preprint arXiv:2011.13456}, 2020.

\bibitem[Sulaiman \& Labadin(2015)Sulaiman and
  Labadin]{sulaiman2015MI_feature_selection}
Muhammad~Aliyu Sulaiman and Jane Labadin.
\newblock Feature selection based on mutual information.
\newblock In \emph{2015 9th International Conference on IT in Asia (CITA)},
  pp.\  1--6, 2015.
\newblock \doi{10.1109/CITA.2015.7349827}.

\bibitem[Tabak \& Turner(2013)Tabak and
  Turner]{tabak2013nonparametric_PDF_estimators}
E.~G. Tabak and Cristina~V. Turner.
\newblock A family of nonparametric density estimation algorithms.
\newblock \emph{Communications on Pure and Applied Mathematics}, 66\penalty0
  (2):\penalty0 145--164, 2013.
\newblock \doi{https://doi.org/10.1002/cpa.21423}.
\newblock URL \url{https://onlinelibrary.wiley.com/doi/abs/10.1002/cpa.21423}.

\bibitem[Tschannen et~al.(2020)Tschannen, Djolonga, Rubenstein, Gelly, and
  Lucic]{tschannen2020on_DIM}
Michael Tschannen, Josip Djolonga, Paul~K. Rubenstein, Sylvain Gelly, and Mario
  Lucic.
\newblock On mutual information maximization for representation learning.
\newblock In \emph{International Conference on Learning Representations}, 2020.
\newblock URL \url{https://openreview.net/forum?id=rkxoh24FPH}.

\bibitem[van~den Oord et~al.(2019)van~den Oord, Li, and
  Vinyals]{oord2019infoNCE}
Aaron van~den Oord, Yazhe Li, and Oriol Vinyals.
\newblock Representation learning with contrastive predictive coding, 2019.
\newblock URL \url{https://arxiv.org/abs/1807.03748}.

\bibitem[von Platen et~al.(2022)von Platen, Patil, Lozhkov, Cuenca, Lambert,
  Rasul, Davaadorj, Nair, Paul, Berman, Xu, Liu, and
  Wolf]{von-platen-etal-2022-diffusers}
Patrick von Platen, Suraj Patil, Anton Lozhkov, Pedro Cuenca, Nathan Lambert,
  Kashif Rasul, Mishig Davaadorj, Dhruv Nair, Sayak Paul, William Berman, Yiyi
  Xu, Steven Liu, and Thomas Wolf.
\newblock Diffusers: State-of-the-art diffusion models.
\newblock \url{https://github.com/huggingface/diffusers}, 2022.

\bibitem[Wang et~al.(2025{\natexlab{a}})Wang, Franzese, Finamore, Gallo, and
  Michiardi]{wang2025PMI_alignment}
Chao Wang, Giulio Franzese, Alessandro Finamore, Massimo Gallo, and Pietro
  Michiardi.
\newblock Information theoretic text-to-image alignment.
\newblock In \emph{The Thirteenth International Conference on Learning
  Representations}, 2025{\natexlab{a}}.
\newblock URL \url{https://openreview.net/forum?id=Ugs2W5XFFo}.

\bibitem[Wang et~al.(2025{\natexlab{b}})Wang, Franzese, Finamore, and
  Michiardi]{wang2025RFMI}
Chao Wang, Giulio Franzese, Alessandro Finamore, and Pietro Michiardi.
\newblock {RFMI}: Estimating mutual information on rectified flow for
  text-to-image alignment.
\newblock In \emph{ICLR 2025 Workshop on Deep Generative Model in Machine
  Learning: Theory, Principle and Efficacy}, 2025{\natexlab{b}}.
\newblock URL \url{https://openreview.net/forum?id=8CogNH4vBp}.

\end{thebibliography}
\end{document}